\let\icml@origaddcontentsline\addcontentsline
\theoremstyle{plain}
\newtheorem{theorem}{Theorem}[section]
\newtheorem{proposition}[theorem]{Proposition}
\newtheorem{lemma}[theorem]{Lemma}
\newtheorem{corollary}[theorem]{Corollary}
\theoremstyle{definition}
\newtheorem{assumption}[theorem]{Assumption}
\theoremstyle{remark}
\newtheorem{remark}[theorem]{Remark}
\icmltitlerunning{}
\begin{document}

\twocolumn[
  \icmltitle{Nearly Optimal Bayesian Inference for Structural Missingness}



  \icmlsetsymbol{equal}{*}
  
  \begin{icmlauthorlist}
    \icmlauthor{Chen Liang}{hit}
    \icmlauthor{Donghua Yang}{hit}
    \icmlauthor{Yutong Zhao}{hit}
    \icmlauthor{Tianle Zhang}{hit}
    \icmlauthor{Shenghang Zhou}{hit}
    \icmlauthor{Zhiyu Liang}{hit}
    \icmlauthor{Hengtong Zhang}{hit}
    \icmlauthor{Hongzhi Wang}{hit}
    \icmlauthor{Ziqi Li}{hit}
    \icmlauthor{Xiyang Zhang}{hit}
    \icmlauthor{Zheng Liang}{hit}
    \icmlauthor{Yifei Li}{hit}
  \end{icmlauthorlist}

  \icmlaffiliation{hit}{Harbin Institute of Technology, Harbin, China}

  \icmlcorrespondingauthor{Hongzhi Wang}{wangzh@hit.edu.cn}

  \icmlcorrespondingauthor{Donghua Yang}{yang.dh@hit.edu.cn}

  \icmlkeywords{General Machine Learning, Causality}

  \vskip 0.3in
]



\printAffiliationsAndNotice{}  
\begin{abstract}
  Structural missingness breaks 'just impute and train': values can be undefined by causal or logical constraints, and the mask may depend on observed variables, unobserved variables (MNAR), and other missingness indicators. It simultaneously brings (i) a \emph{catch-22 situation with causal loop}, prediction needs the missing features, yet inferring them depends on the missingness mechanism, (ii) under MNAR, \emph{the unseen are different}, the missing part can come from a shifted distribution, and (iii) plug-in imputation, a single fill-in can lock in uncertainty and yield overconfident, biased decisions. In the Bayesian view, prediction via the posterior predictive distribution integrates over the full model posterior uncertainty, rather than relying on a single point estimate. This framework decouples (i) learning an \emph{in-model} missing-value posterior from (ii) label prediction by optimizing the predictive posterior distribution, enabling posterior integration. This decoupling yields an in-model \emph{almost-free-lunch}: once the posterior is learned, prediction is plug-and-play while preserving uncertainty propagation. It achieves SOTA on 43 classification with missing data and 15 imputation benchmarks, with finite-sample near Bayes-optimality guarantees under our SCM prior.
\end{abstract}

\section{Introduction}
\label{submission}

\textbf{Structural Missingness.} In many real-world applications, data is often incomplete due to various reasons such as sensor failures, terms of data collection, or privacy concerns. Rubin's theory of missing data~\cite{rubin1976inference} proposed a framework for handling missing data, where the missing data is assumed to be (1) missing completely at random (MCAR), where the missingness is independent of the observed and unobserved variables, (2) missing at random (MAR), where the missingness is dependent on the observed variables, or (3) missing not at random (MNAR), where the missingness is dependent on the unobserved variables.

Since MCAR/MAR/MNAR classify missingness by whether $\mathbf{M}$ depends on observed/unobserved entries of $X$, we focus on \emph{structural missingness}: some values are \emph{logically undefined} and masks can exhibit within-mask dependencies (edges $M\!\to\!M$)~\cite{mitra2023learning}. In our scope, we model this via a second-order SCM with randomized missingness mechanisms but a fixed Markov structure, in Sec.~\ref{sec:posterior_distribution}.

Though previous missingness types provides a framework for understanding missing data, reaching a joint solution for fitting the properties of the above missingness types without bias has many challenges as follows:

\textbf{Challenge 1: Alleviating MNAR bias is still an open problem.} Bias may be caused \emph{only under MNAR} by adopting the assumption that observed and unobserved variables have identical distributions, and this issue does not arise under MCAR/MAR~\cite{muzellec2020missing}. Under MNAR, the missingness mechanism depends on unobserved values, so the conditional distribution of missing features given observations can be shifted relative to the observed part and is not identifiable without structural assumptions. We make these assumptions explicit via our second-order SCM prior (Sec.~\ref{sec:posterior_distribution}, in Remark~\ref{rem:mnar-ident})~\cite{muzellec2020missing, ot4sl, zhang2025diffputer}. We summarize this as: $P(x\mid M=1)$, values that end up missing, can differ from $P(x\mid M=0)$, values that are observed. Consequently, standard practice, i.e., training and predicting as if missingness were MAR/ignorable, or filling in a single point estimate (often implicitly treating missing values as drawn from $P(x\mid M=0)$), can induce systematic prediction bias, which is named \textbf{MNAR bias}.

\textbf{Challenge 2: Obstacles in Explicit Modeling for Prior Distribution.} Even with powerful function approximators, learning from partial observations still requires introducing prior knowledge. This challenge has two parts:

(1) \textit{Explicit prior modeling is hard}: the prior must be both flexible and tractable. Structural causal models can encode Markovian properties~\cite{pearl2009causality, spirtes1991algorithm,spirtes2000causation}, but they often rely on independence testing assumptions and scale poorly, so expert knowledge is typically required in highly structured settings~\cite{pearl2009causality}.

(2) \textit{Uncertainty over prior knowledge is harder}: the bottleneck is \emph{specifying} the right prior, which becomes a near 'chicken-and-egg' problem when manually encoding what is known vs.\ unknown~\cite{ot4sl}.

We show that, despite this need for manual prior design in highly structured settings~\cite{pearl2009causality}, a PFN-style ``causal ladder'' view~\cite{hollmann2022tabpfn} motivates a level-1.5 \emph{in-model} ``almost-free-lunch'': not unconstrained causal discovery, but amortized identification of missingness/causal dependencies for prediction under a second-order SCM prior~\cite{balazadeh2025causalpfn, robertson2025dopfn}.

\textbf{Challenge 3: Dealing with Uncertainty for Structural Missingness.} Traditional approaches often rely on \emph{deterministic} imputation to fill in missing values. However, \emph{even if} one had a reasonable prior/mechanism in mind, plug-in imputation does not fully leverage the prior/posterior support. Instead of integrating over significant possible missing values in the downstream task, it commits to a single fill-in and loses uncertainty propagation.

In conclusion, under structural missingness it is difficult to specify or integrate prior knowledge that faithfully captures the dependencies between missing and observed/unobserved variables across arbitrary tasks, which is named \textbf{plug-in imputation bias}, leading to suboptimal performance. The question leaves:

\begin{quote}
    \emph{How can we address the three challenges in practice in a single, two-birds-with-one-seed unified efficient framework?}
\end{quote}

\textbf{Our Approach.} \emph{Bayesian inference is both necessary and sufficient} for the above challenges: to be correct under MNAR and avoid plug-in imputation bias, a predictor must reason about the \emph{uncertainty} of unobserved variables, rather than committing to a single imputed input, after which optimal prediction follows from standard Bayesian principles as demonstrated in Sec.~\ref{sec:bayesian_inference}. We make this view practical by constructing a prior distribution over the missingness model and causal mechanisms in Sec.~\ref{sec:posterior_distribution}, and learning an implicit posterior-predictive map from incomplete inputs, with an optional missing-value posterior for sampling, in Sec.~\ref{sec:fitting_ppd_pd}.

We formalize the task and derive the Bayesian predictive formulation in Sec.~\ref{sec:problem_definition} and Sec.~\ref{sec:bayesian_inference}, and discuss near Bayes-optimality via a finite-sample excess-risk bound in Sec.~\ref{sec:analysis_in_context}. Finally, we validate the method in two real-world experimental suites for classification and imputation in Sec.~\ref{sec:experiments}. For conceptual background, we provide a comprehensive related-work discussion in Sec.~\ref{sec:related_work}.

\textbf{Contributions.} 

\begin{enumerate}
    \item We formulate classification with structural missingness as posterior predictive inference conditioned on the observed subspace and the mask, unifying MCAR/MAR/MNAR (\textbf{addresses Challenge 2}).
    \item We propose posterior integration for prediction via a PFN (nearly optimal under the prior, \textbf{addresses Challenge 3}), with an optional learned missing-value posterior for sampling/analysis, propagating uncertainty and avoiding plug-in imputation bias and MNAR bias (\textbf{addresses Challenge 1}).
    \item We give a finite-sample risk bound that decomposes posterior-approximation and predictor errors, explaining plug-in bias and MNAR bias and implying lower sample complexity to reach a target error than plug-in/imputation baselines.
\end{enumerate}
Code and scripts will be released upon acceptance.

\section{Problem Definition}\label{sec:problem_definition}

We consider a training dataset $D = \{(x_i, y_i)\}_{i=1}^N$, where each $x_i \in \mathbb{R}^d$ represents a feature vector and $y_i \in \{1, \ldots, C\}$ is the corresponding class label. The feature vectors are subject to structural missingness, denoted by a binary mask $M_i \in \{0, 1\}^d$, where $M_{ij} = 1$ indicates that the $j$-th feature of the $i$-th sample is missing, and $M_{i} = 0$ indicates it is observed. Our goal is (1) to make tractable inference for the missing variables $X_m$ given the observed variables $X_m^c$ and the mask $M$ and (2) to learn a predictor $p(y\mid x, D)$ for a new test feature vector $x$ governed by a specific mask $m$, utilizing the information from the training dataset $D$ and its associated collection of masks $\{M_i\}_{i=1}^N$.

\textbf{Example (classification with missing features).} Consider binary disease prediction from an incomplete patient record. Each patient has a latent complete feature vector $x\in\mathbb{R}^d$ (e.g., labs and survey answers) and label $y\in\{0,1, \dots k\}$, but at test time we only observe $(x^c_m, m)$. The task is to output a calibrated classifier for the label conditioned on what is actually observed under the given structural missingness pattern.

In the next section (Sec.~\ref{sec:bayesian_inference}), we give the Bayesian formulation of this problem definition by expressing prediction as posterior predictive inference conditioned on the observed subspace and the mask.

\begin{figure*}[thbp]
    \centering
    \includegraphics[width=1\linewidth]{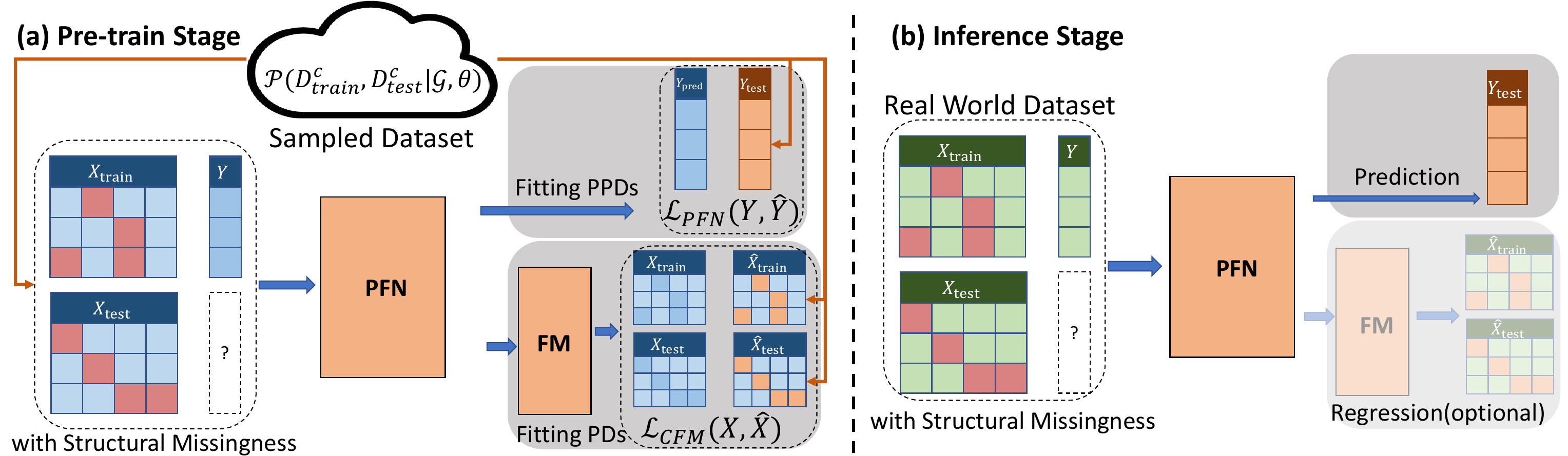}
\caption{Overview of the training and inference framework. In pre-training, inputs $(X_{\text{train}},X_{\text{test}})$ are incomplete (with masks $M$), sampled from the causal model in Sec.~\ref{sec:causal_model} and Fig.~\ref{fig:dfsm}. The PFN is trained end-to-end to output the \emph{in-model} posterior predictive distribution (PPD) directly from incomplete inputs (i.e., it implicitly marginalizes over missing values under the SCM prior), and predicts $y_{\text{test}}$ for the corresponding incomplete $X_{\text{test}}$. In addition, a Flow Matching head can be trained to model a PD over missing values for explicit sampling/imputation and uncertainty analysis, but label prediction uses the PFN output and does not require Monte Carlo over completions.}
    \label{fig:modelstructure}
\end{figure*}

\section{Bayesian Inference with Structural Missingness}\label{sec:bayesian_inference}

We define the missing variable $X_m$ and the observed variable $X_m^c$ as stochastic projections of the complete data dependent on the realization of $M$. Let $\text{Proj}: (M, X) \mapsto X_m$ denote the projection operator that maps a vector to the subspace indexed by set $S$. We formally define:

\begin{equation}
    \begin{aligned}
X_m & := \text{Proj}(M, \mathbf{X}) \\
X_m^c & := (\text{Proj}(M_c, \mathbf{X}), M)
    \end{aligned}
\end{equation}

Note that $X_m^c$ explicitly includes the mask $M$, ensuring that the "observation" includes information about which variables are observed. $X_m$ and $X_m^c$ are still well-defined random variables to perform Bayesian inference. The formal proof that $X_m$ and $X_m^c$ constitute valid random variables on the appropriate disjoint union spaces is provided in Appendix~\ref{pf:defxm}.

Consider a dataset of $n$ independent and identically distributed samples, partitioned into observed and missing components: $D_m^c = \{ (X_{m,i}^c, y_i) \}_{i=1}^n$ and $D_m = \{ X_{m,i} \}_{i=1}^n$. We assume a prior distribution over the missing parameters (or latent variables). The inference goal is to compute the posterior predictive distribution of the target $y$ given the observed history $D_m^c$ and the current observation $X_m^c$.

\textbf{\underline{P}redictive \underline{P}osterior \underline{D}istribution (PPD).}  Thus the regular conditional probability formula is still valid for the calculation of the predictive distribution. The predictive distribution is obtained by marginalizing over the \emph{posterior distribution} of the missing data:

\begin{equation}
\begin{aligned}
    & p(y|X_m^c, D_m^c) = \\
    & \int_{X_m, D_m} \underbrace {p(y| X_m^c, X_m, D_m^c, D_m)}_{\text{General Situation}} d \underbrace{P(X_m, D_m |X_m^c, D_m^c)}_{\text{PD of Missing Data}}
\end{aligned}
\label{eq:ppd-decomposition}
\end{equation}

This formulation allows us to decompose the complex problem of structural missingness into two distinct, manageable components as shown in Eq.~\eqref{eq:ppd-decomposition}. 

\textbf{Predictive Distribution without missingness.} The first term, $p(y| X_m^c, X_m, D_m^c, D_m)$, represents the \textit{general situation}: the standard predictive task where complete data is available. Since $X_m^c$ and $X_m$ together reconstruct the full feature vector $\mathbf{X}$ (and similarly for the dataset $D$), this term reduces to the likelihood of the label given complete features, which is the domain of standard supervised learning models.

\textbf{\underline{P}osterior \underline{D}istribution (PD) of the missing data.} The second term, $P(X_m, D_m |X_m^c, D_m^c)$, represents the \textit{posterior distribution (PD) of the missing data}. This term encapsulates the uncertainty regarding the unobserved features given the observed evidence and the specific missingness pattern. Evaluating the predictive distribution therefore hinges on our ability to model and making more accurate inference on this posterior.

Note that the naive same-distribution shortcut discussed in \textbf{Challenge 1} can be viewed as replacing this posterior with one that ignores the mask information, effectively treating missing values as if drawn from the observed-value distribution (e.g., $P(x\mid M=0)$) rather than the MNAR-correct distribution for missing entries (e.g., $P(x\mid M=1)$). Eq.~\eqref{eq:ppd-decomposition} makes explicit that such a shortcut changes the integral measure and can bias the resulting predictive distribution.

\begin{figure*}[t]
    \centering
    \includegraphics[width=1\linewidth]{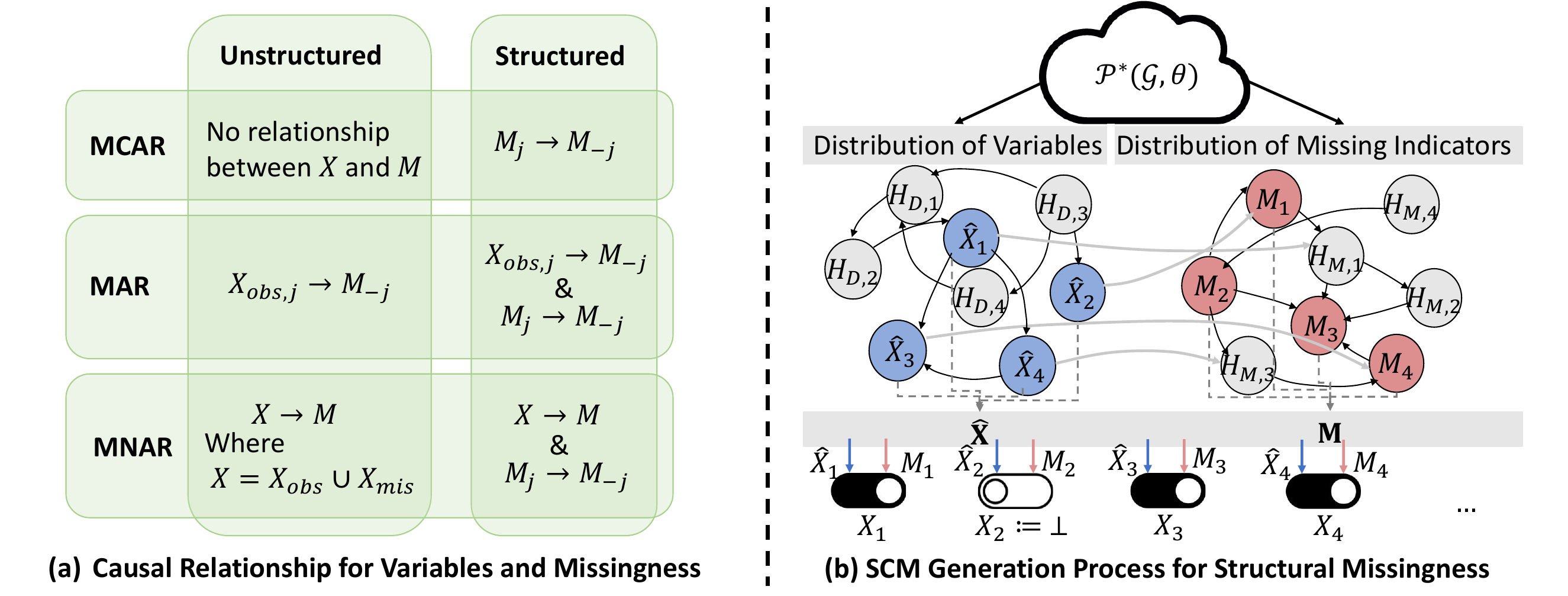}
    \caption{Second-order SCMs for structural missingness. (\textbf{Left}) A causal view of structural missingness, capturing both $X \rightarrow M$ (structural invalidity/undefined values) and $M \rightarrow M$ (missingness propagation) dependencies. (\textbf{Right}) The second-order SCM generation pipeline: sample a causal graph and parameters, generate complete data, then generate masks via the missingness mechanism; the resulting prior is used to fit PFNs and to learn the missing-value posterior.}
    \label{fig:dfsm}
\end{figure*}

However, making both parts tractable, and computing the required posterior predictive integral, is challenging. In the following sections, we derive a tractable approximation to the posterior by exploiting dependencies between observed and missing subspaces, and learn how to compose SCM-based data-generation and missingness mechanisms to better approximate the posterior predictive distribution.

\section{Posterior Distribution of Structural Missingness}\label{sec:posterior_distribution}

To model the structural missingness patterns, we employ second-order structural causal models (SCMs) (Figs.~\ref{fig:modelstructure} and~\ref{fig:dfsm}). This SCM defines the task prior whose induced posteriors we approximate: the observation channel $P(X_m^c\mid X_m)$ and the label map $P(y\mid X_m,D_m)$. PFNs amortize Bayes inference under this prior; if the prior is misspecified, the learned predictive is the best approximation within the model class in expected conditional KL (Theorem~\ref{thm:pfn-risk}). The two fitting routes are introduced next in Sec.~\ref{sec:fitting_ppd_pd}.

\subsection{Causal Relationships for the Missing Indicators}\label{sec:causal_model}

Before formalizing the full generative model, we clarify the causal structure of the missingness indicators $\mathbf{M}$. Let $\mathbf{X}$ be the complete data matrix and $\mathbf{M}$ the binary mask, where $M_{ij}=1$ indicates that $X_{ij}$ is missing. Classical settings distinguish MCAR/MAR/MNAR depending on whether $\mathbf{M}$ is independent of $\mathbf{X}$, depends on observed entries $X_{obs}$, or additionally depends on unobserved entries $X_{miss}$. In our scope, SM refers to structured (Markovian) dependence: the missingness mechanism (including the gate) is treated as a random variable under a second-order prior, while the induced Markov structure of the SCM is fixed.

Importantly, missingness can also be \emph{structured} within $\mathbf{M}$ itself (Fig.~\ref{fig:dfsm}). The missingness of one field may force others to be missing, yielding edges $M_j \rightarrow M_{-j}$ (e.g., skipping ``Smoker?'' implies ``Cigarettes per day'' is undefined). Hence, a causal model for SM must capture both (1) $X \rightarrow M$ (state-dependent invalidity) and (2) $M \rightarrow M$ (propagation of missingness).

\subsection{Constructing SCMs for Structural Missingness}

We model SM with a Structural Causal Model (SCM) $\mathcal{M}=\langle \mathbf{X},\mathbf{U},\mathcal{F},P(\mathbf{U})\rangle$ and augment it with an explicit generator for the missingness mask $\mathbf{M}$. For each variable $X_i$, the missingness indicator is produced by
$M_i \leftarrow f_{M_i}(PA(M_i), U_{M_i})$,
where under SM, $f_{M_i}$ is a deterministic gate that can depend on parent values ($X\rightarrow M$) and other indicators ($M\rightarrow M$). In our implementation we use a randomized score-and-quantile gate; details are in Appendix~\ref{sec:impl:scm:missingness-prior}. We further use a \textit{second-order} SCM by treating the graph $\mathcal{G}$, parameters $\theta$, and missingness logic as latent random variables. The resulting generative process is:
\begin{enumerate}
    \item Sample a causal structure and parameters: $(\mathcal{G}, \theta) \sim P(\mathcal{G}, \theta)$.
    \item Sample exogenous noise: $\mathbf{U} \sim P(\mathbf{U})$.
    \item Generate complete data $\mathbf{X}$ via structural equations $X_i \leftarrow f_i(PA_i, U_i)$.
    \item Generate missingness indicators $\mathbf{M}$ via $M_i \leftarrow f_{M_i}(PA(M_i))$, capturing both structural constraints ($X \rightarrow M$) and propagated missingness ($M \rightarrow M$).
    \item Apply mask to obtain observed data: $X_{obs} = \mathbf{X} \odot (1-\mathbf{M}) + \bot \odot \mathbf{M}$.
\end{enumerate}

Our goal is the posterior induced by this SCM prior. In practice, we learn two prior-generated targets: the structural-missingness observation channel $P(X_m^c\mid X_m)$ and the label map $P(y\mid X_m,D_m)$, which motivates the two learned objects described next. See Appendix~\ref{sec:related_work} to learn concretely about PFNs.

\section{Fitting the PPDs and PDs for Structural Missingness}\label{sec:fitting_ppd_pd}

Given the decomposition in Eq.~\eqref{eq:ppd-decomposition}, our inference strategy requires estimating two key objects: the posterior predictive for labels, and (optionally) a posterior distribution (PD) over missing values. In this section, we detail our two-pronged approach: employing Prior Fitted Networks (PFNs) to learn $p(y\mid D_m^c,X_m^c)$ end-to-end from tasks generated by the SCM prior defined in Sec.~\ref{sec:posterior_distribution}, and utilizing Flow Matching as an auxiliary head to model the missing-value posterior conditioned on the same incomplete context.

\subsection{Bayesian Inference with Prior Fitted Networks}

Prior Fitted Networks (PFN) \cite{muller2021transformers} are a class of neural networks designed to approximate Bayesian inference by learning a mapping from datasets to posterior predictive distributions (PPD). PFNs are trained on a large number of synthetic datasets generated from a prior distribution over models, allowing them to generalize to new datasets and perform inference efficiently.

In our framework, we treat the Second-Order SCM described in the previous section as the prior. A PFN, denoted as $\phi$, is trained to approximate the posterior predictive distribution of the target label $y$ given the observed data $D_m^c$ and query $X_m^c$. Specifically, the PFN minimizes the cross-entropy loss between the predicted distribution and the true labels on synthetic tasks generated from the SCM prior:
\begin{equation}
    \begin{aligned}
    & \mathcal{L}_{PFN}(\phi) = \\
    & \mathbb{E}_{(\mathcal{M}, D) \sim P(\mathcal{M}, D)} \left[ - \sum_{(x,y) \in D_{test}} \log P_\phi(y | x, D_{train}) \right]
    \end{aligned}
\end{equation}
where $(\mathcal{M}, D)$ is a dataset sampled from the second-order SCM prior. In our setting, the PFN consumes incomplete inputs $(D_m^c,X_m^c)$ (including masks) and directly outputs the posterior predictive $p_\phi(y\mid D_m^c,X_m^c)$; the ``general situation'' is the special case with no missingness. This learned map implicitly marginalizes over latent missing values (and missing context $D_m$) under the SCM prior in a single forward pass.

\subsection{Flow Matching Regressor for the PDs of Missing Variables}

While the PFN already targets the required marginal posterior predictive $p_\phi(y\mid D_m^c,X_m^c)$ from incomplete data, we also learn a posterior over missing values for imputation and uncertainty analysis. Concretely, we train a Conditional Flow Matching (CFM) head to approximate $Q_\theta(X_m\mid X_m^c, D_m^c)$, where $D_m$ is not represented explicitly but is marginalized through the same task prior that generates incomplete contexts. In contrast to MSE regression (which only learns a conditional mean), CFM can represent multimodal posteriors induced by structural constraints.

To address this, we propose using a \textbf{Flow Matching Regressor}. Inspired by Conformal Regression \cite{lei2018distribution}, our approach seeks to provide rigorous uncertainty quantification. However, unlike standard conformal methods which often focus on interval calibration, or MSE-based regressors that ignore the distributional shape, we aim to learn a \textit{nearly KL-optimal} regression.

We employ Conditional Flow Matching (CFM) \cite{lipman2023flow} to learn a continuous normalizing flow that transforms a simple base distribution (e.g., Gaussian) into the complex posterior distribution of the missing variables conditioned on the observed ones. The objective is to minimize the regression loss between the vector field generated by the neural network $v_t(x, \theta)$ and the target vector field $u_t(x|x_1)$ defined by the probability path between the noise and data:
\begin{equation}
    \mathcal{L}_{CFM}(\theta) = \mathbb{E}_{t, q(x_1), p_t(x|x_1)} ||v_t(x) - u_t(x|x_1)||^2
\end{equation}
During pretraining, we balance this regression objective with the PFN cross-entropy using a fixed weight (CFM loss weight $0.1$) and do not tune it extensively. The CFM head is conditioned on the same incomplete inputs $(D_m^c,X_m^c,M)$ via the shared PFN-style backbone. In our pipeline, prediction does not require Monte Carlo over $X_m$ or $D_m$: the PFN outputs $p_\phi(y\mid D_m^c,X_m^c)$ directly, while the flow head is used for explicit samples/imputations from $Q_\theta$.
Concrete pre-training hyperparameters (SCM prior distributions, missingness-generator settings, PFN backbone, and flow head) are provided in Appendix~\ref{sec:impl:pretraining}.

\section{Analysis of Bias Alleviation and Near Optimality During In-Context Learning}\label{sec:analysis_in_context}

In this section we make precise near optimality in practice and why it is theoretically justified. The argument follows the subsection structure: 

(1) \emph{Risk minimization and approximation} establishes Bayes-near-optimality, with task-averaged training as an expected \emph{conditional KL projection} (Theorem~\ref{thm:pfn-risk}) and \emph{consistency} for the missing-value posterior (Theorem~\ref{thm:cfm-consistency}). As properties for methods in Secs.~\ref{sec:posterior_distribution} \& \ref{sec:fitting_ppd_pd}. 

(2) \emph{Bias alleviation} formalizes how posterior integration controls error via \emph{posterior-mismatch} and \emph{conditional prediction} terms (Theorem~\ref{thm:post-int}), explaining plug-in failures via the \emph{Jensen gap} (Corollary~\ref{cor:jensen-gap}) and unavoidable \emph{MNAR mismatch} (Corollary~\ref{cor:forced-same-dist-mismatch}, Remark~\ref{rem:mnar-ident}). As previously introduced as \textbf{Challenges 1} \& \textbf{3}.

and (3) \emph{Sample complexity} for near-optimality, where the decoupled design is sufficiently non-biased and more \emph{sample-efficient} than end-to-end learning (Theorem~\ref{prop:decoupled-sample}).

\paragraph{Notation.}
When the conditioning context $(X_m^c, D_m^c)$ is clear, we abbreviate the true missing-value posterior and its learned approximation as
\begin{equation*}
    \mu := P^\star(\cdot \mid X_m^c, D_m^c),
    \qquad
    \nu := Q_\theta(\cdot \mid X_m^c, D_m^c).
\end{equation*}

With $\phi$ and $\theta$ denoting the PFN (predictive) and flow (posterior) parameters, respectively.

\subsection{Risk Minimization and Approximation}\label{sec:rma}

To connect the model construction above with the bias and near-optimality results that follow, we first pin down what is learned in the population limit for the two components used in posterior integration.
\begin{quote}
    \emph{How can we fit the posterior predictive distribution and the missing-value posterior nearly optimally, and when do these learned objects justify posterior integration?}
\end{quote}
In our instantiation, the posterior predictive is learned by a Prior-Fitted Network and the missing-value posterior is learned by conditional flow matching.

\begin{theorem}[PFN as population risk minimization of posterior predictives]\label{thm:pfn-risk}
Let $\Pi$ be the second-order SCM prior over tasks (data-generating models). A task sampled from $\Pi$ induces a random dataset $D$ and, for any query $x$, a true posterior predictive distribution over labels
$P^\star(\cdot\mid x,D)$.
Let $\{P_\phi(\cdot\mid x,D):\phi\in\Phi\}$ be the PFN model class and consider the population cross-entropy risk
\begin{equation*}
    \begin{aligned}
    \mathcal{R}(\phi) :=
    \mathbb{E}_{(\mathcal{M},D)\sim \Pi}\,
    \mathbb{E}_{x, y\sim P^\star(\cdot, \cdot \mid D)}\,
    \big[-\log P_\phi(y\mid x,D)\big].
    \end{aligned}
\end{equation*}
Then:
\begin{enumerate}
    \item[(i)] (Bayes optimality) Any minimizer $\phi^\star\in\arg\min_{\phi\in\Phi}\mathcal{R}(\phi)$ also minimizes the expected conditional KL divergence
\end{enumerate}
    \begin{equation*}
        \mathbb{E}_{(\mathcal{M},D)\sim\Pi}\,\mathbb{E}_{x\sim P^\star(\cdot\mid D)}
        \Big[\mathrm{KL}\big(P^\star(\cdot\mid x,D)\,\|\,P_\phi(\cdot\mid x,D)\big)\Big],
    \end{equation*}
\begin{enumerate}
    \item[] and the minimum value equals the Bayes entropy term plus the minimum expected conditional KL.
    \item[(ii)] (Realizable case) If there exists $\bar\phi\in\Phi$ such that $P_{\bar\phi}(\cdot\mid x,D)=P^\star(\cdot\mid x,D)$ for $\Pi$-almost every $(x,D)$, then any population risk minimizer satisfies $P_{\phi^\star}(\cdot\mid x,D)=P^\star(\cdot\mid x,D)$ $\Pi$-a.s.
    \item[(iii)] (Misspecified case) Without realizability, $\phi^\star$ is the best approximation in $\Phi$ in the above expected conditional KL sense.
\end{enumerate}
This is proved in the Appendix Sec.~\ref{pf:pfn-risk}.
\end{theorem}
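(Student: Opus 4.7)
The plan is to reduce the population cross-entropy $\mathcal{R}(\phi)$ to an expected conditional KL divergence via the standard entropy--KL decomposition, and then read off parts (i)--(iii) as essentially immediate consequences. I would first fix a task $(\mathcal{M},D)$ and a query $x$ from the induced marginal on the observed-input space, and apply the pointwise identity
\begin{equation*}
-\mathbb{E}_{y\sim P^\star(\cdot\mid x,D)}\bigl[\log P_\phi(y\mid x,D)\bigr] = H\bigl(P^\star(\cdot\mid x,D)\bigr) + \mathrm{KL}\bigl(P^\star(\cdot\mid x,D)\,\|\,P_\phi(\cdot\mid x,D)\bigr),
\end{equation*}
which holds whenever $P^\star \ll P_\phi$ on the label simplex, with both sides equal to $+\infty$ otherwise (consistent with the claim since such $\phi$ cannot be a minimizer if any feasible $\phi$ yields a finite risk).

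Next I would take expectation under $\Pi$ over $(\mathcal{M},D,x)$ and invoke Fubini to obtain $\mathcal{R}(\phi) = \mathcal{H}^\star + \mathcal{K}(\phi)$, where $\mathcal{H}^\star$ is the $\phi$-independent expected Bayes entropy and $\mathcal{K}(\phi)$ is the expected conditional KL appearing in the theorem statement. This splitting directly gives (i): minimizing $\mathcal{R}$ over $\Phi$ is equivalent to minimizing $\mathcal{K}$, and $\min_\phi \mathcal{R}(\phi) = \mathcal{H}^\star + \min_\phi \mathcal{K}(\phi)$. For (ii), realizability means $\mathcal{K}(\bar\phi)=0$; nonnegativity of KL and its vanishing only on a.s.\ equal distributions then force $\mathrm{KL}\bigl(P^\star(\cdot\mid x,D)\,\|\,P_{\phi^\star}(\cdot\mid x,D)\bigr)=0$ for $\Pi$-a.e.\ $(x,D)$, hence $P_{\phi^\star}=P^\star$ a.s. For (iii), the best-in-class characterization is exactly the definition of $\arg\min_{\phi\in\Phi} \mathcal{K}(\phi)$ through the same decomposition, so no extra work is needed.

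The main obstacles are technical rather than conceptual. I need to justify that $(x,D)\mapsto \mathrm{KL}\bigl(P^\star\,\|\,P_\phi\bigr)$ is jointly measurable so Fubini is legitimate; this follows from standard regular-conditional-probability constructions, which apply because the label space $\{1,\dots,C\}$ is Polish (in fact finite) and the PFN output is a measurable function of $(\phi,x,D)$. The more delicate issue is that in our setting $(x,D)$ is instantiated as the incomplete observations $(X_m^c,D_m^c)$ including the mask, so $P^\star(y\mid X_m^c,D_m^c)$ must be interpreted as the Bayes posterior predictive obtained by marginalizing $(X_m,D_m)$ under the second-order SCM prior; I would verify that all expectations in $\mathcal{R}(\phi)$ are taken under this observation measure, so that the decomposition goes through verbatim with $\mathcal{H}^\star$ absorbing the irreducible uncertainty due to structural missingness. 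Finally, I would remark that no assumption on the missingness type (MCAR/MAR/MNAR) enters the proof, since everything is stated relative to the prior-induced joint law over tasks, incomplete contexts, queries, and labels.
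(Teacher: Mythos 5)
Your proposal is correct and follows essentially the same route as the paper's proof: the pointwise cross-entropy decomposition into Shannon entropy plus $\mathrm{KL}\bigl(P^\star(\cdot\mid x,D)\,\|\,P_\phi(\cdot\mid x,D)\bigr)$, averaged under $\Pi$, after which (i)--(iii) follow from the $\phi$-independence of the entropy term and nonnegativity of KL. Your extra remarks on measurability, the $P^\star \ll P_\phi$ caveat, and the interpretation of $(x,D)$ as the masked observations are sound additions but not steps the paper's proof needed to spell out.
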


\begin{theorem}[Consistency of conditional flow matching in \cite{zhou2025error}]\label{thm:cfm-consistency}
Let $P^\star(X_m\mid X_m^c)$ be the true conditional distribution of missing variables.
Under standard regularity conditions for conditional flow matching (existence/uniqueness of the probability path and sufficient model capacity), the population conditional flow matching objective is minimized when the learned conditional distribution equals the target, i.e., $Q_\theta(\cdot\mid X_m^c)=P^\star(\cdot\mid X_m^c)$ almost surely in $X_m^c$.
Moreover, with increasing data and a consistent optimizer, empirical minimizers converge to a population minimizer, yielding $Q_\theta(\cdot\mid X_m^c)\to P^\star(\cdot\mid X_m^c)$ in an appropriate weak sense (e.g., $W_1$) under additional moment conditions.
\end{theorem}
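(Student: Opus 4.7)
The plan is to reduce the claim to two standard facts about conditional flow matching: first, that the population CFM loss has the same minimizer as the (intractable) flow matching loss against the true marginal vector field, and second, that empirical minimization is consistent under mild regularity, after which $L^2$-convergence of vector fields can be lifted to weak convergence of push-forwards.

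First I would invoke the core identity from \cite{lipman2023flow}: for a probability path $p_t(x\mid x_1,X_m^c)$ with conditional vector field $u_t(\cdot\mid x_1)$ and a fixed conditioning $X_m^c$, the marginal vector field that transports the base $p_0$ to $P^\star(\cdot\mid X_m^c)$ is the conditional expectation $u_t^\star(x\mid X_m^c)=\mathbb{E}\bigl[u_t(x\mid x_1)\,\big|\,x_t=x,X_m^c\bigr]$. A direct expansion shows the population CFM risk and the marginal FM risk differ by a $\theta$-independent constant, so their minimizers coincide; since both are least-squares projections, the unique minimizer (a.s.\ in $(t,x,X_m^c)$) is $v_t^\theta = u_t^\star$. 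Under the stated sufficient capacity, some $\theta^\star\in\Theta$ attains this, and the associated flow ODE $\dot{\Phi}_t^{\theta^\star}=v_t^{\theta^\star}\circ\Phi_t^{\theta^\star}$ is exactly the transport map for the continuity equation, so $Q_{\theta^\star}(\cdot\mid X_m^c)=P^\star(\cdot\mid X_m^c)$ almost surely.

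Next I would handle the empirical-to-population passage via a standard M-estimator argument, following \cite{zhou2025error}. With $\hat\theta_n \in \arg\min_\theta \widehat{\mathcal{L}}_{CFM}^{(n)}(\theta)$, two ingredients suffice: (a) uniform convergence $\sup_{\theta\in\Theta}\bigl|\widehat{\mathcal{L}}_{CFM}^{(n)}(\theta)-\mathcal{L}_{CFM}(\theta)\bigr|\xrightarrow{p} 0$, which follows from a Rademacher or bracketing bound once the vector-field class has controlled complexity and bounded second moments on compacts; and (b) well-separated population minima in the $L^2(\mathrm{d}t\otimes p_t)$ pseudometric on vector fields. These give $\mathcal{L}_{CFM}(\hat\theta_n)\to \mathcal{L}_{CFM}(\theta^\star)$, hence $v_t^{\hat\theta_n}\to u_t^\star$ in $L^2$.

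Finally I would upgrade vector-field convergence to weak convergence of the induced conditional distributions. The standard route is Grönwall's inequality applied to the flow ODEs: uniform Lipschitz/linear-growth control on the learned fields yields
\begin{equation*}
\bigl\|\Phi_t^{\hat\theta_n}(x_0)-\Phi_t^{\theta^\star}(x_0)\bigr\|
\;\le\; C\int_0^t \bigl\|v_s^{\hat\theta_n}-u_s^\star\bigr\|\,\mathrm{d}s,
\end{equation*}
and push-forward through $p_0$ then gives, via the Kantorovich--Rubinstein duality, a $W_1$ bound between $Q_{\hat\theta_n}(\cdot\mid X_m^c)$ and $P^\star(\cdot\mid X_m^c)$ in terms of the $L^2$ vector-field error and a moment constant depending on $p_0$. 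I expect the main obstacle to be precisely this lifting step: one needs the uniform Lipschitz and linear-growth bounds on $v_t^\theta$ that are not automatic from $L^2$-closeness, together with integrability of $p_0$ and of $P^\star(\cdot\mid X_m^c)$ to keep the Grönwall constant finite. These are exactly the ``additional moment conditions'' the statement invokes; once granted, the $W_1$ conclusion follows and the theorem is complete.
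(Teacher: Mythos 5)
The paper never proves this theorem: it is imported as an external result, labeled ``Consistency of conditional flow matching in \cite{zhou2025error}'', and no corresponding appendix section exists (the appendix proves Theorems~\ref{thm:pfn-risk} and~\ref{thm:post-int}, the corollaries, and the decoupling results, but not this one). Your proposal is therefore necessarily a different route --- a self-contained reconstruction of what the paper delegates to the citation --- and as a sketch it is sound, following the standard three-stage architecture of such error analyses: (i) the projection identity of \cite{lipman2023flow}, by which $\mathcal{L}_{CFM}$ and the marginal FM risk differ by a $\theta$-independent constant and the least-squares minimizer is $u_t^\star(x\mid X_m^c)=\mathbb{E}[u_t(x\mid x_1)\mid x_t=x, X_m^c]$; (ii) M-estimator consistency for the empirical minimizer; (iii) a Gr\"onwall argument lifting $L^2$ velocity error to $W_1$ distance between push-forwards. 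Two refinements would tighten it. First, your condition (b), well-separatedness of population minima, is superfluous in the pseudometric you chose: the Pythagorean identity gives $\mathcal{L}_{CFM}(\theta)-\min_\theta \mathcal{L}_{CFM} = \|v^\theta - u^\star\|^2_{L^2(\mathrm{d}t\otimes p_t)}$ exactly, so convergence of the loss value already yields $L^2$ convergence of the field with no separation hypothesis. Second, in the Gr\"onwall step the trajectory-wise bound you wrote uses the field difference evaluated along the \emph{reference} flow; to connect it to the $L^2(p_s)$ error you should take the expectation over $x_0\sim p_0$ (so that $\Phi_s^{\theta^\star}(x_0)\sim p_s$ and the error term becomes the $L^1(p_s)\le L^2(p_s)$ norm), and note that the required uniform Lipschitz bound on the learned field is a regularity assumption on the model class rather than a ``moment condition'' --- the moment conditions govern integrability of $p_0$ and the target, which you correctly invoke separately. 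You also correctly identify the lifting step as the genuine obstacle, since $L^2$-closeness of fields does not imply Lipschitz control. What the paper's citation buys is a precise statement of the regularity hypotheses; what your reconstruction buys is transparency about exactly where each hedge in the theorem statement (``sufficient capacity'', ``consistent optimizer'', ``additional moment conditions'') is consumed.
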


\subsection{In-Model Reduction of Plug-in Imputation and MNAR Bias}\label{sec:bias-reduction}

To highlight the MNAR bias in \textbf{Challenge 1}, write $P(x\mid M=0)$ and $P(x\mid M=1)$ for the observed- and missing-value distributions. Under MNAR, $P(x\mid M=1)$ can differ from $P(x\mid M=0)$, so treating missingness as ignorable or using point imputation can introduce systematic bias. Our results show that posterior integration admits an explicit error bound (Theorem~\ref{thm:post-int}), point imputation can incur a strict Jensen gap (Corollary~\ref{cor:jensen-gap}), and any imputer forced to use one distribution across missingness groups suffers unavoidable mismatch when $P(x\mid M=1)\neq P(x\mid M=0)$ (Corollary~\ref{cor:forced-same-dist-mismatch}).

Let $d$ be a metric on $\mathcal{X}_m$, and let $W_1$ be the 1-Wasserstein distance induced by $d$. The conditional complete-data target is written as $h^\star(x_m) := \mathbb{E}\!\left[g(Y)\mid X_m=x_m, X_m^c, D_m^c\right]$ and the predictor is written as $\hat h_\phi$ (e.g., PFN).

\begin{theorem}[Posterior integration converges with no in-model bias]\label{thm:post-int}
For any scalar test function $g$, define the true conditional target
\begin{equation}
    \begin{aligned}
    T_g^\star(X_m^c, D_m^c) 
    &:= \mathbb{E}\!\left[g(Y)\mid X_m^c, D_m^c\right] \\
    &= \mathbb{E}_{X_m\sim \mu}\!\left[h^\star(X_m)\right],
    \end{aligned}
\end{equation}
and the posterior-integration estimator
\begin{equation}
    \widehat T_g(X_m^c, D_m^c)
    := \mathbb{E}_{X_m\sim \nu}\!\left[\hat h_\phi(X_m)\right].
\end{equation}
Under Assumption~\ref{assump:post-int},
\begin{equation}
    \left|T_g^\star(X_m^c, D_m^c) - \widehat T_g(X_m^c, D_m^c)\right|
    \le L\,\varepsilon_{\text{post}} + \varepsilon_{\text{pred}}.
\end{equation}
This is proved in the Appendix Sec.~\ref{pf:post-int}.
\end{theorem}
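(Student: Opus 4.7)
The plan is a standard add-and-subtract decomposition of the error into a posterior-mismatch part (distance between $\mu$ and $\nu$) and a predictor part (distance between $\hat h_\phi$ and $h^\star$). First I would verify the tower identity implicit in the theorem statement: by the law of iterated expectations conditioned on $(X_m, X_m^c, D_m^c)$,
\begin{equation*}
    \mathbb{E}[g(Y)\mid X_m^c, D_m^c]
    = \mathbb{E}\!\left[\mathbb{E}[g(Y)\mid X_m, X_m^c, D_m^c]\,\Big|\,X_m^c, D_m^c\right]
    = \mathbb{E}_{X_m\sim\mu}[h^\star(X_m)],
\end{equation*}
which justifies the second equality in the definition of $T_g^\star$ and lets me treat everything downstream as an expectation of a scalar function against the measures $\mu$ and $\nu$.

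Next I would insert the hybrid quantity $\mathbb{E}_{X_m\sim\nu}[h^\star(X_m)]$ and apply the triangle inequality to obtain
\begin{equation*}
    \bigl|T_g^\star - \widehat T_g\bigr|
    \le \underbrace{\bigl|\mathbb{E}_\mu[h^\star] - \mathbb{E}_\nu[h^\star]\bigr|}_{\text{(A)}}
    + \underbrace{\bigl|\mathbb{E}_\nu[h^\star - \hat h_\phi]\bigr|}_{\text{(B)}}.
\end{equation*}
For (A) I expect Assumption~\ref{assump:post-int} to supply $L$-Lipschitzness of $h^\star$ (with respect to $d$) together with a bound $W_1(\mu,\nu)\le\varepsilon_{\text{post}}$; then Kantorovich--Rubinstein duality gives (A) $\le L\,W_1(\mu,\nu)\le L\,\varepsilon_{\text{post}}$. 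For (B) I would use Jensen/triangle to pull the absolute value inside and bound (B) $\le \mathbb{E}_{X_m\sim\nu}\!\left[|\hat h_\phi(X_m)-h^\star(X_m)|\right]\le\varepsilon_{\text{pred}}$, where the last inequality is the predictor-error bound I expect the assumption to postulate (an $L^1(\nu)$ bound, which is the most natural form to pair with a Wasserstein bound on the measures). Summing (A) and (B) yields the claim.

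The main obstacle I anticipate is not any single inequality but the bookkeeping of which measure each expectation is taken under and whether the Lipschitz constant $L$ really belongs to $h^\star$ rather than to $\hat h_\phi$: inserting $\mathbb{E}_\nu[h^\star]$ (rather than $\mathbb{E}_\mu[\hat h_\phi]$) is what makes (A) about $h^\star$ alone, so the Lipschitz requirement lands on the true conditional target and the predictor only needs to be close in $L^1(\nu)$. A secondary subtlety is that $h^\star$ is itself a conditional expectation of $g(Y)$, so its Lipschitz regularity is a structural assumption on the data-generating SCM and the smoothness of $g$; I would flag this as exactly the content of Assumption~\ref{assump:post-int}. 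Finally, I would note in closing that the bound has no residual in-model bias term: if both $\varepsilon_{\text{post}}$ and $\varepsilon_{\text{pred}}$ vanish (e.g., under the consistency guarantees of Theorems~\ref{thm:pfn-risk} and~\ref{thm:cfm-consistency}), posterior integration recovers $T_g^\star$ exactly, which is the ``no in-model bias'' claim advertised in the theorem name and the hook for the plug-in/Jensen-gap corollaries that follow.
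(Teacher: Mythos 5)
Your proposal matches the paper's proof essentially step for step: the same tower-property identity (Lemma~\ref{lem:post-int-tower}), the same add-and-subtract of $\mathbb{E}_{\nu}[h^\star]$ with the triangle inequality (Lemma~\ref{lem:post-int-decomp}), and the same Kantorovich--Rubinstein bound $L\,W_1(\mu,\nu)\le L\,\varepsilon_{\text{post}}$ (Lemma~\ref{lem:post-int-kr}). The only slight deviation is term (B): the paper's Assumption~\ref{assump:post-int}(A3) postulates the weaker signed integrated bound $\bigl|\mathbb{E}_{\nu}[\hat h_\phi - h^\star]\bigr|\le\varepsilon_{\text{pred}}$ directly, so your Jensen step to an $L^1(\nu)$ bound is unnecessary (though harmless, since your stronger hypothesis implies the paper's).
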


\begin{corollary}[Strict Jensen gap of point imputation]\label{cor:jensen-gap}
Let $\mu:=P^\star(\cdot \mid X_m^c, D_m^c)$ denote the true conditional distribution. If $h^\star(\cdot)$ is strictly convex or strictly concave in $x_m$, $\mu$ is non-degenerate, and $\mathbb{E}_{X_m\sim \mu}[X_m]$ exists, then the point-imputation plug-in predictor
\begin{equation}
    \widehat T_{g,\text{point}} := h^\star\!\left(\mathbb{E}_{X_m\sim \mu}[X_m]\right)
\end{equation}
incurs a strict Jensen gap:
\begin{equation}
    \left|\mathbb{E}_{X_m\sim \mu}[h^\star(X_m)] - h^\star\!\left(\mathbb{E}_{X_m\sim \mu}[X_m]\right)\right| > 0.
\end{equation}
This is proved in the Appendix Sec.~\ref{pf:jensen-gap}.
\end{corollary}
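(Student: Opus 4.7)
The plan is to deduce the result directly from the strict form of Jensen's inequality applied to the true missing-value posterior $\mu$. First, without loss of generality assume $h^\star$ is strictly convex (the strictly concave case follows by applying the argument to $-h^\star$). Since $\mathbb{E}_{\mu}[X_m]$ exists and $h^\star$ is convex, the standard Jensen inequality gives
\[
\mathbb{E}_{X_m\sim \mu}[h^\star(X_m)] \;\geq\; h^\star\!\left(\mathbb{E}_{X_m\sim \mu}[X_m]\right),
\]
so the only work is upgrading ``$\geq$'' to ``$>$''.

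Second, I would invoke the classical equality condition: for $h^\star$ strictly convex on the convex hull of $\mathrm{supp}(\mu)$, equality in Jensen holds if and only if $X_m$ is $\mu$-almost surely equal to $\mathbb{E}_{\mu}[X_m]$. A self-contained proof is to use non-degeneracy to select disjoint measurable sets $A,B$ with $\mu(A),\mu(B)>0$ on which $X_m$ takes distinct values $a\neq b$; conditioning $\mu$ on $A\cup B$ yields a two-point mixture on which the definition of strict convexity, applied to the convex combination $\lambda h^\star(a)+(1-\lambda)h^\star(b) > h^\star(\lambda a + (1-\lambda)b)$ with $\lambda=\mu(A\mid A\cup B)$, produces a strict Jensen gap. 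The law of total expectation (splitting $\mu$ as a mixture of its restriction to $A\cup B$ and to its complement, and applying non-strict Jensen on the complement) then lifts strictness to $\mu$ itself, giving
\[
\mathbb{E}_{X_m\sim \mu}[h^\star(X_m)] \;>\; h^\star\!\left(\mathbb{E}_{X_m\sim \mu}[X_m]\right).
\]
Taking absolute values establishes the claim; the strictly concave case is symmetric and yields the reversed strict inequality with the same magnitude bound.

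The main point requiring care, rather than a deep obstacle, is stating \emph{non-degenerate} unambiguously for a possibly multivariate $X_m$. I would define it to mean $\mu$ is not a Dirac mass, equivalently that there exist two distinct values taken with positive mass (or, via a regular conditional argument on the Borel $\sigma$-algebra, that $\mathrm{supp}(\mu)$ contains at least two points). With this definition the argument goes through in any dimension, because strict convexity of $h^\star$ on $\mathrm{conv}(\mathrm{supp}(\mu))$ still delivers a strict two-point inequality along the chord joining the two selected support values, which is the only ingredient the lifting step uses. I would also briefly note that the hypothesis ``$\mathbb{E}_{\mu}[X_m]$ exists'' together with convexity of $h^\star$ suffices for $\mathbb{E}_{\mu}[h^\star(X_m)]$ to be well-defined (possibly $+\infty$), so the inequality is meaningful without extra integrability assumptions on $h^\star(X_m)$.
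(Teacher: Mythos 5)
Your proposal is correct and follows essentially the same route as the paper: the paper's proof is a direct appeal to the strict Jensen inequality (strict convexity/concavity plus non-degeneracy of $\mu$), exactly your first step. The extra material you supply --- the two-point conditioning argument establishing the equality condition, the clarification that non-degenerate means ``not a Dirac mass'' for multivariate $X_m$, and the integrability remark --- simply makes self-contained what the paper cites as classical, so there is no substantive difference in approach.
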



\begin{corollary}[Inevitable mismatch under a forced same-distribution imputer]\label{cor:forced-same-dist-mismatch}
Let $P^\star$ denote the true data-generating distribution, and define the conditional laws
$P_0:=P^\star(X_m\mid M=0)$ and $P_1:=P^\star(X_m\mid M=1)$.
Suppose $P_0\neq P_1$ but an imputation/generative model is constrained to satisfy
$R(X_m\mid M=0)=R(X_m\mid M=1)=:\widetilde P$.
Then no choice of $\widetilde P$ can match both $P_0$ and $P_1$ simultaneously; in particular,
\begin{equation}
\max\big\{\mathrm{TV}(P_0,\widetilde P),\mathrm{TV}(P_1,\widetilde P)\big\}\;\ge\;\tfrac12\,\mathrm{TV}(P_0,P_1).
\end{equation}
Moreover, taking the supremum over bounded measurable $f$ with $\|f\|_\infty\le 1$,
\begin{equation}
\begin{aligned}
\sup_{\|f\|_\infty\le 1}\max\Big\{\big|\mathbb{E}_{P_0}f-\mathbb{E}_{\widetilde P}f\big|,\ \big|\mathbb{E}_{P_1}f-\mathbb{E}_{\widetilde P}f\big|\Big\} \\
\;\ge\;\mathrm{TV}(P_0,P_1),
\end{aligned}
\end{equation}
and hence at least one group ($M=0$ or $M=1$) incurs a non-vanishing distribution-mismatch bias whenever MNAR induces $P_0\neq P_1$.

See Appendix Sec.~\ref{pf:forced-same-dist-mismatch} for the proof.
\end{corollary}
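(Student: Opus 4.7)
The plan is to derive both inequalities directly from the triangle inequality for total variation together with its dual (variational) representation as an integral probability metric over $\|f\|_\infty\le 1$. Under the forced-sharing constraint $R(X_m\mid M=0)=R(X_m\mid M=1)=\widetilde P$, the common law $\widetilde P$ must serve as a single approximation to two generally distinct targets $P_0$ and $P_1$, and both bounds simply quantify how far $\widetilde P$ can be from both simultaneously.

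For the first bound, I would invoke the triangle inequality $\mathrm{TV}(P_0,P_1)\le \mathrm{TV}(P_0,\widetilde P)+\mathrm{TV}(P_1,\widetilde P)$ and then upper-bound the right-hand side by $2\max\{\mathrm{TV}(P_0,\widetilde P),\mathrm{TV}(P_1,\widetilde P)\}$. Rearranging immediately gives the claimed $\max\{\mathrm{TV}(P_0,\widetilde P),\mathrm{TV}(P_1,\widetilde P)\}\ge \tfrac12\mathrm{TV}(P_0,P_1)$. Because the hypothesis $P_0\neq P_1$ implies $\mathrm{TV}(P_0,P_1)>0$, this already delivers the non-vanishing conclusion for at least one missingness group.

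For the second bound, I would fix an arbitrary bounded measurable $f$ with $\|f\|_\infty\le 1$ and apply the scalar triangle inequality $|\mathbb{E}_{P_0}f-\mathbb{E}_{P_1}f|\le |\mathbb{E}_{P_0}f-\mathbb{E}_{\widetilde P}f|+|\mathbb{E}_{\widetilde P}f-\mathbb{E}_{P_1}f|\le 2\max\{|\mathbb{E}_{P_0}f-\mathbb{E}_{\widetilde P}f|,|\mathbb{E}_{P_1}f-\mathbb{E}_{\widetilde P}f|\}$. Taking the supremum over such $f$ on the left, invoking the Kantorovich-style identity $\sup_{\|f\|_\infty\le 1}|\mathbb{E}_Pf-\mathbb{E}_Qf|=2\,\mathrm{TV}(P,Q)$, and noting that the sup on the right dominates the pointwise max yield the stated $\mathrm{TV}(P_0,P_1)$ lower bound after the factors of $2$ cancel. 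A brief remark would justify the sup/max interchange: the supremum of a max is at least the max of the individual suprema, which is exactly what is needed here.

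I do not anticipate a serious obstacle; the content reduces to the triangle inequality for $\mathrm{TV}$ plus its dual representation. The only points that require care are fixing the normalization convention (so that $\sup_{\|f\|_\infty\le 1}|\mathbb{E}_Pf-\mathbb{E}_Qf|$ equals $2\mathrm{TV}$ rather than $\mathrm{TV}$) and the sup/max exchange in part two. Finally, tying the inequalities back to the MNAR narrative is immediate: the identifiability gap $P_0\neq P_1$ guarantees $\mathrm{TV}(P_0,P_1)>0$, so any same-distribution imputer inherits an irreducible mismatch on at least one group, which is precisely the source of MNAR bias highlighted in Challenge~1.
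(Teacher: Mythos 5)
Your proof is correct and rests on the same two ingredients as the paper's: the triangle inequality routed through the shared law $\widetilde P$ and the variational characterization $\sup_{\|f\|_\infty\le 1}\big|\mathbb{E}_R f-\mathbb{E}_S f\big|=2\,\mathrm{TV}(R,S)$, but the organization is genuinely different in two ways. First, you obtain $\max\{\mathrm{TV}(P_0,\widetilde P),\mathrm{TV}(P_1,\widetilde P)\}\ge\tfrac12\,\mathrm{TV}(P_0,P_1)$ directly from the metric triangle inequality for $\mathrm{TV}$, whereas the paper proves the test-function display \emph{first} and then converts it into the TV display using the factor-two identity together with the comparison $\max\{\sup_f a_f,\sup_f b_f\}\ge\sup_f\max\{a_f,b_f\}$; your order is more elementary and makes the two displays logically independent rather than sequential. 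Second, for the test-function bound the paper introduces an $\varepsilon$-near-optimal witness $f_\varepsilon$ with $\tfrac12|\mathbb{E}_{P_0}f_\varepsilon-\mathbb{E}_{P_1}f_\varepsilon|\ge\mathrm{TV}(P_0,P_1)-\varepsilon$ and lets $\varepsilon\to 0$, while you simply take suprema on both sides of the pointwise triangle inequality and let the factors of $2$ cancel; this removes the $\varepsilon$ bookkeeping at no cost in generality or constants. One small correction to your own commentary: the sup/max interchange you flag as the delicate point is not actually needed on your route --- in your second step only the trivial domination of the pointwise $\max\{a_f,b_f\}$ by $\sup_g\max\{a_g,b_g\}$ (monotonicity of the supremum) is used, whereas the genuine interchange $\max\{\sup,\sup\}\ge\sup\max$ is what the paper invokes to pass from its second display to its first, a step your direct derivation of the first display bypasses entirely. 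Your closing link to the MNAR narrative ($P_0\neq P_1\Rightarrow\mathrm{TV}(P_0,P_1)>0$, so at least one missingness group suffers irreducible mismatch) coincides with the paper's conclusion.
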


\begin{remark}[MNAR identifiability is model-dependent]\label{rem:mnar-ident}
Theorem~\ref{thm:post-int} does not claim that MNAR is identifiable without assumptions. Rather, it formalizes an in-model inference statement: if the assumed SCM prior (including $X\!\to\! M$ and $M\!\to\! M$ structure) renders the posterior $P^\star(X_m\mid X_m^c,D_m^c)$ identifiable within the model class, then learning that posterior and integrating it avoids systematic plug-in bias in prediction.
\end{remark}

\subsection{Sample Complexity Advantage of Decoupled Inference}\label{sec:complexity}

The practical question is:
\begin{quote}
\emph{Given only the observed features $x_m^c$, and limited context $D_m^c$ at test time, how can we predict $y$ as close as possible to the Bayes-optimal answer?}
\end{quote}
Theorem~\ref{prop:decoupled-sample} gives a sample-complexity advantage (ICL difficulty proxy) for learning $G$ (missing-feature posterior) and $H$ (conditional predictor) separately rather than end-to-end in finite-context / ICL. Appendix Sec.~\ref{pf:postpred-decoupled} provides the supporting error decompositions and proof chain (Theorem~\ref{prop:decoupled-bayes-proxy}; Corollary~\ref{cor:postpred-two-term}).

\begin{theorem}[Sample Complexity Advantage of Decoupled Inference (ICL Difficulty Proxy)]\label{prop:decoupled-sample}
Consider the Bayes posterior-predictive target map
$F(x_m^c,D_m^c):=P^\star(\cdot\mid x_m^c,D_m^c)$.
Equivalently, a ``decoupled'' view learns (i) a missing-feature posterior generator
$G(\cdot\mid x_m^c,D_m^c)\approx P^\star(\cdot\mid x_m^c,D_m^c)$
and (ii) a conditional predictor $H(\cdot\mid x_m,x_m^c,D_m^c)\approx P^\star(\cdot\mid x_m,x_m^c,D_m^c)$,
with the (conceptual) posterior-integrated predictor
$\widehat P(y\mid x_m^c,D_m^c):=\int H(y\mid x_m,x_m^c,D_m^c)\,G(x_m\mid x_m^c,D_m^c)\,dx_m$.

Assume the conditional predictor regularity in Appendix Sec.~\ref{pf:postpred-decoupled} so that the posterior-mismatch term is controlled by $L_h\,W_1(\cdot,\cdot)$ (Lemma~\ref{lem:postpred-w1}).
Then Corollary~\ref{cor:postpred-two-term} implies that to reach total error $\epsilon$ it suffices to learn
$G$ to Wasserstein accuracy $\epsilon/(2L_h)$ and $H$ to conditional-prediction error $\epsilon/2$, capturing the required error propagation.

Further, using an \emph{effective complexity proxy} viewpoint, assume learning a $d$-dimensional target to error $\epsilon$ scales as
\begin{equation*}
    \mathcal{N}(L,\epsilon)=\mathcal{O}\big((L/\epsilon)^d\big),
\end{equation*}
and let $L_{\mathrm{cpl}}$ denote an \emph{ICL coupling penalty} (the added difficulty of learning the coupled posterior-integrated map directly from limited context). Then the end-to-end sample complexity $\mathcal{N}_{E2E}$ versus the decoupled complexity $\mathcal{N}_{Decoupled}$ admits the comparison
\begin{equation*}
    \begin{aligned}
    \mathcal{N}_{E2E}
    &\approx \mathcal{O}\Big(\big((L_h(1+L_g)+L_{\mathrm{cpl}})/\epsilon\big)^d\Big),\\
    \mathcal{N}_{Decoupled}
    &\approx \mathcal{O}\Big(\big((L_gL_h)/\epsilon\big)^d\Big)+\mathcal{O}\Big(\big(L_h/\epsilon\big)^d\Big),
    \end{aligned}
\end{equation*}
Thus, when $L_g$ is large and context is limited (large $L_{\mathrm{cpl}}$), learning $G$ and $H$ separately can be more sample-efficient than learning the coupled end-to-end map; see Appendix Sec.~\ref{pf:postpred-decoupled}.
\end{theorem}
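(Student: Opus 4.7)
My plan is to obtain the decoupled bound as a budget split on an already-established two-term error decomposition, and then translate the two per-piece accuracy requirements into sample-complexity costs via the stated effective complexity proxy. Concretely, I would begin by invoking Corollary~\ref{cor:postpred-two-term} (the posterior-integrated error is at most a posterior-mismatch term plus a conditional-prediction term) and Lemma~\ref{lem:postpred-w1} (the posterior-mismatch term is controlled by $L_h\,W_1(G,\mu)$ under the Lipschitz regularity of $H$). Together these give
\begin{equation*}
\mathrm{err}(\widehat P)\ \le\ L_h\,W_1\!\big(G,\mu\big)\ +\ \mathrm{err}_{\mathrm{cond}}(H),
\end{equation*}
so a total error budget of $\epsilon$ can be realized by allocating $\epsilon/2$ to each summand, i.e.\ requiring $W_1(G,\mu)\le \epsilon/(2L_h)$ and $\mathrm{err}_{\mathrm{cond}}(H)\le \epsilon/2$.

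Next I would plug each of these per-piece targets into the stated proxy $\mathcal{N}(L,\epsilon)=\mathcal{O}((L/\epsilon)^d)$. For the generator $G$, which is $L_g$-regular, the requirement $W_1\le \epsilon/(2L_h)$ gives $\mathcal{N}_G=\mathcal{O}((L_g\cdot 2L_h/\epsilon)^d)=\mathcal{O}((L_gL_h/\epsilon)^d)$; for the conditional predictor $H$, the requirement $\mathrm{err}_{\mathrm{cond}}\le \epsilon/2$ gives $\mathcal{N}_H=\mathcal{O}((L_h/\epsilon)^d)$. Summing yields the stated $\mathcal{N}_{\text{Decoupled}}$. For the end-to-end side, I would observe that the coupled map $F(x_m^c,D_m^c)=\int H\,dG$ is, as a function of its inputs, the composition of learning both $G$ and then integrating $H$ against it; a standard Lipschitz-composition argument (propagating sensitivity to $x_m^c,D_m^c$ through $G$ and then through $H$) produces an effective regularity constant of order $L_h(1+L_g)$, and the additional ICL coupling penalty $L_{\mathrm{cpl}}$ accounts for the fact that the coupled target must be learned from finite in-context samples without access to the factorization. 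Applying the proxy once to $F$ then gives the stated $\mathcal{N}_{E2E}$.

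Finally, to establish the comparison, I would simply note that when $L_g$ is large and context is limited (so $L_{\mathrm{cpl}}$ is non-negligible), the polynomial in the exponent of $\mathcal{N}_{E2E}$ dominates the sum of the two decoupled polynomials term-wise, because $(L_h(1+L_g)+L_{\mathrm{cpl}})^d\gtrsim (L_gL_h)^d+L_h^d$ for the relevant parameter regime. The main obstacle I expect is not the budget-splitting step (which is essentially algebraic given the two lemmas) but rather justifying the end-to-end Lipschitz/complexity bookkeeping: one must argue carefully that the coupled map genuinely inherits an effective constant of order $L_h(1+L_g)+L_{\mathrm{cpl}}$ rather than a smaller one that would close the gap with the decoupled rate. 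I would handle this by making the ICL coupling penalty $L_{\mathrm{cpl}}$ precise as the excess sensitivity of the conditional-expectation functional to perturbations of the context when the posterior factorization is \emph{not} available to the learner, and then appealing to the proxy as a qualitative complexity statement rather than a tight constant; this keeps the comparison honest while matching the stated asymptotic form.
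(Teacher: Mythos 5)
Your proposal is correct and follows essentially the same route as the paper's proof: invoke Corollary~\ref{cor:postpred-two-term} together with Lemma~\ref{lem:postpred-w1}, split the error budget as $W_1(G,\mu)\le \epsilon/(2L_h)$ and $\epsilon_H\le \epsilon/2$, and plug each requirement into the proxy law $\mathcal{N}(L,\epsilon)=\mathcal{O}\big((L/\epsilon)^d\big)$ to obtain both $\mathcal{N}_{Decoupled}$ and $\mathcal{N}_{E2E}$. The only divergence is minor: where you sketch a Lipschitz-composition argument to justify the end-to-end effective complexity $L_h(1+L_g)+L_{\mathrm{cpl}}$, the paper simply \emph{assumes} that bound outright (treating $L_{\mathrm{cpl}}$ as a defined penalty), so your extra bookkeeping is compatible with, and if anything slightly more careful than, the paper's treatment.
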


\section{Experiments}\label{sec:experiments}

\FloatBarrier
\begin{figure*}[!t]
  \centering
  \includegraphics[width=\textwidth,height=0.32\textheight,keepaspectratio]{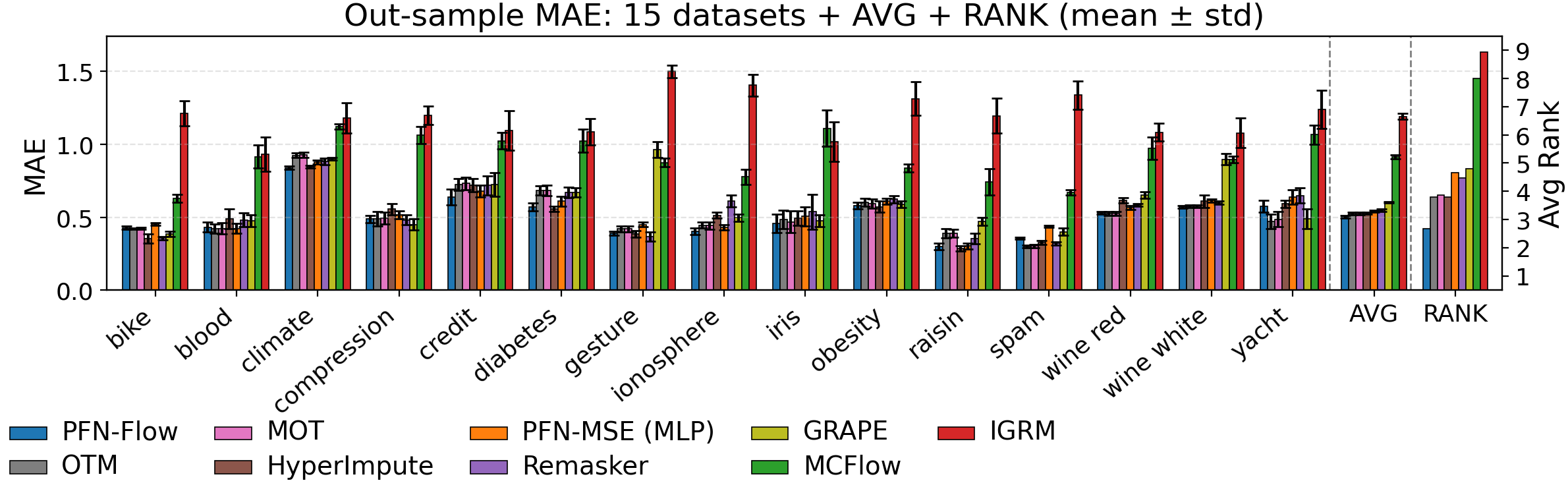}
  \caption{
  Out-of-sample MAE under MNAR (mean$\pm$std over $K=10$ independently sampled MNAR mask realizations) across $N$ datasets (here $N=15$ from the cross-method intersection),
  with dataset-wise grouped bars, an aggregated AVG column, and an Avg Rank column on a secondary y-axis (lower is better).
  }
  \label{fig:outsample-mae-mnar}
  \vspace{2mm}
  \noindent\leavevmode
\begin{minipage}{0.48\textwidth}
\centering
\scriptsize
\setlength{\tabcolsep}{3pt}
\renewcommand{\arraystretch}{0.95}
\captionof{table}{Method-level performance summary. Missing rate $\le 10\%$. AUC is reported as mean$\pm$std across datasets. \textbf{Rank} denotes average per-dataset rank. \textbf{Time} is reported as mean$\pm$std(seconds) per 1000 instance with 50 variables. Datasets: 33. Showing top-14 + TabPFN (Raw) of 50 methods. Full results in Table~\ref{tab:auc-method-summary-0-10-all}.}
\label{tab:auc-method-summary-0-10}
\resizebox{\linewidth}{!}{%
\begin{tabular}{lrrr}
\toprule
\multicolumn{1}{c}{Method} & \multicolumn{1}{c}{AUC} & \multicolumn{1}{c}{Rank} & \multicolumn{1}{c}{Time (s)} \\
\midrule
PFN-Flow & $0.8694\pm0.1448$ & 16.83 & $0.061\pm0.003$ \\
CatBoost + TDM & $0.8664\pm0.1512$ & 20.77 & $160.830\pm8.329$ \\
PFN-NSM & $0.8663\pm0.1499$ & 18.70 & $0.061\pm0.003$ \\
CatBoost + GRAPE & $0.8660\pm0.1521$ & 21.65 & $43.530\pm0.294$ \\
CatBoost + IGRM & $0.8659\pm0.1513$ & 21.44 & $44.152\pm0.241$ \\
CatBoost + OTM-SK & $0.8657\pm0.1514$ & 20.70 & $82.744\pm2.056$ \\
CatBoost + OTM-RR & $0.8656\pm0.1504$ & 22.02 & $42.211\pm0.211$ \\
CatBoost + Remasker & $0.8656\pm0.1501$ & 21.32 & $551.288\pm9.349$ \\
CatBoost + MOT-MLP & $0.8646\pm0.1527$ & 22.94 & $165.097\pm0.798$ \\
CatBoost + MCFlow & $0.8646\pm0.1522$ & 20.91 & $65.533\pm0.612$ \\
CatBoost + HyperImpute & $0.8645\pm0.1522$ & 23.11 & $207.728\pm0.543$ \\
CatBoost (Raw) & $0.8642\pm0.1523$ & 22.17 & $37.031\pm0.186$ \\
CatBoost + MissForest & $0.8641\pm0.1526$ & 23.14 & $63.092\pm1.692$ \\
CatBoost + MOT-LIN & $0.8639\pm0.1527$ & 24.15 & $164.264\pm3.783$ \\
\hdashline
TabPFN (Raw) & $0.8545\pm0.1610$ & 26.73 & $0.061\pm0.003$ \\
\bottomrule
\end{tabular}%
}
\end{minipage}%
\hfill%
\begin{minipage}{0.48\textwidth}
\centering
\scriptsize
\setlength{\tabcolsep}{3pt}
\renewcommand{\arraystretch}{0.95}
\captionof{table}{Missing rate $> 10\%$. Datasets: 10. Showing top-14 + TabPFN (Raw) of 50 methods. Full results in Table~\ref{tab:auc-method-summary-10plus-all}.}
\label{tab:auc-method-summary-10plus}
\resizebox{\linewidth}{!}{%
\begin{tabular}{lrrr}
\toprule
\multicolumn{1}{c}{Method} & \multicolumn{1}{c}{AUC} & \multicolumn{1}{c}{Rank} & \multicolumn{1}{c}{Time (s)} \\
\midrule
PFN-Flow & $0.8172\pm0.1099$ & 11.40 & $0.061\pm0.003$ \\
XGBoost + MOT-LIN & $0.8111\pm0.1217$ & 12.00 & $131.921\pm3.778$ \\
XGBoost + MOT-MLP & $0.8073\pm0.1216$ & 17.60 & $132.754\pm0.776$ \\
PFN-NSM & $0.8065\pm0.1219$ & 16.20 & $0.061\pm0.003$ \\
XGBoost + IGRM & $0.8048\pm0.1127$ & 17.90 & $11.809\pm0.156$ \\
XGBoost (Raw) & $0.8047\pm0.1117$ & 15.50 & $4.689\pm0.022$ \\
CatBoost + MOT-LIN & $0.8042\pm0.1228$ & 22.40 & $164.264\pm3.783$ \\
CatBoost + MOT-MLP & $0.8039\pm0.1249$ & 20.30 & $165.097\pm0.798$ \\
CatBoost (Raw) & $0.8034\pm0.1213$ & 23.70 & $37.031\pm0.186$ \\
XGBoost + GRAPE & $0.8032\pm0.1109$ & 18.50 & $11.187\pm0.229$ \\
XGBoost + OTM-RR & $0.8028\pm0.1206$ & 21.40 & $9.869\pm0.103$ \\
TabPFN + GRAPE & $0.8016\pm0.1301$ & 24.90 & $6.560\pm0.228$ \\
XGBoost + Remasker & $0.8014\pm0.1202$ & 25.70 & $518.946\pm9.348$ \\
TabPFN + MOT-LIN & $0.8011\pm0.1290$ & 23.30 & $127.294\pm3.778$ \\
\hdashline
TabPFN (Raw) & $0.7985\pm0.1269$ & 23.50 & $0.061\pm0.003$ \\
\bottomrule
\end{tabular}%
}
\end{minipage}%

\end{figure*}

\paragraph{Details for Baselines and Benchmarks.}
Our evaluation is produced on 58 tabular datasets with missingness: for classification we report 33 datasets ($\le$10\%) + 10 datasets ($>$10\%), covering 50 methods from 4 predictor families (XGBoost/LightGBM/CatBoost/TabPFN-family) and diverse missing-data handling. Dataset lists are in Appendix Tables~\ref{tab:dataset-summary-0-10}--\ref{tab:dataset-summary-10plus} and full 50-method tables are in Appendix Tables~\ref{tab:auc-method-summary-0-10-all}--\ref{tab:auc-method-summary-10plus-all}. For imputation, we report MNAR robustness on $N=15$ shared datasets with $K=10$ masks (Appendix Table~\ref{tab:imputation-dataset-sizes}); protocol details are in Appendix Sec.~\ref{sec:impl:cls} and Sec.~\ref{sec:impl:imp}.

\paragraph{RQ1: Missing-data classification performance.}\label{sec:exp:cls-auc}
\textbf{Question.} Which method is most accurate and robust across datasets under missingness?
\textbf{Answer.} Tables~\ref{tab:auc-method-summary-0-10} and~\ref{tab:auc-method-summary-10plus} show that \textsc{PFN-Flow} ranks first in both missingness regimes, outperforming TabPFN and PFN-NSM (an ablation using a reduced SCM prior as in Sec.~\ref{sec:causal_model}: masks depend on $X$ ($X\!\to\!M$) but have no cross-field propagation ($M\!\to\!M$)). This supports uncertainty-aware completion under structural missingness; see Appendix Sec.~\ref{sec:impl:cls}.

\paragraph{RQ2: Missing-data classification efficiency.}\label{sec:exp:cls-eff}
\textbf{Question.} How much runtime do we save relative to the strongest non-PFN baseline in the tables?
\textbf{Answer.} We compute the speedup using the Time column in Tables~\ref{tab:auc-method-summary-0-10} and~\ref{tab:auc-method-summary-10plus} (baseline fit+inference, including fitting the imputer and classifier; PFN amortized forward-pass inference, with pre-training cost reported in Appendix Sec.~\ref{sec:impl:pretraining}). In the low-missingness group, \textsc{PFN-Flow} takes $0.061$s while CatBoost+TDM takes $160.830$s, giving a speedup of $160.830/0.061 \approx 2.6\times 10^3$. In the high-missingness group, \textsc{PFN-Flow} takes $0.061$s while XGBoost+MOT-LIN takes $131.921$s, giving $131.921/0.061 \approx 2.2\times 10^3$. The runtime measurement protocol is detailed in Appendix Sec.~\ref{sec:impl:cls}.

\paragraph{RQ3: MNAR imputation robustness.}\label{sec:exp:mnar-oot}
\textbf{Question.} Under MNAR missingness, do we improve out-of-sample imputation accuracy and consistency?
\textbf{Answer.} Fig.~\ref{fig:outsample-mae-mnar} shows that \textsc{PFN-Flow} achieves the lowest out-of-sample MAE and the best overall \textbf{AVG} and \textbf{Avg Rank}. The error bars are small, indicating consistent performance across MNAR masks (a practical robustness check without sweeping MNAR strength). The MNAR protocol and metrics are specified in Appendix Sec.~\ref{sec:impl:imp}.

\section{Conclusion}

We propose near-optimal Bayesian inference under structural missingness. By casting the problem as a Second-Order SCM and using Prior-Fitted Networks with Flow Matching, we obtain an approximation that preserves uncertainty rather than collapsing into a single imputation. Theory supports this view, and experiments show consistent gains over imputation baselines on incomplete tabular data.

\section*{Impact Statement}
This paper presents work whose goal is to advance the field of 
Machine Learning. There are many potential societal consequences 
of our work, none which we feel must be specifically highlighted here.

\bibliography{example_paper}

@inproceedings{mohan2013missing,
  title={Missing data as a causal inference problem},
  author={Mohan, Karthika and Pearl, Judea and Jin, Tian},
  booktitle={Proceedings of the neural information processing systems conference (nips)},
  year={2013}
}

@article{rubin1976inference,
  title={Inference and missing data},
  author={Rubin, Donald B},
  journal={Biometrika},
  volume={63},
  number={3},
  pages={581--592},
  year={1976},
  publisher={Oxford University Press}
}

@article{mitra2023learning,
  author       = {Robin Mitra and
                  Sarah F. McGough and
                  Tapabrata Chakraborti and
                  Chris C. Holmes and
                  Ryan Copping and
                  Niels Hagenbuch and
                  Stefanie Biedermann and
                  Jack Noonan and
                  Brieuc Lehmann and
                  Aditi Shenvi and
                  Xuan Vinh Doan and
                  David Leslie and
                  Ginestra Bianconi and
                  Rub{\'{e}}n J. S{\'{a}}nchez{-}Garc{\'{\i}}a and
                  Alisha R. Davies and
                  Maxine Mackintosh and
                  Eleni{-}Rosalina Andrinopoulou and
                  Anahid Basiri and
                  Chris Harbron and
                  Ben D. MacArthur},
  title        = {Learning from data with structured missingness},
  journal      = {Nat. Mac. Intell.},
  volume       = {5},
  number       = {1},
  pages        = {13--23},
  year         = {2023},
  url          = {https://doi.org/10.1038/s42256-022-00596-z},
  doi          = {10.1038/S42256-022-00596-Z},
  timestamp    = {Tue, 10 Jun 2025 12:22:54 +0200},
  biburl       = {https://dblp.org/rec/journals/natmi/MitraMCHCHBNLSDLBSDMABHM23.bib},
  bibsource    = {dblp computer science bibliography, https://dblp.org}
}

@inproceedings{muller2021transformers,
  author       = {Samuel M{\"{u}}ller and
                  Noah Hollmann and
                  Sebastian Pineda{-}Arango and
                  Josif Grabocka and
                  Frank Hutter},
  title        = {Transformers Can Do Bayesian Inference},
  booktitle    = {The Tenth International Conference on Learning Representations, {ICLR}
                  2022, Virtual Event, April 25-29, 2022},
  publisher    = {OpenReview.net},
  year         = {2022},
  url          = {https://openreview.net/forum?id=KSugKcbNf9},
  timestamp    = {Sat, 20 Aug 2022 01:15:42 +0200},
  biburl       = {https://dblp.org/rec/conf/iclr/0005HPGH22.bib},
  bibsource    = {dblp computer science bibliography, https://dblp.org}
}

@inproceedings{lee2023minimizing,
  title={Minimizing trajectory curvature of ode-based generative models},
  author={Lee, Sangyun and Kim, Beomsu and Ye, Jong Chul},
  booktitle={International Conference on Machine Learning},
  pages={18957--18973},
  year={2023},
  organization={PMLR}
}

@inproceedings{rundel2024interpretable,
  title={Interpretable machine learning for TabPFN},
  author={Rundel, David and Kobialka, Julius and von Crailsheim, Constantin and Feurer, Matthias and Nagler, Thomas and R{\"{u}}gamer, David},
  booktitle={World Conference on Explainable Artificial Intelligence},
  pages={465--476},
  year={2024},
  organization={Springer}
}

@inproceedings{hollmann2022tabpfn,
  title={TabPFN: A Transformer That Solves Small Tabular Classification Problems in a Second},
  author={Hollmann, Noah and M{\"{u}}ller, Samuel and Eggensperger, Katharina and Hutter, Frank},
  booktitle={NeurIPS 2022 First Table Representation Workshop},
  year={2022}
}

@article{hollmann2025accurate,
  author       = {Noah Hollmann and
                  Samuel M{\"{u}}ller and
                  Lennart Purucker and
                  Arjun Krishnakumar and
                  Max K{\"{o}}rfer and
                  Shi Bin Hoo and
                  Robin Tibor Schirrmeister and
                  Frank Hutter},
  title        = {Accurate predictions on small data with a tabular foundation model},
  journal      = {Nat.},
  volume       = {637},
  number       = {8044},
  pages        = {319--326},
  year         = {2025},
  url          = {https://doi.org/10.1038/s41586-024-08328-6},
  doi          = {10.1038/S41586-024-08328-6},
  timestamp    = {Sun, 07 Dec 2025 22:16:31 +0100},
  biburl       = {https://dblp.org/rec/journals/nature/HollmannMPKKHSH25.bib},
  bibsource    = {dblp computer science bibliography, https://dblp.org}
}

@inproceedings{pereira2020vae,
  title={Vae-bridge: Variational autoencoder filter for bayesian ridge imputation of missing data},
  author={Pereira, Ricardo Cardoso and Abreu, Pedro Henriques and Rodrigues, Pedro Pereira},
  booktitle={2020 International Joint Conference on Neural Networks (IJCNN)},
  pages={1--7},
  year={2020},
  organization={IEEE}
}

@inproceedings{yoon2018gain,
  title={Gain: Missing data imputation using generative adversarial nets},
  author={Yoon, Jinsung and Jordon, James and Schaar, Mihaela},
  booktitle={International conference on machine learning},
  pages={5689--5698},
  year={2018},
  organization={PMLR}
}

@inproceedings{jolicoeur2024generating,
  title={Generating and imputing tabular data via diffusion and flow-based gradient-boosted trees},
  author={Jolicoeur-Martineau, Alexia and Fatras, Kilian and Kachman, Tal},
  booktitle={International conference on artificial intelligence and statistics},
  pages={1288--1296},
  year={2024},
  organization={PMLR}
}

@inproceedings{balazadeh2025causalpfn,
  author    = {Vahid Balazadeh and Hamidreza Kamkari and Valentin Thomas and Benson Li and Junwei Ma and Jesse C. Cresswell and Rahul G. Krishnan},
  title     = {{CausalPFN}: Amortized Causal Effect Estimation via In-Context Learning},
  booktitle = {Advances in Neural Information Processing Systems},
  volume    = {38},
  year      = {2025},
  url       = {https://github.com/vdblm/CausalPFN/}
}

@inproceedings{robertson2025dopfn,
  author    = {Jake Robertson and Arik Reuter and Siyuan Guo and Noah Hollmann and Frank Hutter and Bernhard Sch{\"o}lkopf},
  title     = {{Do-PFN}: In-Context Learning for Causal Effect Estimation},
  booktitle = {Advances in Neural Information Processing Systems},
  volume    = {38},
  year      = {2025},
  url       = {https://arxiv.org/abs/2506.03791},
  note      = {Spotlight poster, NeurIPS 2025}
}

@article{grinsztajn2025tabpfn,
  title={TabPFN-2.5: Advancing the State of the Art in Tabular Foundation Models},
  author={Grinsztajn, L{\'e}o and Fl{\"o}ge, Klemens and Key, Oscar and Birkel, Felix and Jund, Philipp and Roof, Brendan and J{\"a}ger, Benjamin and Safaric, Dominik and Alessi, Simone and Hayler, Adrian and others},
  journal={arXiv preprint arXiv:2511.08667},
  year={2025}
}

@inproceedings{muller2023pfns4bo,
  title={Pfns4bo: In-context learning for bayesian optimization},
  author={M{\"{u}}ller, Samuel and Feurer, Matthias and Hollmann, Noah and Hutter, Frank},
  booktitle={International Conference on Machine Learning},
  pages={25444--25470},
  year={2023},
  organization={PMLR}
}

@inproceedings{liu2025tabpfn,
  author       = {Siyang Liu and
                  Han{-}Jia Ye},
  title        = {TabPFN Unleashed: {A} Scalable and Effective Solution to Tabular Classification
                  Problems},
  booktitle    = {Forty-second International Conference on Machine Learning, {ICML}
                  2025, Vancouver, BC, Canada, July 13-19, 2025},
  publisher    = {OpenReview.net},
  year         = {2025},
  url          = {https://openreview.net/forum?id=5DD3RCcVcT},
  timestamp    = {Fri, 05 Dec 2025 16:19:11 +0100},
  biburl       = {https://dblp.org/rec/conf/icml/0008Y25.bib},
  bibsource    = {dblp computer science bibliography, https://dblp.org}
}

@inproceedings{qu2025tabicl,
  author       = {Jingang Qu and
                  David Holzm{\"{u}}ller and
                  Ga{\"{e}}l Varoquaux and
                  Marine Le Morvan},
  title        = {TabICL: {A} Tabular Foundation Model for In-Context Learning on Large
                  Data},
  booktitle    = {Forty-second International Conference on Machine Learning, {ICML}
                  2025, Vancouver, BC, Canada, July 13-19, 2025},
  publisher    = {OpenReview.net},
  year         = {2025},
  url          = {https://openreview.net/forum?id=0VvD1PmNzM},
  timestamp    = {Fri, 05 Dec 2025 16:19:11 +0100},
  biburl       = {https://dblp.org/rec/conf/icml/QuHVM25.bib},
  bibsource    = {dblp computer science bibliography, https://dblp.org}
}

@article{lei2018distribution,
  title={Distribution-free predictive inference for regression},
  author={Lei, Jing and G’Sell, Max and Rinaldo, Alessandro and Tibshirani, Ryan J and Wasserman, Larry},
  journal={Journal of the American Statistical Association},
  volume={113},
  number={523},
  pages={1094--1111},
  year={2018},
  publisher={Taylor \& Francis}
}

@inproceedings{lipman2023flow,
  author       = {Yaron Lipman and
                  Ricky T. Q. Chen and
                  Heli Ben{-}Hamu and
                  Maximilian Nickel and
                  Matthew Le},
  title        = {Flow Matching for Generative Modeling},
  booktitle    = {The Eleventh International Conference on Learning Representations,
                  {ICLR} 2023, Kigali, Rwanda, May 1-5, 2023},
  publisher    = {OpenReview.net},
  year         = {2023},
  url          = {https://openreview.net/forum?id=PqvMRDCJT9t},
  timestamp    = {Tue, 06 Aug 2024 17:12:58 +0200},
  biburl       = {https://dblp.org/rec/conf/iclr/LipmanCBNL23.bib},
  bibsource    = {dblp computer science bibliography, https://dblp.org}
}

@inproceedings{zhang2025diffputer,
  author       = {Hengrui Zhang and
                  Liancheng Fang and
                  Qitian Wu and
                  Philip S. Yu},
  title        = {DiffPuter: Empowering Diffusion Models for Missing Data Imputation},
  booktitle    = {The Thirteenth International Conference on Learning Representations,
                  {ICLR} 2025, Singapore, April 24-28, 2025},
  publisher    = {OpenReview.net},
  year         = {2025},
  url          = {https://openreview.net/forum?id=3fl1SENSYO},
  timestamp    = {Thu, 15 May 2025 17:19:05 +0200},
  biburl       = {https://dblp.org/rec/conf/iclr/ZhangFWY25.bib},
  bibsource    = {dblp computer science bibliography, https://dblp.org}
}

@inproceedings{pujianto2019k,
  title={K-nearest neighbor (k-NN) based missing data imputation},
  author={Pujianto, Utomo and Wibawa, Aji Prasetya and Akbar, Muhammad Iqbal and others},
  booktitle={2019 5th International Conference on Science in Information Technology (ICSITech)},
  pages={83--88},
  year={2019},
  organization={IEEE}
}

@article{garcia2010pattern,
  title={Pattern classification with missing data: a review},
  author={Garc{\'\i}a-Laencina, Pedro J and Sancho-G{\'o}mez, Jos{\'e}-Luis and Figueiras-Vidal, An{\'\i}bal R},
  journal={Neural Computing and Applications},
  volume={19},
  number={2},
  pages={263--282},
  year={2010},
  publisher={Springer}
}

@article{ma2018bayesian,
  title={Bayesian methods for dealing with missing data problems},
  author={Ma, Zhihua and Chen, Guanghui},
  journal={Journal of the Korean Statistical Society},
  volume={47},
  number={3},
  pages={297--313},
  year={2018},
  publisher={Springer}
}

@article{pinkard2025missing,
  title={The missing data for intelligent scientific instruments},
  author={Pinkard, Henry and Norlin, Nils},
  journal={Nature Methods},
  pages={1--4},
  year={2025},
  publisher={Nature Publishing Group US New York}
}

@article{albergo2025stochastic,
  title={Stochastic interpolants: A unifying framework for flows and diffusions},
  author={Albergo, Michael and Boffi, Nicholas M and Vanden-Eijnden, Eric},
  journal={Journal of Machine Learning Research},
  volume={26},
  number={209},
  pages={1--80},
  year={2025}
}

@inproceedings{zhou2025error,
  author       = {Zhengyu Zhou and
                  Weiwei Liu},
  title        = {An Error Analysis of Flow Matching for Deep Generative Modeling},
  booktitle    = {Forty-second International Conference on Machine Learning, {ICML}
                  2025, Vancouver, BC, Canada, July 13-19, 2025},
  publisher    = {OpenReview.net},
  year         = {2025},
  url          = {https://openreview.net/forum?id=vES22INUKm},
  timestamp    = {Fri, 05 Dec 2025 16:19:10 +0100},
  biburl       = {https://dblp.org/rec/conf/icml/Zhou025.bib},
  bibsource    = {dblp computer science bibliography, https://dblp.org}
}

@article{kerrigan2023functional,
  title={Functional flow matching},
  author={Kerrigan, Gavin and Migliorini, Giosue and Smyth, Padhraic},
  journal={arXiv preprint arXiv:2305.17209},
  year={2023}
}

@book{pearl2009causality,
  title={Causality},
  author={Pearl, Judea},
  year={2009},
  publisher={Cambridge university press}
}

@article{spirtes1991algorithm,
  title={An algorithm for fast recovery of sparse causal graphs},
  author={Spirtes, Peter and Glymour, Clark},
  journal={Social science computer review},
  volume={9},
  number={1},
  pages={62--72},
  year={1991},
  publisher={Sage Publications Sage CA: Thousand Oaks, CA}
}

@book{spirtes2000causation,
  title={Causation, prediction, and search},
  author={Spirtes, Peter and Glymour, Clark N and Scheines, Richard},
  year={2000},
  publisher={MIT press}
}

@inproceedings{xgboost,
  author       = {Tianqi Chen and
                  Carlos Guestrin},
  editor       = {Balaji Krishnapuram and
                  Mohak Shah and
                  Alexander J. Smola and
                  Charu C. Aggarwal and
                  Dou Shen and
                  Rajeev Rastogi},
  title        = {XGBoost: {A} Scalable Tree Boosting System},
  booktitle    = {Proceedings of the 22nd {ACM} {SIGKDD} International Conference on
                  Knowledge Discovery and Data Mining, San Francisco, CA, USA, August
                  13-17, 2016},
  pages        = {785--794},
  publisher    = {{ACM}},
  year         = {2016},
  url          = {https://doi.org/10.1145/2939672.2939785},
  doi          = {10.1145/2939672.2939785},
  timestamp    = {Sun, 02 Nov 2025 21:27:16 +0100},
  biburl       = {https://dblp.org/rec/conf/kdd/ChenG16.bib},
  bibsource    = {dblp computer science bibliography, https://dblp.org}
}

@inproceedings{catboost,
  author       = {Liudmila Ostroumova Prokhorenkova and
                  Gleb Gusev and
                  Aleksandr Vorobev and
                  Anna Veronika Dorogush and
                  Andrey Gulin},
  editor       = {Samy Bengio and
                  Hanna M. Wallach and
                  Hugo Larochelle and
                  Kristen Grauman and
                  Nicol{\`{o}} Cesa{-}Bianchi and
                  Roman Garnett},
  title        = {CatBoost: unbiased boosting with categorical features},
  booktitle    = {Advances in Neural Information Processing Systems 31: Annual Conference
                  on Neural Information Processing Systems 2018, NeurIPS 2018, December
                  3-8, 2018, Montr{\'{e}}al, Canada},
  pages        = {6639--6649},
  year         = {2018},
  timestamp    = {Mon, 16 May 2022 15:41:51 +0200},
  biburl       = {https://dblp.org/rec/conf/nips/ProkhorenkovaGV18.bib},
  bibsource    = {dblp computer science bibliography, https://dblp.org}
}

@inproceedings{lightgbm,
  author       = {Guolin Ke and
                  Qi Meng and
                  Thomas Finley and
                  Taifeng Wang and
                  Wei Chen and
                  Weidong Ma and
                  Qiwei Ye and
                  Tie{-}Yan Liu},
  editor       = {Isabelle Guyon and
                  Ulrike von Luxburg and
                  Samy Bengio and
                  Hanna M. Wallach and
                  Rob Fergus and
                  S. V. N. Vishwanathan and
                  Roman Garnett},
  title        = {LightGBM: {A} Highly Efficient Gradient Boosting Decision Tree},
  booktitle    = {Advances in Neural Information Processing Systems 30: Annual Conference
                  on Neural Information Processing Systems 2017, December 4-9, 2017,
                  Long Beach, CA, {USA}},
  pages        = {3146--3154},
  year         = {2017},
  timestamp    = {Thu, 21 Jan 2021 15:15:21 +0100},
  biburl       = {https://dblp.org/rec/conf/nips/KeMFWCMYL17.bib},
  bibsource    = {dblp computer science bibliography, https://dblp.org}
}

@inproceedings{muller2025future,
author = {Samuel Gabriel M{\"{u}}ller and
Arik Reuter and
Noah Hollmann and
David R{\"{u}}gamer and
Frank Hutter},
title = {The Future of Bayesian Prediction Is Prior-Fitted},
booktitle = {Forty-second International Conference on Machine Learning, {ICML}
2025, Vancouver, BC, Canada, July 13-19, 2025},
publisher = {OpenReview.net},
year = {2025},
timestamp = {Thu, 25 Dec 2025 16:19:10 +0100},
bibsource = {dblp computer science bibliography, https://dblp.org}
}

@article{Stekhoven2011MissForestN,
  title={MissForest - non-parametric missing value imputation for mixed-type data},
  author={Daniel J. Stekhoven and Peter B{\"{u}}hlmann},
  journal={Bioinformatics},
  year={2011},
  volume={28 1},
  pages={
          112-8
        },
  url={https://api.semanticscholar.org/CorpusID:2089531}
}

@inproceedings{jarrett2022hyperimpute,
  author       = {Daniel Jarrett and
                  Bogdan Cebere and
                  Tennison Liu and
                  Alicia Curth and
                  Mihaela van der Schaar},
  editor       = {Kamalika Chaudhuri and
                  Stefanie Jegelka and
                  Le Song and
                  Csaba Szepesv{\'{a}}ri and
                  Gang Niu and
                  Sivan Sabato},
  title        = {HyperImpute: Generalized Iterative Imputation with Automatic Model
                  Selection},
  booktitle    = {International Conference on Machine Learning, {ICML} 2022, 17-23 July
                  2022, Baltimore, Maryland, {USA}},
  series       = {Proceedings of Machine Learning Research},
  volume       = {162},
  pages        = {9916--9937},
  publisher    = {{PMLR}},
  year         = {2022},
  url          = {https://proceedings.mlr.press/v162/jarrett22a.html},
  timestamp    = {Tue, 12 Jul 2022 17:36:52 +0200},
  biburl       = {https://dblp.org/rec/conf/icml/JarrettCLCS22.bib},
  bibsource    = {dblp computer science bibliography, https://dblp.org}
}

@inproceedings{you2020handling,
  author       = {Jiaxuan You and
                  Xiaobai Ma and
                  Daisy Yi Ding and
                  Mykel J. Kochenderfer and
                  Jure Leskovec},
  editor       = {Hugo Larochelle and
                  Marc'Aurelio Ranzato and
                  Raia Hadsell and
                  Maria{-}Florina Balcan and
                  Hsuan{-}Tien Lin},
  title        = {Handling Missing Data with Graph Representation Learning},
  booktitle    = {Advances in Neural Information Processing Systems 33: Annual Conference
                  on Neural Information Processing Systems 2020, NeurIPS 2020, December
                  6-12, 2020, virtual},
  year         = {2020},
  timestamp    = {Tue, 19 Jan 2021 15:57:14 +0100},
  biburl       = {https://dblp.org/rec/conf/nips/YouMDKL20.bib},
  bibsource    = {dblp computer science bibliography, https://dblp.org}
}

@inproceedings{zhong2023data,
  author       = {Jiajun Zhong and
                  Ning Gui and
                  Weiwei Ye},
  editor       = {Brian Williams and
                  Yiling Chen and
                  Jennifer Neville},
  title        = {Data Imputation with Iterative Graph Reconstruction},
  booktitle    = {Thirty-Seventh {AAAI} Conference on Artificial Intelligence, {AAAI}
                  2023, Thirty-Fifth Conference on Innovative Applications of Artificial
                  Intelligence, {IAAI} 2023, Thirteenth Symposium on Educational Advances
                  in Artificial Intelligence, {EAAI} 2023, Washington, DC, USA, February
                  7-14, 2023},
  pages        = {11399--11407},
  publisher    = {{AAAI} Press},
  year         = {2023},
  url          = {https://doi.org/10.1609/aaai.v37i9.26348},
  doi          = {10.1609/AAAI.V37I9.26348},
  timestamp    = {Mon, 04 Sep 2023 16:50:28 +0200},
  biburl       = {https://dblp.org/rec/conf/aaai/ZhongGY23.bib},
  bibsource    = {dblp computer science bibliography, https://dblp.org}
}

@inproceedings{du2024remasker,
  author       = {Tianyu Du and
                  Luca Melis and
                  Ting Wang},
  title        = {ReMasker: Imputing Tabular Data with Masked Autoencoding},
  booktitle    = {The Twelfth International Conference on Learning Representations,
                  {ICLR} 2024, Vienna, Austria, May 7-11, 2024},
  publisher    = {OpenReview.net},
  year         = {2024},
  url          = {https://openreview.net/forum?id=KI9NqjLVDT},
  timestamp    = {Wed, 07 Aug 2024 17:11:53 +0200},
  biburl       = {https://dblp.org/rec/conf/iclr/DuM024.bib},
  bibsource    = {dblp computer science bibliography, https://dblp.org}
}

@inproceedings{richardson2020mcflow,
  author       = {Trevor W. Richardson and
                  Wencheng Wu and
                  Lei Lin and
                  Beilei Xu and
                  Edgar A. Bernal},
  title        = {McFlow: Monte Carlo Flow Models for Data Imputation},
  booktitle    = {2020 {IEEE/CVF} Conference on Computer Vision and Pattern Recognition,
                  {CVPR} 2020, Seattle, WA, USA, June 13-19, 2020},
  pages        = {14193--14202},
  publisher    = {Computer Vision Foundation / {IEEE}},
  year         = {2020},
  doi          = {10.1109/CVPR42600.2020.01421},
  timestamp    = {Tue, 31 Aug 2021 14:00:04 +0200},
  biburl       = {https://dblp.org/rec/conf/cvpr/RichardsonWLXB20.bib},
  bibsource    = {dblp computer science bibliography, https://dblp.org}
}

@inproceedings{ot4sl,
  author       = {Vy Vo and
                  He Zhao and
                  Trung Le and
                  Edwin V. Bonilla and
                  Dinh Phung},
  title        = {Optimal Transport for Structure Learning Under Missing Data},
  booktitle    = {Forty-first International Conference on Machine Learning, {ICML} 2024,
                  Vienna, Austria, July 21-27, 2024},
  publisher    = {OpenReview.net},
  year         = {2024},
  url          = {https://openreview.net/forum?id=09Robz3Ppy},
  timestamp    = {Wed, 14 May 2025 13:01:27 +0200},
  biburl       = {https://dblp.org/rec/conf/icml/Vo0LB024.bib},
  bibsource    = {dblp computer science bibliography, https://dblp.org}
}

@inproceedings{muzellec2020missing,
  author       = {Boris Muzellec and
                  Julie Josse and
                  Claire Boyer and
                  Marco Cuturi},
  title        = {Missing Data Imputation using Optimal Transport},
  booktitle    = {Proceedings of the 37th International Conference on Machine Learning,
                  {ICML} 2020, 13-18 July 2020, Virtual Event},
  series       = {Proceedings of Machine Learning Research},
  volume       = {119},
  pages        = {7130--7140},
  publisher    = {{PMLR}},
  year         = {2020},
  url          = {http://proceedings.mlr.press/v119/muzellec20a.html},
  timestamp    = {Tue, 15 Dec 2020 17:40:19 +0100},
  biburl       = {https://dblp.org/rec/conf/icml/MuzellecJBC20.bib},
  bibsource    = {dblp computer science bibliography, https://dblp.org}
}

@inproceedings{DBLP:conf/nips/ProkhorenkovaGV18,
  author       = {Liudmila Ostroumova Prokhorenkova and
                  Gleb Gusev and
                  Aleksandr Vorobev and
                  Anna Veronika Dorogush and
                  Andrey Gulin},
  editor       = {Samy Bengio and
                  Hanna M. Wallach and
                  Hugo Larochelle and
                  Kristen Grauman and
                  Nicol{\`{o}} Cesa{-}Bianchi and
                  Roman Garnett},
  title        = {CatBoost: unbiased boosting with categorical features},
  booktitle    = {Advances in Neural Information Processing Systems 31: Annual Conference
                  on Neural Information Processing Systems 2018, NeurIPS 2018, December
                  3-8, 2018, Montr{\'{e}}al, Canada},
  pages        = {6639--6649},
  year         = {2018},
  timestamp    = {Mon, 16 May 2022 15:41:51 +0200},
  biburl       = {https://dblp.org/rec/conf/nips/ProkhorenkovaGV18.bib},
  bibsource    = {dblp computer science bibliography, https://dblp.org}
}

@inproceedings{DBLP:conf/nips/KeMFWCMYL17,
  author       = {Guolin Ke and
                  Qi Meng and
                  Thomas Finley and
                  Taifeng Wang and
                  Wei Chen and
                  Weidong Ma and
                  Qiwei Ye and
                  Tie{-}Yan Liu},
  editor       = {Isabelle Guyon and
                  Ulrike von Luxburg and
                  Samy Bengio and
                  Hanna M. Wallach and
                  Rob Fergus and
                  S. V. N. Vishwanathan and
                  Roman Garnett},
  title        = {LightGBM: {A} Highly Efficient Gradient Boosting Decision Tree},
  booktitle    = {Advances in Neural Information Processing Systems 30: Annual Conference
                  on Neural Information Processing Systems 2017, December 4-9, 2017,
                  Long Beach, CA, {USA}},
  pages        = {3146--3154},
  year         = {2017},
  timestamp    = {Thu, 21 Jan 2021 15:15:21 +0100},
  biburl       = {https://dblp.org/rec/conf/nips/KeMFWCMYL17.bib},
  bibsource    = {dblp computer science bibliography, https://dblp.org}
}

@inproceedings{DBLP:conf/kdd/ChenG16,
  author       = {Tianqi Chen and
                  Carlos Guestrin},
  editor       = {Balaji Krishnapuram and
                  Mohak Shah and
                  Alexander J. Smola and
                  Charu C. Aggarwal and
                  Dou Shen and
                  Rajeev Rastogi},
  title        = {XGBoost: {A} Scalable Tree Boosting System},
  booktitle    = {Proceedings of the 22nd {ACM} {SIGKDD} International Conference on
                  Knowledge Discovery and Data Mining, San Francisco, CA, USA, August
                  13-17, 2016},
  pages        = {785--794},
  publisher    = {{ACM}},
  year         = {2016},
  url          = {https://doi.org/10.1145/2939672.2939785},
  doi          = {10.1145/2939672.2939785},
  timestamp    = {Sun, 02 Nov 2025 21:27:16 +0100},
  biburl       = {https://dblp.org/rec/conf/kdd/ChenG16.bib},
  bibsource    = {dblp computer science bibliography, https://dblp.org}
}

@inproceedings{DBLP:conf/icml/0001SDB23,
  author       = {He Zhao and
                  Ke Sun and
                  Amir Dezfouli and
                  Edwin V. Bonilla},
  editor       = {Andreas Krause and
                  Emma Brunskill and
                  Kyunghyun Cho and
                  Barbara Engelhardt and
                  Sivan Sabato and
                  Jonathan Scarlett},
  title        = {Transformed Distribution Matching for Missing Value Imputation},
  booktitle    = {International Conference on Machine Learning, {ICML} 2023, 23-29 July
                  2023, Honolulu, Hawaii, {USA}},
  series       = {Proceedings of Machine Learning Research},
  volume       = {202},
  pages        = {42159--42186},
  publisher    = {{PMLR}},
  year         = {2023},
  url          = {https://proceedings.mlr.press/v202/zhao23h.html},
  timestamp    = {Tue, 13 Feb 2024 07:27:50 +0100},
  biburl       = {https://dblp.org/rec/conf/icml/0001SDB23.bib},
  bibsource    = {dblp computer science bibliography, https://dblp.org}
}
\bibliographystyle{icml2026}
 \clearpage

\appendix

\makeatletter
\let\addcontentsline\icml@origaddcontentsline
\makeatother

\etocsettocstyle{\noindent\textbf{Appendix Contents}\par\medskip}{}
\begingroup
  \etocsetnexttocdepth{subsection}
  \tableofcontents
\endgroup
\newpage

\section{Related Work}\label{sec:related_work}

\subsection{Structural Missingness}
While classic MCAR/MAR/ MNAR definitions \cite{rubin1976inference} cover probabilistic missingness, structural missingness deals with logically undefined values. Recent works have modeled this using graph-based approaches or separate categories, but often lack a unified Bayesian treatment. \cite{mohan2013missing} proposes and proved that dealing with missing data is a causal inference problem in a unified identification framework. However, due to lack of integration of methods and tools, models with simplified assumptions without considering selection bias caused by not explicitly differentiating $P(x|M=0)$ and $P(x|M=1)$ still dominate the field with plausible performance~\cite{muzellec2020missing, ot4sl, zhang2025diffputer}. Though the structure is not identifiable in general without extra assumptions, PFNs can offer an ``almost-free-lunch'' in terms of \emph{efficient amortized inference within an assumed SCM prior}: when the prior renders the relevant posteriors identifiable \emph{in-model} (Remark~\ref{rem:mnar-ident}), PFNs can approximate these posteriors and enable practical inference, rather than resolving fundamental MNAR non-identifiability without assumptions.

\cite{mitra2023learning} propose a comprehensive partition or break down of existing researches, which divides them into (1) making definitions for potential valuable/redundant parts and unignorable bias, such as \cite{rubin1976inference} first provide types that missing data can be classified into, \cite{muzellec2020missing} puts stress on types of missingness that cause unignorability bias.

The definitions provide a framework to (2) collect data and build models, such as KNN~\cite{pujianto2019k}, Gaussian mixture based~\cite{garcia2010pattern}, and Bayesian methods~\cite{ma2018bayesian}, for missing data imputation. Leveraging advanced mathematical frameworks that realize (1), \cite{muzellec2020missing, ot4sl} propose using optimal transport to measure the distance between the observed and missing data, and generative models such as \cite{zhang2025diffputer} propose using diffusion models to generate missing data. 

The models provide tools to (3) make inference to find out truly valuable information to predict and make decisions with the models~\cite{pinkard2025missing}, and most fundamentally (4) find physical rules and causal facts under general assumptions and minimal constraints~\cite{pearl2009causality}, which brings us back to (1) for constructing more effective modeling. 

By considering this framework, we firstly find it more effective to construct missing mechanisms randomly, and fit a powerful regressor to make nearly Bayesian optimal inference on these missing data, which provides almost free lunch for merging the four parts of the loop in one inference.

\subsection{Prior-Fitted Networks}
PFNs \cite{muller2021transformers, muller2025future} and TabPFN \cite{hollmann2022tabpfn} demonstrate the power of learning Bayesian inference on tabular data. These methods have shown state-of-the-art performance on many small or medium-sized datasets since TabPFN-v2~\cite{hollmann2025accurate}. As concerns about scalability to larger datasets are raised, TabICL~\cite{qu2025tabicl} proposes a distribution-aware feature embedding method to improve scalability, and BETA~\cite{liu2025tabpfn} adapts a lightweight encoder to align TabPFN to large-scale high-dimensional tabular data. As shown in the technical report for TabPFN-2.5~\cite{grinsztajn2025tabpfn}, it achieves dominant performance (87\% win rate) on tabular datasets up to 100K samples and 2K features. PFN-based methods also have differentiable properties that can be leveraged to produce feature-importance measures, partial dependence~\cite{rundel2024interpretable}, and high-dimensional Bayesian optimization~\cite{muller2023pfns4bo}, and show promising results in settings requiring causal mechanisms, such as causal effect estimation~\cite{balazadeh2025causalpfn} and causal discovery~\cite{robertson2025dopfn}.

\subsection{Generative Imputation}
In tabular imputation, we aim to sample from a conditional distribution of missing entries given observed entries and a missingness mask, under heterogeneous feature types (continuous, discrete, categorical) and strong cross-feature constraints; in many practical settings this conditional mapping is close to deterministic (e.g., structurally constrained columns and rule-like relations). 

Classical deep generative approaches for tabular completion include adversarial and VAE-style imputers such as GAIN~\cite{yoon2018gain} and VAE-based models~\cite{pereira2020vae}, while more recent work leverages diffusion-style denoising for tabular imputation (e.g., Diffputer~\cite{zhang2025diffputer}) and other modern generative formulations~\cite{jolicoeur2024generating}. 

Flow Matching (FM)~\cite{lipman2023flow} provides an alternative ODE/SDE-based route: under the stochastic interpolant view, flows and diffusions can be unified~\cite{albergo2025stochastic}, but FM's inductive bias is often better aligned with deterministic, physics-like constraints because it directly parameterizes a velocity field rather than a denoiser~\cite{kerrigan2023functional}. In addition, FM can learn near-straight transport paths related to optimal transport~\cite{lipman2023flow}, whereas diffusion trajectories are typically more curved; such curvature can amplify numerical integration error~\cite{lee2023minimizing}, making straighter paths particularly attractive for efficient, low-error deterministic imputation in tabular tasks.


\begin{table*}[t]
\centering
\scriptsize
\setlength{\tabcolsep}{4pt}
\renewcommand{\arraystretch}{0.95}
\caption{Imputation benchmark datasets ($N=15$ paper set). $N$: instances; $D$: features.}
\label{tab:imputation-dataset-sizes}
\resizebox{\textwidth}{!}{%
\begin{tabular}{llllllllllllllll}
\toprule
 & bike & blood & climate & compression & credit & diabetes & gesture & ionosphere & iris & obesity & raisin & spam & wine red & wine white & yacht \\
\midrule
$N$ & 8760 & 748 & 540 & 1030 & 690 & 442 & 9522 & 351 & 150 & 2111 & 900 & 4600 & 1599 & 4898 & 308 \\
$D$ & 13 & 5 & 21 & 9 & 16 & 10 & 32 & 35 & 4 & 17 & 8 & 58 & 12 & 12 & 7 \\
\bottomrule
\end{tabular}%
}
\end{table*}


\begin{table}[t]
\centering
\scriptsize
\setlength{\tabcolsep}{3pt}
\renewcommand{\arraystretch}{0.95}
\caption{Dataset summary for classification benchmarks. $N$: instances; $D$: features; Missing \#: missing entries; Missing \%: missing rate. low-missingness group (missing rate $\le 10\%$).}
\label{tab:dataset-summary-0-10}
\resizebox{0.48\textwidth}{!}{%
\begin{tabular}{lrrrrr}
\toprule
Dataset & ID & N & D & Missing \# & Missing \% \\
\midrule
Breast Cancer & 13 & 286 & 9 & 9 & 0.35\% \\
Breast W & 15 & 699 & 10 & 16 & 0.23\% \\
MUSHROOM & 24 & 8124 & 23 & 2480 & 1.33\% \\
Credit Approval & 29 & 690 & 16 & 67 & 0.61\% \\
Dermatology & 35 & 366 & 35 & 8 & 0.06\% \\
Sick & 38 & 3772 & 30 & 6064 & 5.36\% \\
Heart C & 49 & 303 & 14 & 7 & 0.17\% \\
Hepatitis & 55 & 155 & 19 & 167 & 5.67\% \\
Vote & 56 & 435 & 16 & 392 & 5.63\% \\
Lung Cancer & 163 & 32 & 57 & 5 & 0.27\% \\
Eucalyptus & 188 & 736 & 20 & 448 & 3.04\% \\
Cleveland & 194 & 303 & 14 & 6 & 0.14\% \\
Irish & 451 & 500 & 6 & 32 & 1.07\% \\
Analcatdata Broadwaymult & 452 & 285 & 7 & 27 & 1.35\% \\
Biomed & 481 & 209 & 8 & 15 & 0.90\% \\
Pharynx & 738 & 195 & 10 & 2 & 0.10\% \\
Cleveland & 786 & 303 & 13 & 6 & 0.15\% \\
Cholesterol & 798 & 303 & 13 & 6 & 0.15\% \\
Pbcseq & 802 & 1945 & 18 & 1133 & 3.24\% \\
AutoMpg & 831 & 398 & 7 & 6 & 0.22\% \\
Kdd El Nino Small & 839 & 782 & 8 & 466 & 7.45\% \\
AutoHorse & 840 & 205 & 25 & 57 & 1.11\% \\
BreastTumor & 844 & 286 & 9 & 9 & 0.35\% \\
Analcatdata Gsssexsurvey & 852 & 159 & 9 & 6 & 0.42\% \\
Chscase Whale & 939 & 228 & 8 & 20 & 1.10\% \\
Analcatdata Halloffame & 966 & 1340 & 16 & 20 & 0.09\% \\
Analcatdata Birthday & 968 & 365 & 3 & 30 & 2.74\% \\
Analcatdata Draft & 984 & 366 & 4 & 1 & 0.07\% \\
Cylinder Bands & 6332 & 540 & 40 & 999 & 4.62\% \\
SpeedDating & 40536 & 8378 & 121 & 18372 & 1.81\% \\
MiceProtein & 40966 & 1080 & 82 & 1396 & 1.58\% \\
DiabeticMellitus & 41430 & 281 & 97 & 2 & 0.01\% \\
Regime Alimentaire & 42172 & 202 & 19 & 17 & 0.44\% \\
\bottomrule
\end{tabular}%
}
\end{table}

\begin{table}[t]
\centering
\scriptsize
\setlength{\tabcolsep}{3pt}
\renewcommand{\arraystretch}{0.95}
\caption{Dataset summary for classification benchmarks. $N$: instances; $D$: features; Missing \#: missing entries; Missing \%: missing rate. high-missingness group (missing rate $> 10\%$).}
\label{tab:dataset-summary-10plus}
\resizebox{0.48\textwidth}{!}{%
\begin{tabular}{lrrrrr}
\toprule
Dataset & ID & N & D & Missing \# & Missing \% \\
\midrule
Labor & 4 & 57 & 17 & 326 & 33.64\% \\
Colic & 25 & 368 & 26 & 1927 & 20.14\% \\
Colic & 27 & 368 & 23 & 1927 & 22.77\% \\
Heart H & 51 & 294 & 14 & 782 & 19.00\% \\
Analcatdata Reviewer & 460 & 379 & 7 & 1277 & 48.13\% \\
Pbc & 524 & 418 & 20 & 1033 & 12.36\% \\
Pbc & 810 & 418 & 18 & 1239 & 16.47\% \\
Colleges Usnews & 930 & 1302 & 33 & 7830 & 18.22\% \\
Dresses Sales & 23381 & 500 & 13 & 835 & 12.85\% \\
Titanic & 42638 & 891 & 7 & 689 & 11.05\% \\
\bottomrule
\end{tabular}%
}
\end{table}

\section{Implementation Details}\label{sec:implementation}

\paragraph{Implementation of the (Second-Order) SCM prior (abstract view).}\label{sec:impl:scm}
We model data generation using a (second-order) structural causal model (SCM). An SCM is a tuple
\(\mathcal{M}=\langle \mathbf{X},\mathbf{U},\mathcal{F},P(\mathbf{U})\rangle\),
where $\mathbf{X}=(X_1,\dots,X_d)$ are endogenous variables, $\mathbf{U}=(U_1,\dots,U_d)$ are exogenous variables, and
\(\mathcal{F}=\{f_i\}_{i=1}^d\) are structural assignments.
In a \emph{second-order} SCM, we additionally randomize the causal graph $\mathcal{G}$ (equivalently parent sets $\mathrm{PA}(i)$) and the function parameters $\theta=\{\theta_i\}$, inducing a prior over mechanisms and structures.

\paragraph{Nodes (structural assignments).}\label{sec:impl:scm:nodes}
Each node $X_i$ is generated by a structural equation
\[
X_i \leftarrow f_i\!\left(X_{\mathrm{PA}(i)}, U_i;\theta_i\right),
\]
where $X_{\mathrm{PA}(i)}$ denotes the vector of parent variables.
We instantiate $f_i$ as a depth-$L$ feed-forward map (MLP) with elementwise nonlinearity $\phi$:
\begin{equation*}
\begin{aligned}
& h_i^{(0)}=\Pi_i\!\left(X_{\mathrm{PA}(i)}\right),\\
& h_i^{(\ell)}=\phi\!\left(W_i^{(\ell)}h_i^{(\ell-1)}+b_i^{(\ell)}\right)\ \ (\ell=1,\dots,L),\\
& X_i=g_i\!\left(h_i^{(L)}\right)+\sigma_i\,\varepsilon_i,
\end{aligned}
\end{equation*}
where $\Pi_i$ selects (and orders) the parent coordinates, and $\varepsilon_i$ is a standardized noise term.
This parameterization allows nonlinear, compositional effects along directed paths while keeping the SCM semantics explicit.

\paragraph{Edges (graph structure inside the parameterization).}\label{sec:impl:scm:edges}
The directed edge $X_j\to X_i$ is represented by whether $X_j$ is included in $\mathrm{PA}(i)$ and, operationally, by whether the corresponding input-to-hidden connections in $\Pi_i$ (or the first linear map) are active.
Sampling a random graph prior can be implemented by randomly selecting parent sets and/or randomly sparsifying input connections (e.g., via independent masking of weights), which induces a distribution over adjacency patterns while keeping the functional form fixed.
Nonlinearity $\phi$ governs how parent influences compose and interact, i.e., how edges combine beyond additive linear effects.

\paragraph{Exogenous variables (stochasticity and prior variability).}\label{sec:impl:scm:exogenous}
Exogenous variables $U_i$ are sampled i.i.d.\ from a simple base distribution (e.g., $U_i\sim\mathcal{N}(0,1)$), and enter the model either (i) as explicit inputs to $f_i$ or (ii) through an additive output noise term $\sigma_i\varepsilon_i$.
Randomizing $(\mathcal{G},\theta)$ further yields sample-to-sample mechanism variability, which is crucial for representing a rich prior over tabular-generating processes.

\paragraph{Concrete prior instantiation (as used in our code).}\label{sec:impl:scm:prior-concrete}
To make $P(\mathcal{G},\theta)$ reproducible, we use a fixed generator family where each mechanism is a depth-$L$ MLP with hidden width $H$ and additive Gaussian noise. Mechanism diversity (and effective sparsity) is induced by randomly masking weights, optional block-wise sparsification, and random rotations/selection of observed coordinates from a shared latent representation; for causal tasks we enforce $H\ge d_y+2d$ so that features, labels, and missingness scores can be sampled from the same representation. Exogenous causes are Gaussian, optionally with per-cause mean/scale randomization.

\paragraph{Prior over missingness mechanisms (randomized deterministic gates).}\label{sec:impl:scm:missingness-prior}
We generate masks via a randomized \emph{score-and-quantile} gate driven by a latent representation $z$ (a concatenation of hidden states from the SCM generator). A score network $g_\psi$ maps $z$ to per-feature scores; to induce heterogeneity, we draw a layer index $\ell_j$ uniformly for each feature $j$ and use the corresponding score $s_{ij}$. We then draw a per-feature quantile level $\alpha_j\sim\mathrm{Unif}[\alpha_{\min},\alpha_{\max}(t)]$, set the threshold $\tau_j$ to the empirical $\alpha_j$-quantile of $\{s_{ij}\}_i$, and define $M_{ij}=1[s_{ij}\le \tau_j]$; label dimensions are always forced observed. In our default configuration, $\alpha_{\min}=0.0$ and $\alpha_{\max}(t)$ is linearly warmed up from $0.3$ to $0.8$ over the first $1000$ optimization steps.

\paragraph{Concrete pre-training details (for reproducibility).}\label{sec:impl:pretraining}
\textbf{SCM prior (graph and mechanisms).}
Each synthetic task samples a nonlinear data-generating mechanism realized by a feed-forward network of depth $L\ge 2$, hidden width $H$, and an activation chosen from \{tanh, identity, ReLU\}. Mechanism diversity and effective graph sparsity are induced by (i) Bernoulli masking of weight matrices (with no masking in the first layer), (ii) optional block-wise sparsification, and (iii) random feature rotations and random selection/ordering of observed coordinates from a shared latent representation. Exogenous causes are sampled from a standard Gaussian, with optional per-cause mean/scale randomization; output noise is Gaussian with log-uniformly sampled scale. For classification tasks, we sample the number of classes uniformly from 2 to 10 and use a categorical-feature probability of $0.2$.

\textbf{Task size.}
We pre-train on tasks with feature dimension up to 100 and $n=1152$ samples per task (with evaluation positions at $0.95n$).

\textbf{Missingness logic.}
We use the score-and-quantile gate above, with a convolutional score network of two layers (kernel size 7) and tanh nonlinearity; missingness rates are controlled by $\alpha_{\min}=0.0$ and a warmed-up $\alpha_{\max}(t)$ (from $0.3$ to $0.8$ over 1000 steps).

\textbf{Model architectures.}
The PFN is a Transformer encoder with model width 512, 12 layers, 4 attention heads, feed-forward width 1024, dropout 0, and GELU activations; we embed both feature values and the binary mask and add them to form token embeddings. The conditional flow-matching head is a lightweight 2-layer Transformer that predicts the velocity field for masked entries using a sinusoidal time embedding; during training, the loss is computed only on masked entries, while the state is noisy everywhere.

\textbf{Optimization.}
We train with Adam at learning rate $3\times 10^{-5}$, cosine schedule with 20 warmup epochs and minimum learning rate $10^{-8}$, no weight decay, and mixed precision; we also use a self-distillation auxiliary loss with weight 0.1. Synthetic tasks are generated on-the-fly from the prior; the default batch size is 64 tasks. For reference, we pre-train \textsc{PFN-Flow} on a single NVIDIA RTX PRO 6000 GPU for 120 hours.

\subsection{Missing Data Classification Experiments}\label{sec:impl:cls}

\paragraph{Baselines.}\label{sec:impl:cls:baselines}
We compare strong, widely competitive tabular predictors, \textsc{XGBoost}~\cite{DBLP:conf/kdd/ChenG16}, \textsc{LightGBM}~\cite{DBLP:conf/nips/KeMFWCMYL17}, and \textsc{CatBoost}~\cite{DBLP:conf/nips/ProkhorenkovaGV18}. We choose these baselines not merely because they are ``standard'', but because they are consistently high-performing on real-world tabular benchmarks and remain strong when combined with mature missing-data pipelines. Concretely, we evaluate missing-data handling strategies spanning plug-in imputers such as MissForest~\cite{Stekhoven2011MissForestN}, AutoML/statistical imputers such as HyperImpute~\cite{jarrett2022hyperimpute}, modern learned imputers including \textsc{Remasker}~\cite{du2024remasker}, \textsc{GRAPE}~\cite{you2020handling}, \textsc{IGRM}~\cite{zhong2023data}, \textsc{MCFlow}~\cite{richardson2020mcflow}, and Transformed Distribution Matching (TDM)~\cite{DBLP:conf/icml/0001SDB23}, as well as OT-family methods including \textsc{OT}~\cite{muzellec2020missing}, \textsc{OTM}~\cite{ot4sl}, and masked OT variants (MOT). For MOT, we report several practical variants that share the same OT-style objective but differ in the conditional model class used in the refinement step (e.g., linear vs.\ MLP), with a Sinkhorn-style OT baseline as a reference.

\paragraph{Dataset selection and benchmark name.}\label{sec:impl:cls:datasets}
To construct a reliable benchmark with real missingness, we filter OpenML tabular classification datasets using two simple rules:
(i) the empirical missing rate is strictly positive (missing rate $>0$), and
(ii) the dataset falls within the practical operating range of TabPFN (e.g., feature dimensionality and sample size constraints for feasible inference).
We call the resulting benchmark the \emph{OpenML Missingness Benchmark (OMB)}.

We summarize the classification datasets in Tables~\ref{tab:dataset-summary-0-10} and~\ref{tab:dataset-summary-10plus}, which are selected from OpenML and grouped by empirical missing rate, including dataset size ($N$), inferred feature dimension ($D$), and missingness statistics (total missing entries and overall missing rate).

\paragraph{Full method tables (all methods).}\label{sec:impl:cls:runtime}
For the main paper (Sec.~\ref{sec:exp:cls-auc}), we report a compact Top-15 summary for each missingness group in
Tables~\ref{tab:auc-method-summary-0-10} and~\ref{tab:auc-method-summary-10plus}.
For completeness, we provide the corresponding full method summaries (all evaluated methods) in
Tables~\ref{tab:auc-method-summary-0-10-all} and~\ref{tab:auc-method-summary-10plus-all} below. We report end-to-end wall-clock time per method as measured by our runtime benchmark. Concretely, for each method we measure the time to obtain predictions on a fixed $n=1000, d=50$ input, including fitting the imputer (if any) on the training split and applying it to train/test, plus fitting the classifier and running inference on the test split; PFN-family predictors require no target-dataset gradient training and are timed as amortized forward-pass inference.

\subsection{Missing Data Imputation Experiments}\label{sec:impl:imp}

\paragraph{Evaluation protocol.}\label{sec:impl:imp:protocol}
All methods are evaluated under exactly the same splits and masks through the unified baseline runners; for robustness, we report results on the intersection of datasets shared by all methods (here $N=15$).
We summarize the imputation benchmark datasets in Table~\ref{tab:imputation-dataset-sizes} (dataset size $N$ and feature dimension $D$).

\paragraph{Scalability to Larger Datasets.}
Empirically, the original \textsc{TabPFN} (Nature) recipe tends to dominate mainly in the small-data regime (roughly $N<3{,}000$ instances), while follow-up lines such as \textsc{TabICL} and \textsc{TabPFN-v2.5} introduce additional training/inference techniques that improve performance beyond that range and often dominate on larger datasets.
Since our work does not adopt those large-$N$ engineering tricks, we focus our imputation evaluation on real-world datasets with $N<10{,}000$, which is sufficient for validating the missingness-inference component and the end-to-end uncertainty propagation studied in this paper.

\paragraph{Baselines.}\label{sec:impl:imp:baselines}
We compare against representative imputation baselines spanning masked autoencoding, graph-based reconstruction, generative flow models, and OT-based methods: \textsc{Remasker}~\cite{du2024remasker}, \textsc{GRAPE}~\cite{you2020handling}, \textsc{IGRM}~\cite{zhong2023data}, \textsc{MCFlow}~\cite{richardson2020mcflow}, \textsc{HyperImpute}~\cite{jarrett2022hyperimpute}, \textsc{MOT}~\cite{muzellec2020missing}, and \textsc{OTM}~\cite{ot4sl}.

\paragraph{MNAR\_logistic\_M2M (logistic MNAR with $M\!\to\!M$ propagation).}\label{sec:impl:imp:mnar}
Let $X\in\mathbb{R}^{n\times d}$ be the (fully observed) feature matrix before masking.
We sample a subset of columns $S$ as \emph{logistic inputs} with $|S|=d_{\text{in}}=\max(\lfloor qd\rfloor,1)$ (with $q=0.3$ when $p\le 0.3$ and $q=0.1$ otherwise), and set $T=\{1,\dots,d\}\setminus S$.
We first mask the input columns $S$ independently with probability $p$ (MCAR on $S$), yielding an input mask $M_{i,S}$.
Then for each $j\in T$, we draw the target mask using a logistic model that depends on both $X$ and the already-sampled missingness in $S$:
\begin{equation}
    \begin{aligned}
    &P(M_{ij}=1\mid X, M_{i,S}) = \\
    &\sigma\!\Big(\big(X_{i,S}\odot M_{i,S}\big)^\top w_j+\big(\mathbf{1}-M_{i,S}\big)^\top v_j+b_j\Big),    
    \end{aligned}
\end{equation}
where $v_j$ linearly transforms the missingness indicators (thus capturing $M\!\to\!M$ propagation), and $b_j$ is chosen (via bisection) to match the target missing rate $p$.

\begin{table}[H]
\centering
\scriptsize
\setlength{\tabcolsep}{3pt}
\renewcommand{\arraystretch}{0.95}
\caption{Method-level \textbf{AUC} summary: mean/std, runtime, and average ranks. Missing rate $\le 10\%$. AUC is reported as mean$\pm$std across datasets. \textbf{Rank} denotes the average per-dataset rank. \textbf{Time} is reported as mean$\pm$std(seconds) for time cost per 1000 instance with 50 variables. Datasets: 33. Showing all methods.}
\label{tab:auc-method-summary-0-10-all}
\resizebox{0.48\textwidth}{!}{%
\begin{tabular}{lrrr}
\toprule
\multicolumn{1}{c}{Method} & \multicolumn{1}{c}{AUC} & \multicolumn{1}{c}{Rank} & \multicolumn{1}{c}{Time (s)} \\
\midrule
PFN-Flow & $0.8694\pm0.1448$ & 16.83 & $0.061\pm0.003$ \\
CatBoost + TDM & $0.8664\pm0.1512$ & 20.77 & $160.830\pm8.329$ \\
PFN-NSM & $0.8663\pm0.1499$ & 18.70 & $0.061\pm0.003$ \\
CatBoost + GRAPE & $0.8660\pm0.1521$ & 21.65 & $43.530\pm0.294$ \\
CatBoost + IGRM & $0.8659\pm0.1513$ & 21.44 & $44.152\pm0.241$ \\
CatBoost + OTM-SK & $0.8657\pm0.1514$ & 20.70 & $82.744\pm2.056$ \\
CatBoost + OTM-RR & $0.8656\pm0.1504$ & 22.02 & $42.211\pm0.211$ \\
CatBoost + Remasker & $0.8656\pm0.1501$ & 21.32 & $551.288\pm9.349$ \\
CatBoost + MOT-MLP & $0.8646\pm0.1527$ & 22.94 & $165.097\pm0.798$ \\
CatBoost + MCFlow & $0.8646\pm0.1522$ & 20.91 & $65.533\pm0.612$ \\
CatBoost + HyperImpute & $0.8645\pm0.1522$ & 23.11 & $207.728\pm0.543$ \\
CatBoost (Raw) & $0.8642\pm0.1523$ & 22.17 & $37.031\pm0.186$ \\
CatBoost + MissForest & $0.8641\pm0.1526$ & 23.14 & $63.092\pm1.692$ \\
CatBoost + MOT-LIN & $0.8639\pm0.1527$ & 24.15 & $164.264\pm3.783$ \\
XGBoost + GRAPE & $0.8638\pm0.1530$ & 24.29 & $11.187\pm0.229$ \\
XGBoost + IGRM & $0.8636\pm0.1531$ & 23.48 & $11.809\pm0.156$ \\
XGBoost + MCFlow & $0.8634\pm0.1523$ & 22.80 & $33.190\pm0.584$ \\
XGBoost (Raw) & $0.8634\pm0.1514$ & 23.05 & $4.689\pm0.022$ \\
XGBoost + TDM & $0.8633\pm0.1524$ & 24.98 & $128.488\pm8.327$ \\
XGBoost + Remasker & $0.8631\pm0.1513$ & 24.92 & $518.946\pm9.348$ \\
XGBoost + OTM-SK & $0.8627\pm0.1523$ & 23.77 & $50.401\pm2.048$ \\
XGBoost + HyperImpute & $0.8626\pm0.1523$ & 25.12 & $175.385\pm0.510$ \\
XGBoost + MOT-MLP & $0.8622\pm0.1532$ & 25.17 & $132.754\pm0.776$ \\
XGBoost + OTM-RR & $0.8621\pm0.1522$ & 27.15 & $9.869\pm0.103$ \\
XGBoost + MissForest & $0.8614\pm0.1534$ & 28.29 & $30.749\pm1.682$ \\
XGBoost + MOT-LIN & $0.8611\pm0.1536$ & 26.62 & $131.921\pm3.778$ \\
LightGBM + GRAPE & $0.8576\pm0.1585$ & 25.88 & $14.986\pm0.240$ \\
TabPFN + MCFlow & $0.8574\pm0.1592$ & 26.33 & $28.563\pm0.583$ \\
LightGBM + TDM & $0.8574\pm0.1585$ & 27.44 & $132.287\pm8.328$ \\
TabPFN + Remasker & $0.8574\pm0.1599$ & 26.67 & $514.319\pm9.348$ \\
LightGBM + IGRM & $0.8573\pm0.1584$ & 26.47 & $15.608\pm0.172$ \\
TabPFN + HyperImpute & $0.8571\pm0.1600$ & 26.94 & $170.758\pm0.510$ \\
TabPFN + MOT-MLP & $0.8571\pm0.1590$ & 26.33 & $128.127\pm0.776$ \\
TabPFN + TDM & $0.8570\pm0.1597$ & 27.64 & $123.860\pm8.327$ \\
TabPFN + MissForest & $0.8561\pm0.1613$ & 27.64 & $26.122\pm1.681$ \\
LightGBM + Remasker & $0.8559\pm0.1585$ & 29.12 & $522.745\pm9.348$ \\
LightGBM + MCFlow & $0.8557\pm0.1601$ & 27.17 & $36.990\pm0.588$ \\
LightGBM + HyperImpute & $0.8556\pm0.1589$ & 29.09 & $179.184\pm0.515$ \\
LightGBM + OTM-RR & $0.8556\pm0.1586$ & 29.91 & $13.668\pm0.126$ \\
LightGBM + OTM-SK & $0.8555\pm0.1596$ & 28.27 & $54.200\pm2.049$ \\
TabPFN + OTM-SK & $0.8554\pm0.1616$ & 28.74 & $45.774\pm2.048$ \\
TabPFN + GRAPE & $0.8547\pm0.1598$ & 27.38 & $6.560\pm0.228$ \\
LightGBM + MOT-MLP & $0.8545\pm0.1602$ & 29.44 & $136.553\pm0.779$ \\
TabPFN (Raw) & $0.8545\pm0.1610$ & 26.73 & $0.061\pm0.003$ \\
LightGBM + MissForest & $0.8541\pm0.1596$ & 32.41 & $34.548\pm1.683$ \\
LightGBM (Raw) & $0.8537\pm0.1626$ & 28.74 & $8.488\pm0.075$ \\
LightGBM + MOT-LIN & $0.8535\pm0.1609$ & 30.79 & $135.721\pm3.779$ \\
TabPFN + OTM-RR & $0.8532\pm0.1602$ & 29.44 & $5.241\pm0.101$ \\
TabPFN + IGRM & $0.8530\pm0.1593$ & 27.71 & $7.182\pm0.154$ \\
TabPFN + MOT-LIN & $0.8530\pm0.1580$ & 29.24 & $127.294\pm3.778$ \\
\bottomrule
\end{tabular}%
}
\end{table}

\paragraph{Metric and aggregation (MAE, AVG, Avg Rank).}\label{sec:impl:imp:metrics}
We report \emph{out-of-sample MAE} on the masked entries of the test split (lower is better).
For each dataset/method, we compute mean$\pm$std across the $K$ MNAR mask realizations.
To summarize performance across datasets, we report:
(i) \textbf{AVG}: for each mask, we average MAE across the $N$ datasets and then compute mean$\pm$std over the $K$ masks; and
(ii) \textbf{Avg Rank}: within each dataset, methods are ranked by their mask-averaged MAE (lower is better; ties receive average rank), and we average ranks across datasets.
Avg Rank complements absolute MAE by emphasizing cross-dataset consistency under heterogeneous feature distributions and MNAR perturbations.

\begin{table}[H]
\centering
\scriptsize
\setlength{\tabcolsep}{3pt}
\renewcommand{\arraystretch}{0.95}
\caption{Missing rate $> 10\%$. AUC is reported as mean$\pm$std across datasets. \textbf{Rank} denotes the average per-dataset rank. \textbf{Time} is reported as mean$\pm$std(seconds) for time cost per 1000 instance with 50 variables. Datasets: 10. Showing all methods.}
\label{tab:auc-method-summary-10plus-all}
\resizebox{0.48\textwidth}{!}{%
\begin{tabular}{lrrr}
\toprule
\multicolumn{1}{c}{Method} & \multicolumn{1}{c}{AUC} & \multicolumn{1}{c}{Rank} & \multicolumn{1}{c}{Time (s)} \\
\midrule
PFN-Flow & $0.8172\pm0.1099$ & 11.40 & $0.061\pm0.003$ \\
XGBoost + MOT-LIN & $0.8111\pm0.1217$ & 12.00 & $131.921\pm3.778$ \\
XGBoost + MOT-MLP & $0.8073\pm0.1216$ & 17.60 & $132.754\pm0.776$ \\
PFN-NSM & $0.8065\pm0.1219$ & 16.20 & $0.061\pm0.003$ \\
XGBoost + IGRM & $0.8048\pm0.1127$ & 17.90 & $11.809\pm0.156$ \\
XGBoost (Raw) & $0.8047\pm0.1117$ & 15.50 & $4.689\pm0.022$ \\
CatBoost + MOT-LIN & $0.8042\pm0.1228$ & 22.40 & $164.264\pm3.783$ \\
CatBoost + MOT-MLP & $0.8039\pm0.1249$ & 20.30 & $165.097\pm0.798$ \\
CatBoost (Raw) & $0.8034\pm0.1213$ & 23.70 & $37.031\pm0.186$ \\
XGBoost + GRAPE & $0.8032\pm0.1109$ & 18.50 & $11.187\pm0.229$ \\
XGBoost + OTM-RR & $0.8028\pm0.1206$ & 21.40 & $9.869\pm0.103$ \\
TabPFN + GRAPE & $0.8016\pm0.1301$ & 24.90 & $6.560\pm0.228$ \\
XGBoost + Remasker & $0.8014\pm0.1202$ & 25.70 & $518.946\pm9.348$ \\
TabPFN + MOT-LIN & $0.8011\pm0.1290$ & 23.30 & $127.294\pm3.778$ \\
TabPFN + MOT-MLP & $0.8006\pm0.1311$ & 23.50 & $128.127\pm0.776$ \\
TabPFN + TDM & $0.8006\pm0.1294$ & 24.80 & $123.860\pm8.327$ \\
CatBoost + Remasker & $0.8005\pm0.1298$ & 27.80 & $551.288\pm9.349$ \\
TabPFN + IGRM & $0.8004\pm0.1283$ & 24.90 & $7.182\pm0.154$ \\
TabPFN + MissForest & $0.8002\pm0.1272$ & 23.20 & $26.122\pm1.681$ \\
TabPFN + HyperImpute & $0.8001\pm0.1285$ & 23.10 & $170.758\pm0.510$ \\
XGBoost + OTM-SK & $0.8000\pm0.1118$ & 26.90 & $50.401\pm2.048$ \\
CatBoost + OTM-SK & $0.8000\pm0.1180$ & 26.10 & $82.744\pm2.056$ \\
TabPFN + Remasker & $0.7998\pm0.1299$ & 26.40 & $514.319\pm9.348$ \\
CatBoost + GRAPE & $0.7997\pm0.1188$ & 26.50 & $43.530\pm0.294$ \\
XGBoost + MissForest & $0.7992\pm0.1190$ & 22.10 & $30.749\pm1.682$ \\
CatBoost + IGRM & $0.7986\pm0.1155$ & 25.40 & $44.152\pm0.241$ \\
TabPFN (Raw) & $0.7985\pm0.1269$ & 23.50 & $0.061\pm0.003$ \\
TabPFN + MCFlow & $0.7983\pm0.1214$ & 25.80 & $28.563\pm0.583$ \\
XGBoost + HyperImpute & $0.7982\pm0.1281$ & 23.50 & $175.385\pm0.510$ \\
TabPFN + OTM-RR & $0.7972\pm0.1272$ & 28.20 & $5.241\pm0.101$ \\
CatBoost + TDM & $0.7970\pm0.1187$ & 28.90 & $160.830\pm8.329$ \\
XGBoost + TDM & $0.7970\pm0.1141$ & 26.40 & $128.488\pm8.327$ \\
CatBoost + HyperImpute & $0.7964\pm0.1331$ & 28.20 & $207.728\pm0.543$ \\
TabPFN + OTM-SK & $0.7955\pm0.1315$ & 28.90 & $45.774\pm2.048$ \\
CatBoost + MissForest & $0.7947\pm0.1236$ & 29.40 & $63.092\pm1.692$ \\
CatBoost + OTM-RR & $0.7945\pm0.1253$ & 31.70 & $42.211\pm0.211$ \\
XGBoost + MCFlow & $0.7935\pm0.1187$ & 21.40 & $33.190\pm0.584$ \\
CatBoost + MCFlow & $0.7901\pm0.1161$ & 25.20 & $65.533\pm0.612$ \\
LightGBM + MOT-LIN & $0.7574\pm0.1384$ & 26.85 & $135.721\pm3.779$ \\
LightGBM (Raw) & $0.7565\pm0.1364$ & 26.25 & $8.488\pm0.075$ \\
LightGBM + GRAPE & $0.7561\pm0.1351$ & 27.65 & $14.986\pm0.240$ \\
LightGBM + IGRM & $0.7549\pm0.1345$ & 27.75 & $15.608\pm0.172$ \\
LightGBM + MOT-MLP & $0.7537\pm0.1363$ & 30.55 & $136.553\pm0.779$ \\
LightGBM + OTM-RR & $0.7523\pm0.1374$ & 33.35 & $13.668\pm0.126$ \\
LightGBM + MCFlow & $0.7509\pm0.1409$ & 29.35 & $36.990\pm0.588$ \\
LightGBM + MissForest & $0.7505\pm0.1337$ & 33.65 & $34.548\pm1.683$ \\
LightGBM + TDM & $0.7480\pm0.1336$ & 36.85 & $132.287\pm8.328$ \\
LightGBM + Remasker & $0.7477\pm0.1338$ & 37.55 & $522.745\pm9.348$ \\
LightGBM + HyperImpute & $0.7477\pm0.1383$ & 34.35 & $179.184\pm0.515$ \\
LightGBM + OTM-SK & $0.7472\pm0.1297$ & 38.25 & $54.200\pm2.049$ \\
\bottomrule
\end{tabular}%
}
\end{table}

\paragraph{Protocol (data split + repeated MNAR masks).}\label{sec:impl:imp:mnar-protocol}
We follow the DiffPuter preprocessing pipeline: each dataset is split into train/test with a 70/30 ratio.
On the test split, we generate $K=10$ independently sampled MNAR masks with missing rate $p=0.3$, using the MNAR logistic mechanism \texttt{MNAR\_logistic\_M2M}.

\section{Theoretical Analysis}\label{app:theoretical-analysis}

\subsection{The Random Variables are Well-Defined}\label{pf:defxm}

\textbf{Theorem:} The objects $X_m$ and $X_m^c$, defined as projections dependent on a random mask $M$, are well-defined random variables (measurable functions).

\begin{proof}
    Let $(\Omega, \mathcal{F}, P)$ be the underlying probability space. Let $\mathbf{X}: \Omega \to \mathbb{R}^d$ and $M: \Omega \to \{0, 1\}^d$ be random variables. Let $S = \{0, 1\}^d$ denote the finite set of all possible masks.
    
    To establish that $X_m$ is a well-defined random variable, we must define a measurable space for its codomain. Since the dimension of $X_m$ depends on $M$, the codomain is the disjoint union space $\mathbb{V} = \bigsqcup_{m \in S} \mathbb{R}^{|m|}$, equipped with the $\sigma$-algebra generated by the Borel sets on each component. For any measurable set $B \subseteq \mathbb{V}$, the preimage is:

    \begin{equation}
    \begin{aligned}
    X_m^{-1}(B) = \bigcup_{m \in S} & \{ \omega \in \Omega \mid M(\omega) = m \} \\
    & \cap \{ \omega \in \Omega \mid \text{Proj}(m, \mathbf{X}(\omega)) \in B \cap \mathbb{R}^{|m|} \}.
    \end{aligned}
    \end{equation}
    For a fixed $m$, the projection $\text{Proj}(m, \cdot)$ is a continuous linear map, making $\text{Proj}(m, \mathbf{X})$ a random variable. Thus, the second set in the intersection is in $\mathcal{F}$. Since $M$ is a random variable, $\{M=m\} \in \mathcal{F}$. As $\mathcal{F}$ is closed under finite intersections and unions, $X_m^{-1}(B) \in \mathcal{F}$, proving $X_m$ is measurable.
    
    Similarly, $X_m^c$ maps to the product space $\mathbb{V}^c \times S$, where $\mathbb{V}^c = \bigsqcup_{m \in S} \mathbb{R}^{d-|m|}$. For any measurable set $C \times D \subseteq \mathbb{V}^c \times S$, the preimage decomposes as:
    \begin{equation}
    \begin{aligned}
    &(X_m^c)^{-1}(C \times D) = \\
    & \bigcup_{m \in D} \{ \omega \in \Omega \mid M(\omega) = m \} \\
    & \quad \quad  \cap \{ \omega \in \Omega \mid \text{Proj}(m^c, \mathbf{X}(\omega)) \in C \cap \mathbb{R}^{d-|m|} \}.
    \end{aligned}
    \end{equation}
    By the same logic, each term in the union is measurable. Therefore, $X_m^c$ is a well-defined random variable.
    \end{proof}

\subsection{Proof of Theorem~\ref{thm:pfn-risk}}\label{pf:pfn-risk}
\begin{proof}
        Fix $(\mathcal{M},D)$ and $x$. For any $R(\cdot)$ on labels,
        \begin{equation*}
            \begin{aligned}
            & \mathbb{E}_{y\sim P^\star(\cdot\mid x,D)}[-\log R(y)] \\
            & =
            H\big(P^\star(\cdot\mid x,D)\big) 
            +
            \mathrm{KL}\big(P^\star(\cdot\mid x,D)\,\|\,R(\cdot)\big),
            \end{aligned}
        \end{equation*}
        where $H(\cdot)$ is Shannon entropy. Taking $R(\cdot)=P_\phi(\cdot\mid x,D)$ and averaging over $(x,y,D)$ under $\Pi$ yields
        \begin{equation*}
            \begin{aligned}
       & \mathcal{R}(\phi) \\
        =
       & \mathbb{E}_{(\mathcal{M},D)\sim\Pi}\,\mathbb{E}_{x\sim P^\star(\cdot\mid D)}
        \Big[H\big(P^\star(\cdot\mid x,D)\big)\Big]
        + \\
       & \mathbb{E}_{(\mathcal{M},D)\sim\Pi}\,\mathbb{E}_{x\sim P^\star(\cdot\mid D)}
        \Big[\mathrm{KL}\big(P^\star(\cdot\mid x,D)\,\|\,P_\phi(\cdot\mid x,D)\big)\Big].
        \end{aligned}
        \end{equation*}
        The first term is independent of $\phi$, hence any minimizer of $\mathcal{R}(\phi)$ also minimizes the expected conditional KL, proving (i).
        If the minimum expected KL is zero (realizability), then any minimizer must satisfy $\mathrm{KL}(P^\star\|P_{\phi^\star})=0$ $\Pi$-a.s., hence $P_{\phi^\star}=P^\star$ $\Pi$-a.s., proving (ii). Statement (iii) is immediate from (i).
\end{proof}

\subsection{Assumptions and Proof of Theorem~\ref{thm:post-int}}\label{pf:post-int}

\begin{assumption}[Posterior approximation and predictor regularity]\label{assump:post-int}
    We assume:
    \begin{enumerate}
        \item[(A0)] $\mu$ and $\nu$ have finite first moments under $d$ (so $W_1$ is well-defined).
        \item[(A1)] The conditional complete-data target
        $
        h^\star(x_m) := \mathbb{E}\!\left[g(Y)\mid X_m=x_m, X_m^c, D_m^c\right]
        $
        is $L$-Lipschitz in $x_m$ under $d$.
        \item[(A2)] The posterior approximation satisfies
        $
        W_1(\mu,\nu) \le \varepsilon_{\text{post}}.
        $
        \item[(A3)] A predictor $\hat h_\phi$ (e.g., PFN) satisfies an integrated error bound under $\nu$:
        $
        \left|\mathbb{E}_{X_m\sim \nu}\!\left[\hat h_\phi(X_m)-h^\star(X_m)\right]\right|
        \le \varepsilon_{\text{pred}}.
        $
    \end{enumerate}
\end{assumption}

We provide a fully detailed proof of Theorem~\ref{thm:post-int}. Throughout this section, we fix $(X_m^c, D_m^c)$ and use the shorthand
$\mu := P^\star(\cdot \mid X_m^c, D_m^c)$ and $\nu := Q_\theta(\cdot \mid X_m^c, D_m^c)$ introduced in Assumption~\ref{assump:post-int}.

\begin{lemma}[Tower property representation]\label{lem:post-int-tower}
Let $h^\star(x_m):=\mathbb{E}\!\left[g(Y)\mid X_m=x_m, X_m^c, D_m^c\right]$. Then
\begin{equation*}
    \begin{aligned}
    T_g^\star(X_m^c, D_m^c)
    &= \mathbb{E}\!\left[g(Y)\mid X_m^c, D_m^c\right] \\
    &= \mathbb{E}_{X_m\sim \mu}\!\left[h^\star(X_m)\right].
    \end{aligned}
\end{equation*}
\end{lemma}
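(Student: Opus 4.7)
The plan is to derive the identity as a direct application of the tower (iterated-conditioning) property of conditional expectation, with the only subtlety being the variable-dimension nature of $X_m$. Throughout, we fix $(X_m^c,D_m^c)$, and all expectations are interpreted as (versions of) regular conditional expectations; we rely on the construction in Appendix~\ref{pf:defxm} to ensure $X_m$ is a bona fide random variable on the disjoint-union codomain so that the relevant conditional laws (and hence $\mu$) exist as regular conditional distributions.

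First, I would note that $g(Y)$ is integrable (implicitly assumed by the use of $\mathbb{E}[g(Y)\mid\cdot]$ in the statement of $T_g^\star$) and that the $\sigma$-algebra generated by $(X_m^c, D_m^c)$ is contained in the $\sigma$-algebra generated by $(X_m, X_m^c, D_m^c)$. By the tower property of conditional expectation applied to these nested $\sigma$-algebras,
\begin{equation*}
\mathbb{E}\!\left[g(Y)\mid X_m^c, D_m^c\right]
= \mathbb{E}\!\left[\,\mathbb{E}[g(Y)\mid X_m, X_m^c, D_m^c]\,\Big|\,X_m^c, D_m^c\right].
\end{equation*}
The inner conditional expectation is, by the definition of $h^\star$ given in Assumption~\ref{assump:post-int}(A1), equal to $h^\star(X_m)$ almost surely.

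Second, I would rewrite the outer conditional expectation as an integral against the regular conditional distribution of $X_m$ given $(X_m^c, D_m^c)$, which is exactly $\mu$. That is,
\begin{equation*}
\mathbb{E}\!\left[h^\star(X_m)\,\big|\,X_m^c, D_m^c\right]
= \int h^\star(x_m)\,\mu(dx_m)
= \mathbb{E}_{X_m\sim\mu}\!\left[h^\star(X_m)\right],
\end{equation*}
which combined with the previous display yields the claimed identity. The only measurability check needed is that $h^\star$ is measurable on the disjoint-union codomain of $X_m$; this follows from the joint measurability of $(x_m,X_m^c,D_m^c)\mapsto \mathbb{E}[g(Y)\mid X_m=x_m,X_m^c,D_m^c]$ via a standard regular-conditional-probability construction, again leveraging Appendix~\ref{pf:defxm}.

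The main obstacle I would flag is purely a measure-theoretic one rather than a computational one: $X_m$ does not live in a fixed Euclidean space, so one must justify the existence of the regular conditional distribution $\mu$ and the measurability of $h^\star$ on the disjoint-union space $\mathbb{V}=\bigsqcup_{m}\mathbb{R}^{|m|}$. Since the mask $M$ takes values in the finite set $\{0,1\}^d$, this reduces to a finite disjoint union of Polish (Borel) spaces, on each of which standard regular conditional probabilities exist; assembling them mask-by-mask gives the global $\mu$ needed for the integral representation. Once this bookkeeping is in place, the lemma is a one-line consequence of the tower property.
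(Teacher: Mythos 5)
Your proof is correct and follows essentially the same route as the paper's: apply the tower property to the nested $\sigma$-algebras, identify the inner conditional expectation with $h^\star(X_m)$ by definition, and recognize the outer conditional expectation as integration against the regular conditional law $\mu$. The extra measure-theoretic bookkeeping you supply (existence of $\mu$ and measurability of $h^\star$ on the finite disjoint union $\bigsqcup_{m}\mathbb{R}^{|m|}$, via Appendix~\ref{pf:defxm}) is a sound and welcome strengthening of details the paper leaves implicit, but it does not change the argument.
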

\begin{proof}
By the tower property of conditional expectation,
\begin{equation*}
    \begin{aligned}
    T_g^\star(X_m^c, D_m^c)
    &= \mathbb{E}\!\left[g(Y)\mid X_m^c, D_m^c\right] \\
    &= \mathbb{E}\!\left[\mathbb{E}\!\left[g(Y)\mid X_m, X_m^c, D_m^c\right]\ \middle|\ X_m^c, D_m^c\right].
    \end{aligned}
\end{equation*}

The inner conditional expectation equals $h^\star(X_m)$ by definition, and under the conditioning $(X_m^c, D_m^c)$ the random variable $X_m$ is distributed as $\mu$, hence the result.
\end{proof}

\begin{lemma}[Error decomposition]\label{lem:post-int-decomp}
With $\widehat T_g(X_m^c, D_m^c):=\mathbb{E}_{X_m\sim \nu}[\hat h_\phi(X_m)]$,
\[
\left|T_g^\star-\widehat T_g\right|
\le
\left|\mathbb{E}_{\mu}[h^\star]-\mathbb{E}_{\nu}[h^\star]\right|
\;+\;
\left|\mathbb{E}_{\nu}[h^\star-\hat h_\phi]\right|.
\]
\end{lemma}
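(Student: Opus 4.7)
The plan is to prove this as a textbook add-and-subtract decomposition, using the representation established in Lemma~\ref{lem:post-int-tower} as the only nontrivial ingredient. First I would invoke Lemma~\ref{lem:post-int-tower} to rewrite
\[
T_g^\star(X_m^c, D_m^c)=\mathbb{E}_{X_m\sim \mu}[h^\star(X_m)],
\]
so that both $T_g^\star$ and $\widehat T_g$ are expressed as integrals (against $\mu$ and $\nu$ respectively) of scalar functions on $\mathcal{X}_m$. This puts the two quantities on a common footing where the only differences are (a) the integrating measure ($\mu$ vs.\ $\nu$) and (b) the integrand ($h^\star$ vs.\ $\hat h_\phi$).

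Next I would insert the auxiliary quantity $\mathbb{E}_{X_m\sim \nu}[h^\star(X_m)]$, which swaps one difference at a time. Writing
\[
T_g^\star - \widehat T_g
= \bigl(\mathbb{E}_\mu[h^\star]-\mathbb{E}_\nu[h^\star]\bigr)
+ \bigl(\mathbb{E}_\nu[h^\star]-\mathbb{E}_\nu[\hat h_\phi]\bigr),
\]
and applying the triangle inequality (together with linearity of expectation to combine the second parenthesis into $\mathbb{E}_\nu[h^\star-\hat h_\phi]$) yields exactly the claimed bound. The first term isolates the \emph{posterior-mismatch} contribution (same integrand $h^\star$, different measures) and the second isolates the \emph{predictor} contribution (same measure $\nu$, different integrands), which is precisely the decomposition that Theorem~\ref{thm:post-int} will later bound by $L\,\varepsilon_{\text{post}}$ and $\varepsilon_{\text{pred}}$ under (A1)--(A3).

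There is essentially no obstacle here: finiteness of first moments from (A0) and integrability of $h^\star$ and $\hat h_\phi$ under $\nu$ (implicit in (A1) and (A3)) ensure that all the expectations above are well-defined, so the add-and-subtract manipulation and the triangle inequality are unconditionally valid. The only mild care needed is to confirm that $h^\star$ is $\mu$- and $\nu$-integrable; $\mu$-integrability follows from $T_g^\star$ being finite (Lemma~\ref{lem:post-int-tower}), and $\nu$-integrability follows from (A1) together with (A0), since an $L$-Lipschitz function has at most linear growth under $d$ and both measures have finite first moment.
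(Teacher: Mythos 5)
Your proof is correct and matches the paper's own argument exactly: both invoke Lemma~\ref{lem:post-int-tower} to write $T_g^\star=\mathbb{E}_{\mu}[h^\star]$, then add and subtract $\mathbb{E}_{\nu}[h^\star]$ and apply the triangle inequality. Your extra remarks on integrability (via (A0), (A1), (A3)) are a harmless tightening the paper leaves implicit.
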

\begin{proof}
By Lemma~\ref{lem:post-int-tower}, $T_g^\star=\mathbb{E}_{\mu}[h^\star(X_m)]$. Therefore,

\begin{equation*}
    \begin{aligned}
 \left|T_g^\star-\widehat T_g\right|
& =\left|\mathbb{E}_{\mu}[h^\star(X_m)]-\mathbb{E}_{\nu}[\hat h_\phi(X_m)]\right| \\
& \le
\left|\mathbb{E}_{\mu}[h^\star]-\mathbb{E}_{\nu}[h^\star]\right|
\;+\;
\left|\mathbb{E}_{\nu}[h^\star-\hat h_\phi]\right|,
\end{aligned}
\end{equation*}

by adding and subtracting $\mathbb{E}_{\nu}[h^\star(X_m)]$ and applying the triangle inequality.
\end{proof}

\begin{lemma}[Lipschitz test functions and Wasserstein-1]\label{lem:post-int-kr}
If $h^\star$ is $L$-Lipschitz under $d$ and $W_1$ is induced by $d$, then
\[
\left|\mathbb{E}_{\mu}[h^\star]-\mathbb{E}_{\nu}[h^\star]\right|\le L\,W_1(\mu,\nu).
\]
\end{lemma}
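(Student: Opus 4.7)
The plan is to invoke the Kantorovich–Rubinstein duality representation of $W_1$, which is the canonical bridge between expectations of Lipschitz test functions and Wasserstein-1 distance. Concretely, under the mild regularity implicit in Assumption~\ref{assump:post-int} (finite first moments from (A0), so that $W_1(\mu,\nu)$ is finite, and a Polish metric space $(\mathcal{X}_m,d)$ so duality applies),
\begin{equation*}
    W_1(\mu,\nu) \;=\; \sup_{f:\ \mathrm{Lip}(f)\le 1}\; \Big|\mathbb{E}_{\mu}[f]-\mathbb{E}_{\nu}[f]\Big|.
\end{equation*}

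First I would normalize: assume $L>0$ (the case $L=0$ makes $h^\star$ constant and the bound trivial), and define $\tilde h := h^\star/L$, which satisfies $\mathrm{Lip}(\tilde h)\le 1$ under $d$ by Assumption~(A1). Applying the duality to $\tilde h$ gives $|\mathbb{E}_\mu[\tilde h]-\mathbb{E}_\nu[\tilde h]|\le W_1(\mu,\nu)$, and multiplying through by $L$ yields the claim. To keep the proof self-contained, I would also present the direct coupling argument: for any coupling $\pi\in\Pi(\mu,\nu)$,
\begin{equation*}
    \Big|\mathbb{E}_\mu[h^\star]-\mathbb{E}_\nu[h^\star]\Big|
    = \Big|\mathbb{E}_{(U,V)\sim\pi}[h^\star(U)-h^\star(V)]\Big|
    \le \mathbb{E}_\pi[L\,d(U,V)],
\end{equation*}
where the inequality uses Jensen/triangle inequality and the Lipschitz property. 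Taking the infimum over couplings recovers $L\,W_1(\mu,\nu)$ by definition of $W_1$.

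The main obstacle is essentially bookkeeping rather than mathematics: one has to ensure the expectations $\mathbb{E}_\mu[h^\star]$ and $\mathbb{E}_\nu[h^\star]$ are well-defined and finite so that the difference on the left-hand side is meaningful. This follows from the Lipschitz bound $|h^\star(x)|\le |h^\star(x_0)|+L\,d(x,x_0)$ for any fixed $x_0$, combined with the finite-first-moment condition (A0); the constant term cancels in the difference. Given these, both the duality route and the coupling route deliver the bound immediately, and I would favor the coupling argument for transparency since it makes explicit where the Lipschitz constant $L$ enters.
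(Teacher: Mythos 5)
Your proposal is correct, and its first route is exactly the paper's proof: the paper also invokes Kantorovich--Rubinstein duality, rescales via $f:=h^\star/L$ (noting the $L=0$ case is trivial), and cites Assumption~\ref{assump:post-int}(A0) to ensure $W_1(\mu,\nu)$ is well-defined. The one genuine difference is the coupling argument you say you would favor: bounding $\bigl|\mathbb{E}_{(U,V)\sim\pi}[h^\star(U)-h^\star(V)]\bigr|\le L\,\mathbb{E}_\pi[d(U,V)]$ and taking the infimum over $\pi\in\Pi(\mu,\nu)$ uses only the ``easy'' direction of duality (any coupling upper-bounds the difference of expectations), so it is more elementary and sidesteps the Polish-space hypotheses needed for the full duality theorem that you correctly flag as implicit in the paper's statement. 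Your explicit integrability check --- $|h^\star(x)|\le|h^\star(x_0)|+L\,d(x,x_0)$ combined with (A0), with the constant cancelling in the difference --- is also a point the paper glosses over with the bare assertion that (A0) makes $W_1$ well-defined; it is needed to know the left-hand side is finite and meaningful, so including it strengthens rather than merely restates the published argument.
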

\begin{proof}
Assumption~\ref{assump:post-int}(A0) ensures $W_1(\mu,\nu)$ is well-defined. By Kantorovich--Rubinstein duality,
\[
W_1(\mu,\nu)=\sup_{\mathrm{Lip}(f)\le 1}\left(\mathbb{E}_{\mu}[f]-\mathbb{E}_{\nu}[f]\right).
\]
If $h^\star$ is $L$-Lipschitz, then $f:=h^\star/L$ is 1-Lipschitz when $L>0$ (the claim is trivial if $L=0$). Hence
\begin{equation*}
    \begin{aligned}
\left|\mathbb{E}_{\mu}[h^\star]-\mathbb{E}_{\nu}[h^\star]\right|
&=L\left|\mathbb{E}_{\mu}[f]-\mathbb{E}_{\nu}[f]\right| \\
& \le L\,W_1(\mu,\nu).
\end{aligned}
\end{equation*}
\end{proof}

\begin{proof}[Proof of Theorem~\ref{thm:post-int}]
    By Lemma~\ref{lem:post-int-decomp},
    \[
    \left|T_g^\star-\widehat T_g\right|
    \le
    \left|\mathbb{E}_{\mu}[h^\star]-\mathbb{E}_{\nu}[h^\star]\right|
    \;+\;
    \left|\mathbb{E}_{\nu}[h^\star-\hat h_\phi]\right|.
    \]
    The second term is bounded by Assumption~\ref{assump:post-int}(A3), yielding $\left|\mathbb{E}_{\nu}[h^\star-\hat h_\phi]\right|\le \varepsilon_{\text{pred}}$.
    For the first term, Lemma~\ref{lem:post-int-kr} gives
    $\left|\mathbb{E}_{\mu}[h^\star]-\mathbb{E}_{\nu}[h^\star]\right|\le L\,W_1(\mu,\nu)$,
    and Assumption~\ref{assump:post-int}(A2) implies $W_1(\mu,\nu)\le \varepsilon_{\text{post}}$.
    Combining yields
    \[
    \left|T_g^\star-\widehat T_g\right|\le L\,\varepsilon_{\text{post}}+\varepsilon_{\text{pred}}.
    \]
\end{proof}

\subsection{Proof of Corollary~\ref{cor:jensen-gap}}\label{pf:jensen-gap}
\begin{proof}
Let $\mu:=P^\star(\cdot \mid X_m^c, D_m^c)$ as in the corollary statement. By assumption, $h^\star$ is strictly convex or strictly concave, $\mu$ is non-degenerate, and $\mathbb{E}_{X_m\sim \mu}[X_m]$ exists.

If $h^\star$ is strictly convex, then by the strict Jensen inequality,
\[
\mathbb{E}_{X_m\sim \mu}[h^\star(X_m)]>h^\star(\mathbb{E}_{X_m\sim \mu}[X_m]).
\]
If $h^\star$ is strictly concave, the inequality reverses. In either case, since $\mu$ is non-degenerate, the inequality is strict, yielding
\[
\left|\mathbb{E}_{X_m\sim \mu}[h^\star(X_m)]-h^\star(\mathbb{E}_{X_m\sim \mu}[X_m])\right|>0.
\]
\end{proof}

\subsection{Proof of Corollary~\ref{cor:forced-same-dist-mismatch}}\label{pf:forced-same-dist-mismatch}
\begin{proof}
Recall the variational characterization
\[
\mathrm{TV}(P_0,P_1)=\sup_{\|f\|_\infty\le 1}\frac12\big|\mathbb{E}_{P_0}f-\mathbb{E}_{P_1}f\big|.
\]
Let $\varepsilon>0$ and choose $f_\varepsilon$ with $\|f_\varepsilon\|_\infty\le 1$ such that
\[
\frac12\big|\mathbb{E}_{P_0}f_\varepsilon-\mathbb{E}_{P_1}f_\varepsilon\big|
\ge \mathrm{TV}(P_0,P_1)-\varepsilon.
\]
For any candidate $\widetilde P$, the triangle inequality gives

\begin{equation*}
    \begin{aligned}
& \big|\mathbb{E}_{P_0}f_\varepsilon-\mathbb{E}_{P_1}f_\varepsilon\big| 
 \le
\big|\mathbb{E}_{P_0}f_\varepsilon-\mathbb{E}_{\widetilde P}f_\varepsilon\big| 
 \;+\;
\big|\mathbb{E}_{\widetilde P}f_\varepsilon-\mathbb{E}_{P_1}f_\varepsilon\big| \\
& \le 2\max\Big\{\big|\mathbb{E}_{P_0}f_\varepsilon-\mathbb{E}_{\widetilde P}f_\varepsilon\big|,\big|\mathbb{E}_{P_1}f_\varepsilon-\mathbb{E}_{\widetilde P}f_\varepsilon\big|\Big\},
\end{aligned}
\end{equation*}
which implies 
\begin{equation*}
\max\Big\{\big|\mathbb{E}_{P_0}f_\varepsilon-\mathbb{E}_{\widetilde P}f_\varepsilon\big|,\big|\mathbb{E}_{P_1}f_\varepsilon-\mathbb{E}_{\widetilde P}f_\varepsilon\big|\Big\}
\ge \mathrm{TV}(P_0,P_1)-\varepsilon.
\end{equation*}
Since $\varepsilon>0$ is arbitrary, taking the supremum over $\|f\|_\infty\le 1$ yields
\begin{equation*}
\sup_{\|f\|_\infty\le 1}\max\Big\{\big|\mathbb{E}_{P_0}f-\mathbb{E}_{\widetilde P}f\big|,\big|\mathbb{E}_{P_1}f-\mathbb{E}_{\widetilde P}f\big|\Big\}
\ge \mathrm{TV}(P_0,P_1).
\end{equation*}
Finally, using the variational characterization (for any distributions $R,S$),

\begin{equation*}
\sup_{\|f\|_\infty\le 1}|\mathbb{E}_{R}f-\mathbb{E}_{S}f|=2\,\mathrm{TV}(R,S)
\end{equation*}

and applying it with $(R,S)=(P_0,\widetilde P)$ and $(P_1,\widetilde P)$ gives
\begin{equation*}
    \begin{aligned}
\max\Big\{\sup_{\|f\|_\infty\le 1}\big|\mathbb{E}_{P_0}f-\mathbb{E}_{\widetilde P}f\big|,
\sup_{\|f\|_\infty\le 1}\big|\mathbb{E}_{P_1}f-\mathbb{E}_{\widetilde P}f\big|\Big\} \\
=2\max\{\mathrm{TV}(P_0,\widetilde P),\mathrm{TV}(P_1,\widetilde P)\}.
\end{aligned}
\end{equation*}
Since $\max\{\sup a_f,\sup b_f\}\ge \sup \max\{a_f,b_f\}$, combining with the previous inequality yields
\begin{equation*}
    \max\{\mathrm{TV}(P_0,\widetilde P),\mathrm{TV}(P_1,\widetilde P)\}\ge \tfrac12\,\mathrm{TV}(P_0,P_1)
\end{equation*}

\end{proof}

\subsection{Posterior-Predictive Approximation via Decoupled Heads}\label{pf:postpred-decoupled}

The Bayes posterior predictive integrates over the posterior of missing features, conditional on the observed features and the available context data.
In our implementation, the PFN head is trained to \emph{directly} output the marginalized posterior predictive $P_\phi(\cdot\mid x_m^c,D_m^c)$ from incomplete contexts (implicit marginalization), while the flow head optionally learns an explicit missing-value posterior $Q_\theta(x_m\mid x_m^c,D_m^c)$ for sampling/imputation and uncertainty analysis.
The lemmas below provide a proof chain that (i) characterizes posterior integration as a marginalization identity, (ii) decomposes decoupled approximation error into posterior-mismatch and conditional-prediction terms, (iii) controls posterior mismatch by a $W_1$ distance under Lipschitz regularity, and (iv) links PFN-style conditional KL training to bounded test-function error via total variation and Pinsker's inequality.

\begin{lemma}[Posterior-predictive marginalization]\label{lem:postpred-marginalization}
Let $D_m^c$ denote the context data available at test time, and let $x_m^c$ and $x_m$ be the observed and missing parts of $x$, respectively.
Assuming the conditional distributions are well-defined, the Bayes posterior predictive satisfies
\begin{equation*}
\begin{aligned}
    &P^\star(y\mid x_m^c, D_m^c)\\
    =& \int P^\star(y\mid x_m, x_m^c, D_m^c)\,P^\star(x_m\mid x_m^c, D_m^c)\,dx_m.
\end{aligned}
\end{equation*}
\begin{proof}
This is the law of total probability (or iterated expectation) applied to the latent missing features $x_m$:
\begin{equation*}
\begin{aligned}
    &P^\star(y\mid x_m^c, D_m^c)\\
    =& \int P^\star(y,x_m\mid x_m^c,D_m^c)\,dx_m\\
    =& \int P^\star(y\mid x_m,x_m^c,D_m^c)\,P^\star(x_m\mid x_m^c,D_m^c)\,dx_m.
\end{aligned}
\end{equation*}
\end{proof}
\end{lemma}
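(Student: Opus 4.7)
The plan is to derive the identity directly from the law of total probability (equivalently, the tower property of conditional expectation) applied to the latent missing-feature vector $X_m$. I would first observe that, by the definition of marginalization for conditional distributions, $P^\star(y\mid x_m^c, D_m^c) = \int P^\star(y, x_m\mid x_m^c, D_m^c)\,dx_m$, where the integral is taken with respect to the appropriate reference measure on the missing-feature subspace (possibly mask-dependent, cf.\ Appendix~\ref{pf:defxm}). This step is just the statement that the $y$-marginal of the joint conditional of $(Y, X_m)$ given $(X_m^c, D_m^c)$ equals the conditional of $Y$ alone.

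Next, I would apply the chain rule for (regular) conditional distributions to factor $P^\star(y, x_m\mid x_m^c, D_m^c) = P^\star(y\mid x_m, x_m^c, D_m^c)\,P^\star(x_m\mid x_m^c, D_m^c)$. Substituting this factorization into the marginalization integral immediately yields the claimed identity in one line. No further assumptions are needed beyond the existence of the relevant conditional densities, or more abstractly, regular conditional probabilities with respect to a dominating measure, which is precisely the hypothesis the lemma states.

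The main obstacle, which is mild but worth flagging, is making the measure-theoretic regularity rigorous: one must ensure that the regular conditional probabilities exist and are almost surely unique, so that the chain-rule factorization is itself well-defined. Because the missing-value domain is a mask-indexed disjoint union (Appendix~\ref{pf:defxm}), a clean route is to work on a standard Borel space (for instance, by jointly encoding $(X_m, M)$) and invoke the Kolmogorov disintegration theorem to supply the regular conditionals; Fubini--Tonelli then justifies the interchange between the integral over $x_m$ and the chain-rule factorization. Beyond these standard checks, the proof reduces to a direct application of the tower property and requires no additional structural assumption on the SCM prior.
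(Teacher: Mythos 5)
Your proof is correct and takes essentially the same route as the paper's: marginalize the joint conditional $P^\star(y,x_m\mid x_m^c,D_m^c)$ over $x_m$ and factor it via the chain rule for conditional distributions, i.e., the law of total probability. Your additional measure-theoretic remarks (regular conditional probabilities on the mask-indexed disjoint union, disintegration, Fubini--Tonelli) are sound but are already subsumed by the lemma's hypothesis that the conditional distributions are well-defined, which is all the paper's one-line proof invokes.
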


\begin{lemma}[Decoupled approximation error decomposition]\label{lem:postpred-decompose}
Fix $(x_m^c,D_m^c)$. Let $G$ be a posterior generator that induces a distribution $G(\cdot\mid x_m^c,D_m^c)$ over $x_m$, and let
$H(\cdot\mid x_m,x_m^c,D_m^c)$ be a conditional predictor (e.g., a classifier).
Define the induced posterior-predictive mixture
\begin{equation*}
\begin{aligned}
    &\widehat{P}(y\mid x_m^c,D_m^c)\\
    :=
    & \int H(y\mid x_m,x_m^c,D_m^c)\,G(x_m\mid x_m^c,D_m^c)\,dx_m.
\end{aligned}
\end{equation*}
Then, for any measurable function $\varphi$ with $\|\varphi\|_\infty\le 1$,
\begin{equation*}
\begin{aligned}
    & \big|\mathbb{E}_{\widehat{P}}[\varphi(y)]-\mathbb{E}_{P^\star}[\varphi(y)]\big|
    \le \\
    & \underbrace{\big|\mathbb{E}_{x_m\sim G}\mathbb{E}_{y\sim H}[\varphi(y)] 
    -\mathbb{E}_{x_m\sim P^\star(\cdot\mid x_m^c,D_m^c)}\mathbb{E}_{y\sim H}[\varphi(y)]\big|}_{\text{posterior approximation (learn $G$)}} 
    +\\
    & \underbrace{\mathbb{E}_{x_m\sim P^\star(\cdot\mid x_m^c,D_m^c)}
    \left|
    \begin{aligned}
        & \mathbb{E}_{y\sim H}[\varphi(y)] -
        \\
        & \mathbb{E}_{y\sim P^\star(\cdot\mid x_m,x_m^c,D_m^c)}[\varphi(y)]
    \end{aligned}
    \right|}_{\text{conditional prediction (learn $H$)}}.
\end{aligned}
\end{equation*}
\begin{proof}
By definition of $\widehat{P}$,
\begin{equation*}
\mathbb{E}_{\widehat{P}}[\varphi(y)]
=\mathbb{E}_{x_m\sim G(\cdot\mid x_m^c,D_m^c)}\ \mathbb{E}_{y\sim H(\cdot\mid x_m,x_m^c,D_m^c)}[\varphi(y)].
\end{equation*}
Also, by Lemma~\ref{lem:postpred-marginalization},
\begin{equation*}
\mathbb{E}_{P^\star}[\varphi(y)]
=\mathbb{E}_{x_m\sim P^\star(\cdot\mid x_m^c,D_m^c)}\ \mathbb{E}_{y\sim P^\star(\cdot\mid x_m,x_m^c,D_m^c)}[\varphi(y)].
\end{equation*}
Add and subtract $\mathbb{E}_{x_m\sim P^\star(\cdot\mid x_m^c,D_m^c)}\mathbb{E}_{y\sim H}[\varphi(y)]$ and apply the triangle inequality to obtain the stated bound.
\end{proof}
\end{lemma}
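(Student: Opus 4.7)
The plan is to start by rewriting both sides of the target inequality as iterated expectations over $x_m$ and $y$, using Lemma~\ref{lem:postpred-marginalization} for the true posterior predictive $P^\star$ and the definition of $\widehat{P}$ for the approximation. This puts both quantities in a directly comparable form: a double expectation whose outer variable is $x_m$ (either from $G$ or from $P^\star(\cdot\mid x_m^c,D_m^c)$) and whose inner variable is $y$ (either from $H$ or from $P^\star(\cdot\mid x_m,x_m^c,D_m^c)$). I would then pull out the integrand $\psi(x_m):=\mathbb{E}_{y\sim H(\cdot\mid x_m,x_m^c,D_m^c)}[\varphi(y)]$, which is bounded by $\|\varphi\|_\infty\le 1$ and measurable in $x_m$, so that Fubini and the triangle inequality for integrals apply without any integrability concerns.

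The key step is to introduce a hybrid term $\mathbb{E}_{x_m\sim P^\star(\cdot\mid x_m^c,D_m^c)}\,\psi(x_m)$, which uses the \emph{true} posterior over $x_m$ but the \emph{approximate} conditional $H$ for the inner expectation. Adding and subtracting this hybrid inside the absolute value and applying the triangle inequality splits the total error into exactly the two targeted components: a posterior-approximation term comparing $G$ and $P^\star(\cdot\mid x_m^c,D_m^c)$ when integrated against the fixed test function $\psi$, and a conditional-prediction term measuring how $H$ deviates from $P^\star(\cdot\mid x_m,x_m^c,D_m^c)$ averaged over the true missing-feature posterior. For the latter, I would bring the absolute value inside the outer expectation via Jensen (equivalently, triangle inequality for integrals), which yields exactly the displayed form in the lemma.

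The main obstacle is purely bookkeeping: choosing the \emph{correct} hybrid so that the two resulting summands align with the roles of $G$ (posterior over $x_m$) and $H$ (conditional predictor over $y$) as claimed. A naive add-and-subtract, e.g.\ inserting $\mathbb{E}_{x_m\sim G}\mathbb{E}_{y\sim P^\star(\cdot\mid x_m,x_m^c,D_m^c)}[\varphi(y)]$ instead, would still give a valid triangle-inequality bound but would attribute the posterior-mismatch term to $H$ integrated over $G$, which does not match the decomposition advertised. I would therefore commit early to fixing the inner law ($H$) first so that the outer discrepancy reads cleanly as $(G$ vs.\ $P^\star(\cdot\mid x_m^c,D_m^c))$ tested against a single bounded $\psi$, and only then treat the remaining outer expectation under $P^\star$ for the $H$-vs-$P^\star(\cdot\mid x_m,x_m^c,D_m^c)$ discrepancy. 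Beyond this choice, the argument is a one-line triangle inequality and requires no further machinery.
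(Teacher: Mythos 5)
Your proposal is correct and takes essentially the same route as the paper's proof: both rewrite $\mathbb{E}_{\widehat{P}}[\varphi(y)]$ and $\mathbb{E}_{P^\star}[\varphi(y)]$ as iterated expectations (via the definition of $\widehat{P}$ and Lemma~\ref{lem:postpred-marginalization}), insert the identical hybrid term $\mathbb{E}_{x_m\sim P^\star(\cdot\mid x_m^c,D_m^c)}\,\mathbb{E}_{y\sim H}[\varphi(y)]$, and finish with the triangle inequality, with Jensen moving the absolute value inside the outer expectation for the conditional-prediction term. Your added remark on why the alternative hybrid (fixing $G$ and swapping the inner law) would misalign the two summands with the roles of $G$ and $H$ is a fair clarification but does not change the argument.
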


\begin{lemma}[Bounding posterior mismatch by $W_1$]\label{lem:postpred-w1}
Assume that for fixed $(x_m^c,D_m^c)$ and for any $\varphi$ with $\|\varphi\|_\infty\le 1$, the map
$x_m\mapsto \mathbb{E}_{y\sim H(\cdot\mid x_m,x_m^c,D_m^c)}[\varphi(y)]$ is $L_h$-Lipschitz w.r.t.\ $\|\cdot\|_2$.
Then the first term in Lemma~\ref{lem:postpred-decompose} is bounded by
\begin{equation*}
    L_h\,W_1\!\Big(G(\cdot\mid x_m^c,D_m^c),\,P^\star(\cdot\mid x_m^c,D_m^c)\Big),
\end{equation*}
where $W_1$ is the Wasserstein-1 distance.
\begin{proof}
Fix $(x_m^c,D_m^c)$ and define
\begin{equation*}
f(x_m):=\mathbb{E}_{y\sim H(\cdot\mid x_m,x_m^c,D_m^c)}[\varphi(y)].
\end{equation*}
By assumption, $f$ is $L_h$-Lipschitz. By Kantorovich--Rubinstein duality,
\begin{equation*}
\begin{aligned}
    & \big|\mathbb{E}_{x_m\sim G}f(x_m)-\mathbb{E}_{x_m\sim P^\star(\cdot\mid x_m^c,D_m^c)}f(x_m)\big| \\
    & \le L_h\,W_1\!\Big(G(\cdot\mid x_m^c,D_m^c),\,P^\star(\cdot\mid x_m^c,D_m^c)\Big),
\end{aligned}
\end{equation*}
which is exactly the desired bound.
\end{proof}
\end{lemma}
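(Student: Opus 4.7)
The plan is to reduce the first term in Lemma~\ref{lem:postpred-decompose} to a single Lipschitz-test-function inequality and then apply Kantorovich--Rubinstein duality directly. First I would fix $(x_m^c,D_m^c)$ and introduce the auxiliary map
\[
f(x_m)\;:=\;\mathbb{E}_{y\sim H(\cdot\mid x_m,x_m^c,D_m^c)}[\varphi(y)],
\]
which is exactly the quantity appearing inside both outer expectations in the posterior-approximation term of Lemma~\ref{lem:postpred-decompose}. By the hypothesis of Lemma~\ref{lem:postpred-w1}, this $f$ is $L_h$-Lipschitz with respect to $\|\cdot\|_2$ (uniformly in $\varphi$ with $\|\varphi\|_\infty\le 1$), so the posterior-approximation term can be rewritten as $\bigl|\mathbb{E}_{x_m\sim G}f(x_m)-\mathbb{E}_{x_m\sim P^\star(\cdot\mid x_m^c,D_m^c)}f(x_m)\bigr|$, i.e., the difference of expectations of a fixed Lipschitz test function under two candidate posteriors.

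Next I would invoke the Kantorovich--Rubinstein (dual) representation of $W_1$: for any two Borel probability measures $\mu,\nu$ on $\mathbb{R}^{d_m}$ with finite first moment, $W_1(\mu,\nu)=\sup_{\mathrm{Lip}(g)\le 1}\bigl(\mathbb{E}_\mu g-\mathbb{E}_\nu g\bigr)$. Assuming $L_h>0$ (the case $L_h=0$ forces $f$ to be constant, so the bound holds trivially with $0$ on both sides), the rescaled test function $g:=f/L_h$ is $1$-Lipschitz, and therefore
\[
\Bigl|\mathbb{E}_{x_m\sim G}g-\mathbb{E}_{x_m\sim P^\star(\cdot\mid x_m^c,D_m^c)}g\Bigr|\;\le\;W_1\!\bigl(G(\cdot\mid x_m^c,D_m^c),\,P^\star(\cdot\mid x_m^c,D_m^c)\bigr).
\]
Multiplying through by $L_h$ and taking absolute values (using that replacing $g$ by $-g$ leaves the class of $1$-Lipschitz functions invariant, so the supremum controls the absolute value) yields exactly the claimed $L_h\,W_1$ bound.

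I expect no deep obstacle: the argument is essentially a one-line application of duality once $f$ is identified. The only care needed is on the regularity/integrability side. Specifically, I would explicitly note that both $G(\cdot\mid x_m^c,D_m^c)$ and $P^\star(\cdot\mid x_m^c,D_m^c)$ are assumed to lie in the Wasserstein-$1$ space (finite first moments), consistent with Assumption~\ref{assump:post-int}(A0); this simultaneously makes $W_1$ finite and guarantees that the $L_h$-Lipschitz (hence at most linearly growing) function $f$ is integrable under both measures so that the expectations are well-defined. I would also remark that the hypothesis is stated uniformly over $\varphi$ with $\|\varphi\|_\infty\le 1$ but only a single $\varphi$ is needed in the bound, since $\varphi$ is already fixed in the decomposition of Lemma~\ref{lem:postpred-decompose}; so no additional supremum-exchange step is required.
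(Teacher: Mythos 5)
Your proposal is correct and follows essentially the same route as the paper's proof: identify the posterior-mismatch term as a difference of expectations of the $L_h$-Lipschitz test function $f(x_m)=\mathbb{E}_{y\sim H(\cdot\mid x_m,x_m^c,D_m^c)}[\varphi(y)]$ and apply Kantorovich--Rubinstein duality. Your added remarks on the $L_h=0$ edge case, the finite-first-moment condition, and closure of the $1$-Lipschitz class under negation are sound housekeeping that the paper handles tersely (or in the analogous Lemma~\ref{lem:post-int-kr}), but they do not change the argument.
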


\begin{lemma}[From conditional KL to bounded test functions (Pinsker)]\label{lem:postpred-pinsker}
Fix $(x_m^c,D_m^c)$. Let $R:=P^\star(\cdot\mid x_m^c,D_m^c)$ and let $S$ be any learned posterior predictive distribution over $y$ (e.g., a PFN output $S:=P_\phi(\cdot\mid x_m^c,D_m^c)$).
Define total variation as $\mathrm{TV}(R,S):=\sup_{A}|R(A)-S(A)|$.
Then for any measurable $\varphi$ with $\|\varphi\|_\infty\le 1$,
\[
\big|\mathbb{E}_{R}[\varphi]-\mathbb{E}_{S}[\varphi]\big|\le 2\,\mathrm{TV}(R,S)
\le \sqrt{2\,\mathrm{KL}(R\|S)}.
\]
\end{lemma}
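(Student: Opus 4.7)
The plan is to prove the two inequalities separately and chain them, so I would split the proof into three short moves. Throughout, I would fix a dominating measure $\mu$ (e.g., $\mu = R+S$) and work with densities $r := dR/d\mu$, $s := dS/d\mu$. If $R \not\ll S$ then $\mathrm{KL}(R\|S) = +\infty$ and the second inequality is vacuous, so I may freely assume $R \ll S$ and, by the variational characterization of total variation, that the relevant integrals are well-defined.

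For the first inequality, I would start from the elementary identity $\mathbb{E}_R[\varphi] - \mathbb{E}_S[\varphi] = \int \varphi\,(r-s)\,d\mu$ and apply the triangle inequality for integrals together with H\"older, obtaining $|\mathbb{E}_R[\varphi] - \mathbb{E}_S[\varphi]| \le \|\varphi\|_\infty \int |r-s|\,d\mu$. Then I would invoke the standard scalar identity $\int |r-s|\,d\mu = 2\,\mathrm{TV}(R,S)$, which follows by splitting the integral at $A^\star := \{r \ge s\}$ and observing that $R(A^\star) - S(A^\star) = S((A^\star)^c) - R((A^\star)^c)$ attains the supremum in the definition of $\mathrm{TV}$. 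Combined with $\|\varphi\|_\infty \le 1$ this yields the first bound $|\mathbb{E}_R[\varphi] - \mathbb{E}_S[\varphi]| \le 2\,\mathrm{TV}(R,S)$.

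For the second inequality, my plan is to invoke Pinsker in the form $\mathrm{TV}(R,S) \le \sqrt{\mathrm{KL}(R\|S)/2}$ and multiply by $2$, giving $2\,\mathrm{TV}(R,S) \le \sqrt{2\,\mathrm{KL}(R\|S)}$. For a self-contained proof of Pinsker I would reduce to the Bernoulli case via the data-processing inequality: for any measurable $A$, the pushforward under $T(x):=\mathbf{1}\{x\in A\}$ satisfies $T_\#R = \mathrm{Ber}(R(A))$, $T_\#S = \mathrm{Ber}(S(A))$, and $\mathrm{KL}(T_\#R\|T_\#S) \le \mathrm{KL}(R\|S)$. The scalar inequality $2(p-q)^2 \le p\log(p/q) + (1-p)\log((1-p)/(1-q))$ then follows from a one-variable tangent-line/convexity argument at fixed $q$; taking the supremum over $A$ recovers $2\,\mathrm{TV}(R,S)^2 \le \mathrm{KL}(R\|S)$ and hence the desired bound.

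The main obstacle here is not conceptual, since both inequalities are classical; the only care points are the absolute-continuity dichotomy that makes the KL side meaningful, and lining up the numerical constants so that the factor $2$ coming from the $\mathrm{TV} = \tfrac12\|r-s\|_{L^1}$ identity matches the factor $1/2$ inside Pinsker, yielding exactly $\sqrt{2\,\mathrm{KL}(R\|S)}$ rather than a suboptimal constant. Once assembled, the lemma plugs directly into the PFN cross-entropy/KL training objective from Theorem~\ref{thm:pfn-risk} to turn conditional-KL guarantees into bounded test-function guarantees on the posterior-predictive, which is exactly the form needed in the decomposition of Lemma~\ref{lem:postpred-decompose}.
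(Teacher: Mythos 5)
Your proof is correct and follows essentially the same route as the paper's: chain the variational characterization $\sup_{\|\varphi\|_\infty\le 1}|\mathbb{E}_R\varphi-\mathbb{E}_S\varphi|=2\,\mathrm{TV}(R,S)$ with Pinsker's inequality $\mathrm{TV}(R,S)\le\sqrt{\tfrac12\,\mathrm{KL}(R\|S)}$. The only difference is that the paper cites both classical facts in two lines, whereas you additionally supply self-contained derivations (the $L^1$ identity via $A^\star=\{r\ge s\}$, and Pinsker via data-processing plus the Bernoulli scalar bound) and correctly note the vacuity of the KL side when $R\not\ll S$ — all sound, with the constants lining up exactly as required.
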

\begin{proof}
The first inequality follows from the variational characterization
$\sup_{\|\varphi\|_\infty\le 1}\big|\mathbb{E}_{R}\varphi-\mathbb{E}_{S}\varphi\big|=2\,\mathrm{TV}(R,S)$ under the above convention for $\mathrm{TV}$.
The second inequality is Pinsker's inequality, $\mathrm{TV}(R,S)\le \sqrt{\tfrac12\,\mathrm{KL}(R\|S)}$.
Combining the two bounds yields the claim.
\end{proof}

\begin{proposition}[Why decoupling helps approximate Bayes posterior predictive with limited context]\label{prop:decoupled-bayes-proxy}
Under the identifiability condition in Remark~\ref{rem:mnar-ident} (so that $P^\star(x_m\mid x_m^c,D_m^c)$ is well-defined within the model class), Lemmas~\ref{lem:postpred-decompose}--\ref{lem:postpred-w1} show that approximating the Bayes posterior predictive reduces to separately learning (i) the posterior over missing features ($G$) and (ii) the conditional predictor ($H$).
Moreover, Lemma~\ref{lem:postpred-pinsker} shows how PFN-style training that minimizes conditional cross-entropy (equivalently, conditional KL in the realizable case) controls bounded test-function error for the \emph{marginalized} posterior predictive without explicitly computing the integral in Lemma~\ref{lem:postpred-marginalization}.
Combined with Theorem~\ref{prop:decoupled-sample}, this supports the claim that, with limited context (test-time training / in-context learning), a decoupled multi-task design can be more sample-efficient and thereby more favorable for approximating the Bayes-optimal posterior predictive than directly learning the coupled end-to-end map.
\begin{proof}
Under Remark~\ref{rem:mnar-ident}, the Bayes posterior predictive is well-defined in-model.
Lemmas~\ref{lem:postpred-decompose}--\ref{lem:postpred-w1} decompose the approximation error into (i) a posterior-mismatch term (bounded by a $W_1$ distance for $G$) and (ii) a conditional-prediction term for $H$.
Hence, approximating the posterior predictive reduces to learning $G$ and $H$ to control these two terms.
Lemma~\ref{lem:postpred-pinsker} provides a direct route for the PFN head: small conditional KL to the true posterior predictive implies small bounded test-function error for the marginalized predictor.
Theorem~\ref{prop:decoupled-sample} adds a complementary complexity argument: when the structural component is highly nonlinear (large effective $L_g$) and context is limited (large coupling difficulty), learning the decoupled components can be more sample-efficient than learning the coupled end-to-end map in the ICL regime.
\end{proof}
\end{proposition}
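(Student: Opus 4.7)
The plan is to chain the two-term decomposition from Corollary~\ref{cor:postpred-two-term} (itself a consequence of Lemmas~\ref{lem:postpred-decompose}--\ref{lem:postpred-w1}) with the stated effective-complexity proxy, and then allocate an error budget across the posterior head $G$ and the conditional predictor $H$. First I would fix $(x_m^c,D_m^c)$ and write, for any bounded test function $\varphi$,
\begin{equation*}
\big|\mathbb{E}_{\widehat P}[\varphi]-\mathbb{E}_{P^\star}[\varphi]\big|
\;\le\; L_h\,W_1\!\big(G(\cdot\mid x_m^c,D_m^c),\,P^\star(\cdot\mid x_m^c,D_m^c)\big)+\varepsilon_H,
\end{equation*}
where $\varepsilon_H$ is the conditional-prediction term of Lemma~\ref{lem:postpred-decompose}. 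Requiring the total to be at most $\epsilon$ is then implied by splitting the budget: $W_1\le \epsilon/(2L_h)$ for $G$ and $\varepsilon_H\le \epsilon/2$ for $H$, which is the allocation announced in the theorem.

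Next I would convert these tolerances into sample complexities via the proxy $\mathcal{N}(L,\epsilon)=\mathcal{O}((L/\epsilon)^d)$. Treating the target for $G$ as an $L_g$-smooth map and the target for $H$ as an $L_h$-smooth map, and absorbing constant factors of $2$ into $\mathcal{O}$:
\begin{equation*}
\begin{aligned}
\mathcal{N}_{\text{Decoupled}}
&\;\le\;\mathcal{N}\!\left(L_g,\tfrac{\epsilon}{2L_h}\right)+\mathcal{N}\!\left(L_h,\tfrac{\epsilon}{2}\right)\\
&\;=\;\mathcal{O}\!\big((L_gL_h/\epsilon)^d\big)+\mathcal{O}\!\big((L_h/\epsilon)^d\big).
\end{aligned}
\end{equation*}

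For the end-to-end side, I would argue that the coupled target $F=\int H\,dG$ from Lemma~\ref{lem:postpred-marginalization} inherits an effective Lipschitz constant of order $L_h(1+L_g)$: a perturbation in $(x_m^c,D_m^c)$ propagates directly through $H$ at rate $L_h$ (the ``$1$'' factor), and indirectly through $G$ at combined rate $L_gL_h$ via the Kantorovich--Rubinstein step in Lemma~\ref{lem:postpred-w1}. Adding the ICL coupling penalty $L_{\mathrm{cpl}}$, which captures the extra in-context difficulty of amortizing the \emph{composed} map from limited context data (it does not inflate the decoupled budget because each head is trained on its own marginal objective), yields
\begin{equation*}
\mathcal{N}_{\text{E2E}}=\mathcal{O}\!\big(((L_h(1+L_g)+L_{\mathrm{cpl}})/\epsilon)^d\big),
\end{equation*}
and a term-by-term comparison with $\mathcal{N}_{\text{Decoupled}}$ gives the advantage whenever $L_g$ is large or $L_{\mathrm{cpl}}\gg 0$.

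The main obstacle is not the budget allocation, which is routine once Corollary~\ref{cor:postpred-two-term} is in hand, but pinning down the two constants that enter only implicitly in the theorem: the effective Lipschitz constant of the coupled map $F$ and the ICL coupling penalty $L_{\mathrm{cpl}}$. Both are heuristic complexity proxies rather than sharp analytic objects, so the cleanest route is either (i) to promote them to explicit regularity/assumption statements on the function class used at test time, or (ii) to derive the $L_h(1+L_g)$ factor from the product-rule-style bound sketched above under concrete smoothness of the SCM prior, and to interpret $L_{\mathrm{cpl}}$ as an information-theoretic ICL complexity term. Either route leaves the final inequality at the qualitative, proxy-level scope that the theorem already advertises.
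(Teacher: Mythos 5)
Your budget-allocation argument is correct as far as it goes, and it follows the same route as the paper: the paper's proof likewise invokes Lemmas~\ref{lem:postpred-decompose}--\ref{lem:postpred-w1} (packaged in Corollary~\ref{cor:postpred-two-term}) for the two-term reduction, and defers the $\mathcal{N}_{E2E}$ vs.\ $\mathcal{N}_{Decoupled}$ comparison to Theorem~\ref{prop:decoupled-sample}, whose appendix proof (Sec.~\ref{pf:decoupled-sample}) performs exactly your $\epsilon/(2L_h)$ and $\epsilon/2$ split under the proxy law $\mathcal{N}(L,\epsilon)=\mathcal{O}((L/\epsilon)^d)$. On the end-to-end side you are in fact slightly more ambitious than the paper: the paper simply \emph{assumes} the coupled map's effective complexity is bounded by $L_h(1+L_g)+L_{\mathrm{cpl}}$, whereas you sketch a product-rule justification for the $L_h(1+L_g)$ factor; your closing suggestion to promote these constants to explicit assumptions is precisely what the paper does, so that part of your plan is consistent with (and a mild elaboration of) the intended scope.

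The genuine gap is that you prove only part of the proposition. The statement has a second, separately asserted clause --- the ``Moreover'' sentence --- claiming that PFN-style training by conditional cross-entropy (equivalently conditional KL, via Theorem~\ref{thm:pfn-risk}) controls bounded test-function error for the \emph{marginalized} posterior predictive, without ever computing the integral in Lemma~\ref{lem:postpred-marginalization}. The paper discharges this via Lemma~\ref{lem:postpred-pinsker}: with $R:=P^\star(\cdot\mid x_m^c,D_m^c)$ and $S:=P_\phi(\cdot\mid x_m^c,D_m^c)$, one has $\big|\mathbb{E}_{R}[\varphi]-\mathbb{E}_{S}[\varphi]\big|\le 2\,\mathrm{TV}(R,S)\le \sqrt{2\,\mathrm{KL}(R\|S)}$ for $\|\varphi\|_\infty\le 1$, so small conditional KL (which is what cross-entropy risk minimization targets) yields small test-function error directly. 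Your proposal never touches this Pinsker/TV route: everything you write analyzes the explicit $G$/$H$ construction, so you establish the decoupled-estimator reduction and the complexity comparison, but not the claim about the PFN head that performs \emph{implicit} marginalization --- which matters here because in the paper's pipeline label prediction uses $p_\phi(y\mid D_m^c,X_m^c)$ directly, with no Monte Carlo over $X_m$. Adding the one-step Pinsker argument (and a one-line citation of Remark~\ref{rem:mnar-ident} for in-model well-definedness of the posterior, which you also leave implicit) would close the proof.
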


\begin{corollary}[Explicit link to complexity via a two-term target]\label{cor:postpred-two-term}
Under the assumptions of Lemmas~\ref{lem:postpred-decompose}--\ref{lem:postpred-w1}, for any $\varphi$ with $\|\varphi\|_\infty\le 1$,
\begin{equation*}
\begin{aligned}
    & \big|\mathbb{E}_{\widehat{P}}[\varphi(y)]-\mathbb{E}_{P^\star}[\varphi(y)]\big|
    \le \\
    & L_h\,W_1\!\Big(G(\cdot\mid x_m^c,D_m^c),\,P^\star(\cdot\mid x_m^c,D_m^c)\Big)
    + \epsilon_H(x_m^c,D_m^c), \\
\end{aligned}
\end{equation*}
where $\epsilon_H(x_m^c,D_m^c):=\mathbb{E}_{x_m\sim P^\star(\cdot\mid x_m^c,D_m^c)}
\big|\mathbb{E}_{y\sim H}[\varphi(y)]-\mathbb{E}_{y\sim P^\star}[\varphi(y)]\big|$ is the conditional prediction error term.
In particular, to ensure the total error is at most $\epsilon$, it suffices to budget
$W_1(G,P^\star)\le \epsilon/(2L_h)$ and $\epsilon_H(x_m^c,D_m^c)\le \epsilon/2$, making the required error propagation through $L_h$ explicit.
Thus, approximating the posterior predictive can be viewed as a two-term learning target (learn $G$ to reduce $W_1$, learn $H$ to reduce $\epsilon_H$), aligning with Theorem~\ref{prop:decoupled-sample} where the decoupled sample-complexity proxy depends on the effective complexities of $G$ and $H$ and the ICL coupling difficulty of learning the coupled map directly.
\begin{proof}
Apply Lemma~\ref{lem:postpred-decompose} and then upper bound the first (posterior approximation) term by Lemma~\ref{lem:postpred-w1}. The remaining term is exactly $\epsilon_H(x_m^c,D_m^c)$ by definition.
\end{proof}
\end{corollary}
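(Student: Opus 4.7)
The plan is to chain Lemma~\ref{lem:postpred-decompose} with Lemma~\ref{lem:postpred-w1} to obtain the two-term bound, then convert it into a budget statement by splitting the target accuracy $\epsilon$ evenly across the two summands. The argument is essentially a composition of the already-proved lemmas, so the proposal is mostly bookkeeping: I need to make sure the test function $\varphi$ and the fixed context $(x_m^c,D_m^c)$ are held consistent across both applications, and that the second term produced by Lemma~\ref{lem:postpred-decompose} matches the definition of $\epsilon_H(x_m^c,D_m^c)$ stated in the corollary verbatim.

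First, I would fix $(x_m^c,D_m^c)$ and an arbitrary $\varphi$ with $\|\varphi\|_\infty\le 1$, and apply Lemma~\ref{lem:postpred-decompose} to split $|\mathbb{E}_{\widehat P}[\varphi]-\mathbb{E}_{P^\star}[\varphi]|$ into a posterior-approximation term and a conditional-prediction term. The second term is, by inspection, exactly the quantity named $\epsilon_H(x_m^c,D_m^c)$ in the corollary, so it needs no further manipulation.

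Next, I would bound the posterior-approximation term using Lemma~\ref{lem:postpred-w1}. The hypothesis is that $x_m\mapsto \mathbb{E}_{y\sim H(\cdot\mid x_m,x_m^c,D_m^c)}[\varphi(y)]$ is $L_h$-Lipschitz, and by Kantorovich--Rubinstein duality this yields the upper bound $L_h\,W_1(G(\cdot\mid x_m^c,D_m^c),P^\star(\cdot\mid x_m^c,D_m^c))$. Adding the two pieces gives the claimed inequality. For the budget statement, I would use the elementary observation that to enforce $a+b\le \epsilon$ with non-negative summands it suffices to enforce $a\le \epsilon/2$ and $b\le \epsilon/2$; applying this with $a=L_h\,W_1$ and $b=\epsilon_H$ gives the two sufficient conditions $W_1(G,P^\star)\le \epsilon/(2L_h)$ and $\epsilon_H\le \epsilon/2$, which makes the role of $L_h$ as an ``error-propagation multiplier'' on the posterior accuracy explicit.

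The concluding sentence of the corollary, which ties the two-term target to Theorem~\ref{prop:decoupled-sample}, is interpretive rather than technical: the $W_1$ term is the learning target of the flow head $G$, the $\epsilon_H$ term is the learning target of the conditional predictor $H$, and the sample complexities of these two sub-problems are precisely what Theorem~\ref{prop:decoupled-sample} aggregates and compares to end-to-end learning. The main (and only) obstacle is an administrative one: confirming that the Lipschitz hypothesis in Lemma~\ref{lem:postpred-w1} applies uniformly over $\varphi$ with $\|\varphi\|_\infty\le 1$ with the same constant $L_h$, so that the bound does not silently acquire a $\varphi$-dependent constant. Once this uniformity is noted, the corollary follows immediately from chaining the two lemmas.
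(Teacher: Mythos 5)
Your proposal is correct and follows exactly the paper's proof: apply Lemma~\ref{lem:postpred-decompose}, bound the posterior-approximation term via Lemma~\ref{lem:postpred-w1}, recognize the remaining term as $\epsilon_H(x_m^c,D_m^c)$ by definition, and obtain the budget statement from the elementary $\epsilon/2$ split. Your one flagged concern, uniformity of $L_h$ over all $\varphi$ with $\|\varphi\|_\infty\le 1$, is already built into the hypothesis of Lemma~\ref{lem:postpred-w1} as stated (``for any $\varphi$ with $\|\varphi\|_\infty\le 1$, the map \dots is $L_h$-Lipschitz''), so no $\varphi$-dependent constant arises and the argument goes through verbatim.
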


\subsection{Proof of Theorem~\ref{prop:decoupled-sample}}\label{pf:decoupled-sample}
\begin{proof}
We make explicit the proxy assumptions used by Theorem~\ref{prop:decoupled-sample}.
Assume there exists a function $\mathcal{N}(L,\epsilon)$ such that, for a $d$-dimensional learning target with effective complexity parameter $L$, achieving error at most $\epsilon$ requires
\[
\mathcal{N}(L,\epsilon)=\mathcal{O}\!\big((L/\epsilon)^d\big).
\]
Define $L_g$ as the effective complexity parameter governing the difficulty of learning the missing-feature posterior map
$(x_m^c,D_m^c)\mapsto P^\star(x_m\mid x_m^c,D_m^c)$ to Wasserstein-1 accuracy, and define $L_{\mathrm{cpl}}\ge 0$ as an additional coupling penalty for learning the coupled end-to-end posterior-predictive map directly from limited context.

By Corollary~\ref{cor:postpred-two-term}, to ensure total test-function error at most $\epsilon$ it suffices to satisfy
\begin{equation*}
\begin{aligned}
    & W_1\!\Big(G(\cdot\mid x_m^c,D_m^c),\,P^\star(\cdot\mid x_m^c,D_m^c)\Big)\le \frac{\epsilon}{2L_h} \\
    & \text{and} \\
    & \epsilon_H(x_m^c,D_m^c)\le \frac{\epsilon}{2}.
\end{aligned}
\end{equation*}
Under the proxy sample-complexity law, learning $G$ to Wasserstein error $\epsilon/(2L_h)$ requires at most
\[
\mathcal{N}\!\left(L_g,\frac{\epsilon}{2L_h}\right)
=\mathcal{O}\!\Big(\big((L_gL_h)/\epsilon\big)^d\Big),
\]
and learning $H$ to conditional-prediction error $\epsilon/2$ requires at most
\[
\mathcal{N}\!\left(L_h,\frac{\epsilon}{2}\right)
=\mathcal{O}\!\Big(\big(L_h/\epsilon\big)^d\Big),
\]
where constant factors are absorbed into $\mathcal{O}(\cdot)$. Summing the two requirements yields
\[
\mathcal{N}_{Decoupled}
\approx
\mathcal{O}\Big(\big((L_gL_h)/\epsilon\big)^d\Big)+\mathcal{O}\Big(\big(L_h/\epsilon\big)^d\Big).
\]

For the coupled end-to-end predictor, apply the same proxy law to the $d$-dimensional target map $F(x_m^c,D_m^c)=P^\star(\cdot\mid x_m^c,D_m^c)$, and assume its effective complexity is upper bounded (up to constants) by
$L_h(1+L_g)+L_{\mathrm{cpl}}$.
Then achieving error at most $\epsilon$ requires
\begin{equation*}
\begin{aligned}
    & \mathcal{N}_{E2E}
\approx
    \mathcal{N}\!\big(L_h(1+L_g)+L_{\mathrm{cpl}},\epsilon\big) \\
    & =
    \mathcal{O}\Big(\big((L_h(1+L_g)+L_{\mathrm{cpl}})/\epsilon\big)^d\Big),
\end{aligned}
\end{equation*}
which is exactly the stated comparison.
\end{proof}


\begin{figure*}[!t]
  \centering
  \includegraphics[width=\textwidth]{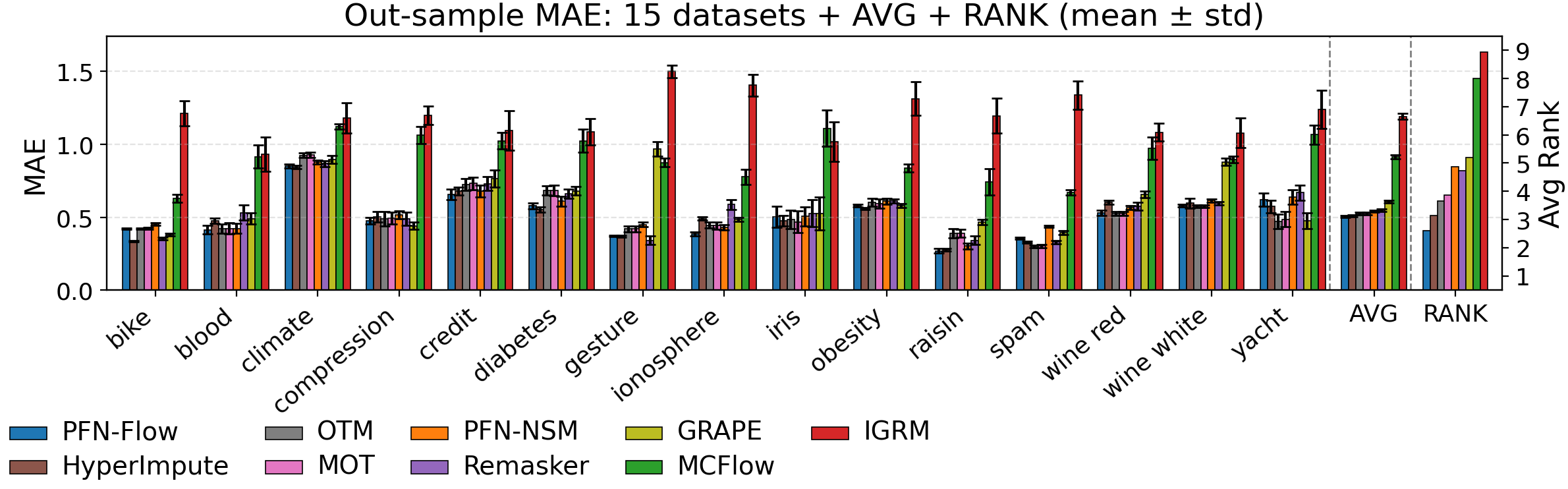}
  \caption{Imputation benchmark under MCAR missingness: out-of-sample MAE over 15 datasets, plus AVG and RANK (mean $\pm$ std over masks).}
  \label{fig:outsample-mae-mcar}
\end{figure*}

\section{Additional Experiments}\label{app:additional-experiments}
We report additional imputation experiments under the MCAR missingness mechanism.

Figure~\ref{fig:outsample-mae-mcar} summarizes out-of-sample MAE across the same 15 imputation benchmark datasets, together with the overall average (AVG) and the average rank (RANK).

We additionally compare (i) our method against boosting baselines, (ii) TabPFN/PFN-family variants, and (iii) top split baselines, in that order.

\subsection{Regression results under MCAR missingness}
Across the 15 datasets in Figure~\ref{fig:outsample-mae-mcar}, PFN-Flow is consistently competitive under MCAR missingness.
In particular, PFN-Flow tends to achieve either the best or near-best MAE on a majority of datasets, and remains in the top tier in terms of AVG and RANK, indicating both strong average performance and stable behavior across datasets and masks.
We also observe that some baselines can be highly dataset-dependent (large performance variance across datasets), whereas PFN-Flow exhibits comparatively uniform improvements.

\subsection{Boosting baselines}
Figure~\ref{fig:additional-vs-methods} compares PFN-Flow with three standard boosting baselines (XGBoost/CatBoost/LightGBM, trained on the same MCAR-imputed data).
Each scatter contains two panels corresponding to missing rate $\le 10\%$ and $>10\%$; for each dataset, the x-axis is PFN-Flow AUC and the y-axis is the baseline AUC.
Points below the diagonal therefore indicate that PFN-Flow outperforms the baseline on that dataset/group, while points above the diagonal indicate the opposite.
Overall, most points lie on or below the diagonal, suggesting that PFN-Flow remains competitive against strong non-neural baselines under MCAR, with only a small number of datasets where boosting can be slightly better.

\subsection{TabPFN variants and PFN-family baselines}
Figure~\ref{fig:additional-pfn-family} further compares PFN-Flow to TabPFN (Raw) and a PFN-family baseline (PFN-NSM).
We observe that the points are highly concentrated near the diagonal in both missingness regimes, indicating that PFN-Flow is broadly comparable to TabPFN/PFN-family methods on the same datasets under MCAR.
Notably, in the higher-missingness group ($>10\%$), the scatter remains close to parity with only a few outliers, suggesting that PFN-Flow does not rely on a narrow set of favorable datasets and maintains stable performance as missingness increases.

\subsection{Top split baselines}
Figure~\ref{fig:additional-top4-split} reports the strongest split baselines selected from our splitting-based design space (Top-1 to Top-4), each corresponding to a \emph{pipeline} of a predictive model (here, CatBoost) and a specific imputation module.
Within each subfigure, the two internal panels again correspond to missing rate $\le 10\%$ and $>10\%$.
For each dataset, the x-axis is PFN-Flow AUC and the y-axis is the baseline pipeline AUC; points below the diagonal indicate that PFN-Flow improves upon that strong split baseline.
Across all four top pipelines, the majority of points remain on or below the diagonal, showing that PFN-Flow is competitive even against the best-performing split configurations.
At the same time, the few points above the diagonal highlight that no single method universally dominates every dataset, motivating the need for robust imputation that generalizes across heterogeneous tabular domains.

\begin{figure*}[t]
  \centering
  \begin{subfigure}[t]{0.48\linewidth}
    \centering
    \includegraphics[width=\linewidth]{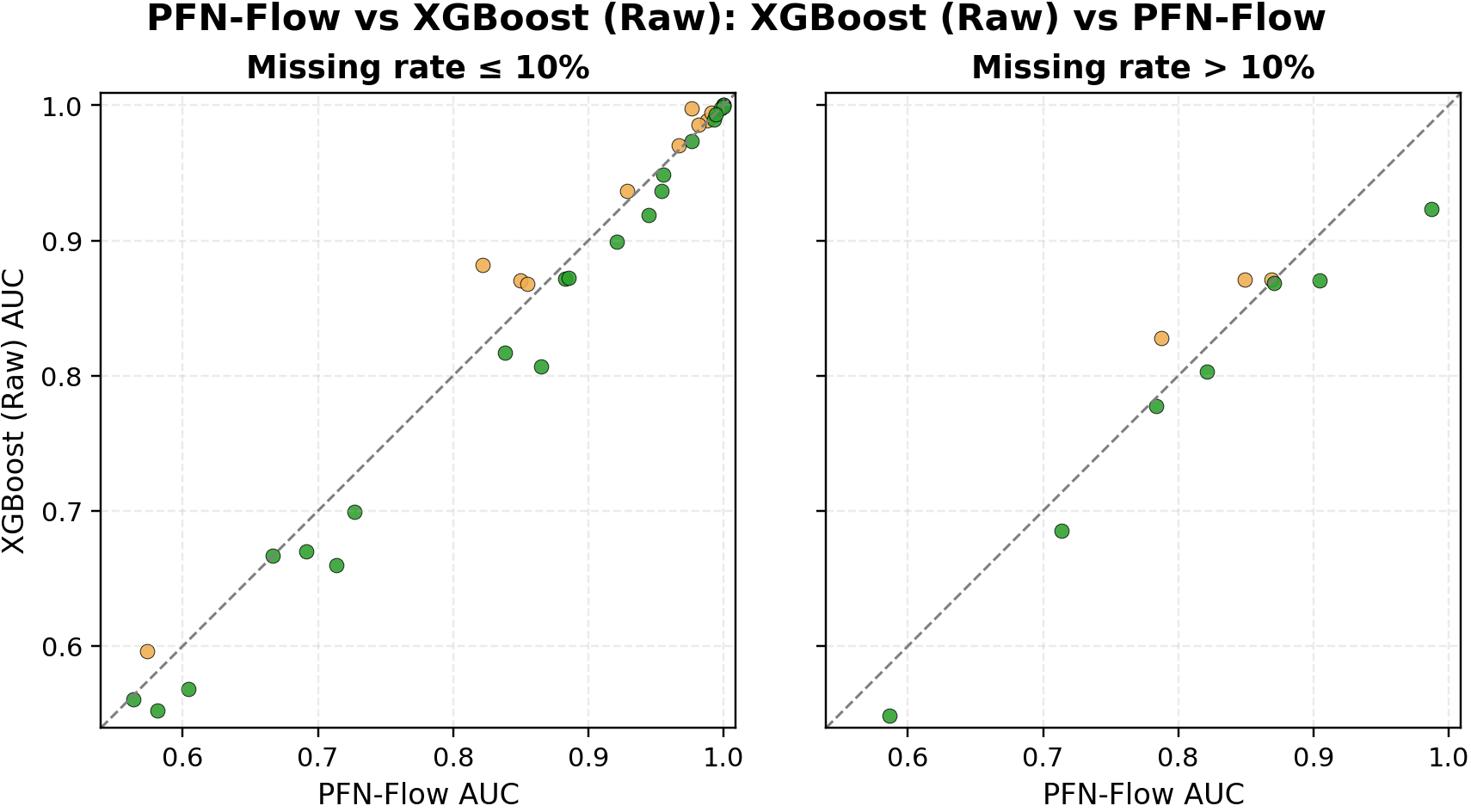}
    \caption{XGBoost (Raw) vs PFN-Flow (AUC).}
  \end{subfigure}
  \begin{subfigure}[t]{0.48\linewidth}
    \centering
    \includegraphics[width=\linewidth]{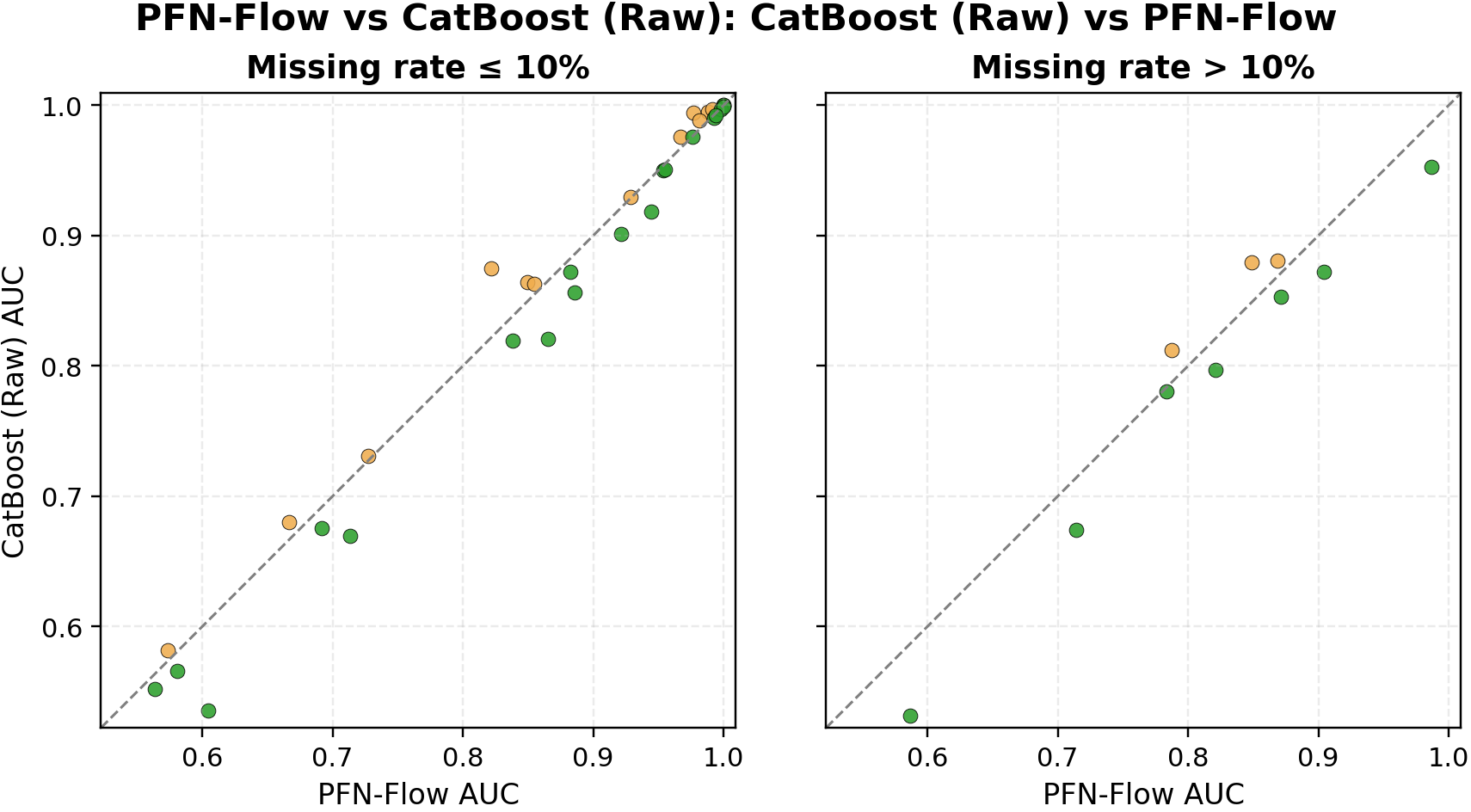}
    \caption{CatBoost (Raw) vs PFN-Flow (AUC).}
  \end{subfigure}

  \begin{subfigure}[t]{0.48\linewidth}
    \centering
    \includegraphics[width=\linewidth]{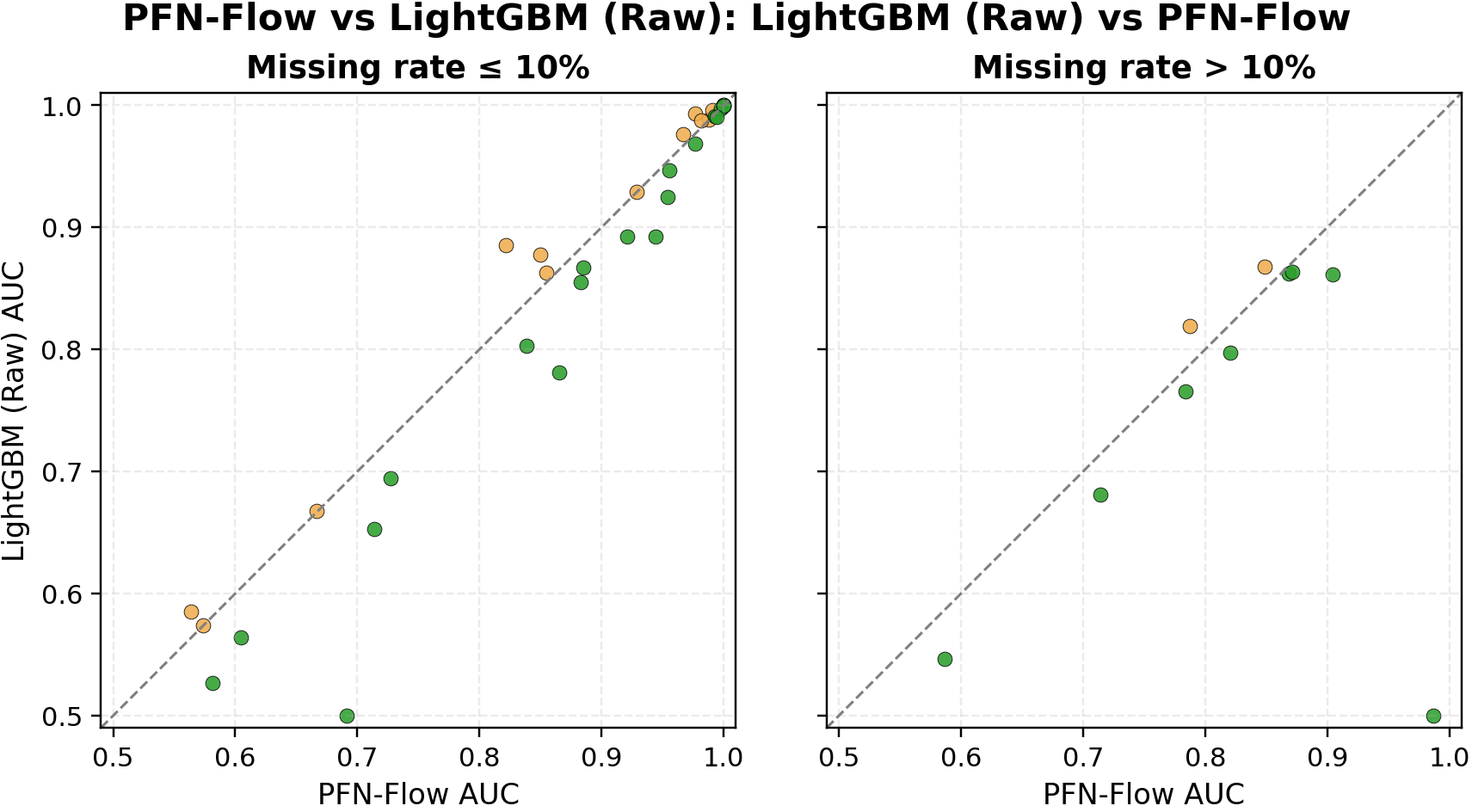}
    \caption{LightGBM (Raw) vs PFN-Flow (AUC).}
  \end{subfigure}
  \caption{Boosting baseline comparisons against PFN-Flow under MCAR. Each point corresponds to one dataset in one missingness group; the diagonal indicates parity.}
  \label{fig:additional-vs-methods}
\end{figure*}
\FloatBarrier

\begin{figure*}[t]
  \centering
  \begin{subfigure}[t]{0.48\linewidth}
    \centering
    \includegraphics[width=\linewidth]{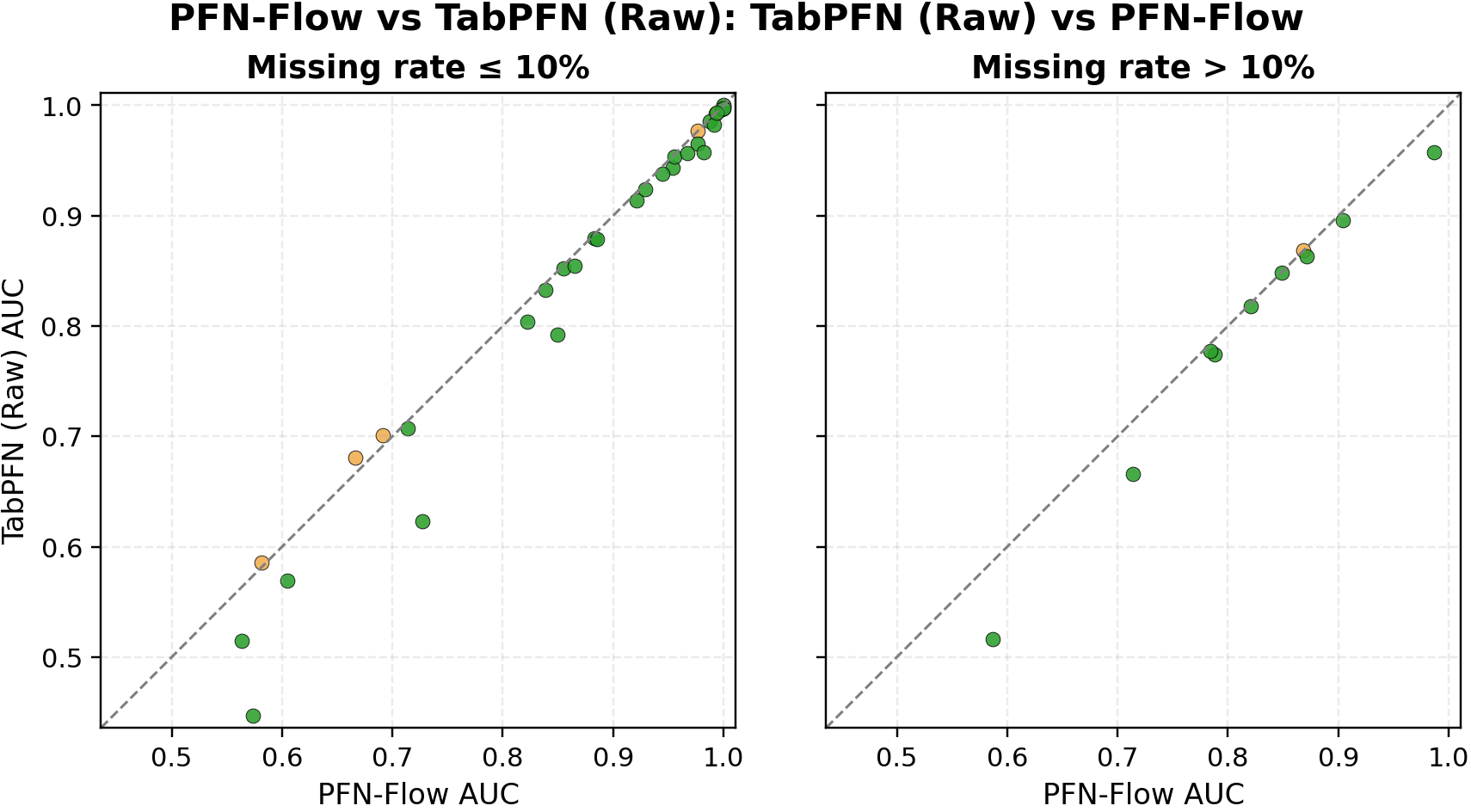}
    \caption{TabPFN (Raw) vs PFN-Flow (AUC).}
  \end{subfigure}
  \begin{subfigure}[t]{0.48\linewidth}
    \centering
    \includegraphics[width=\linewidth]{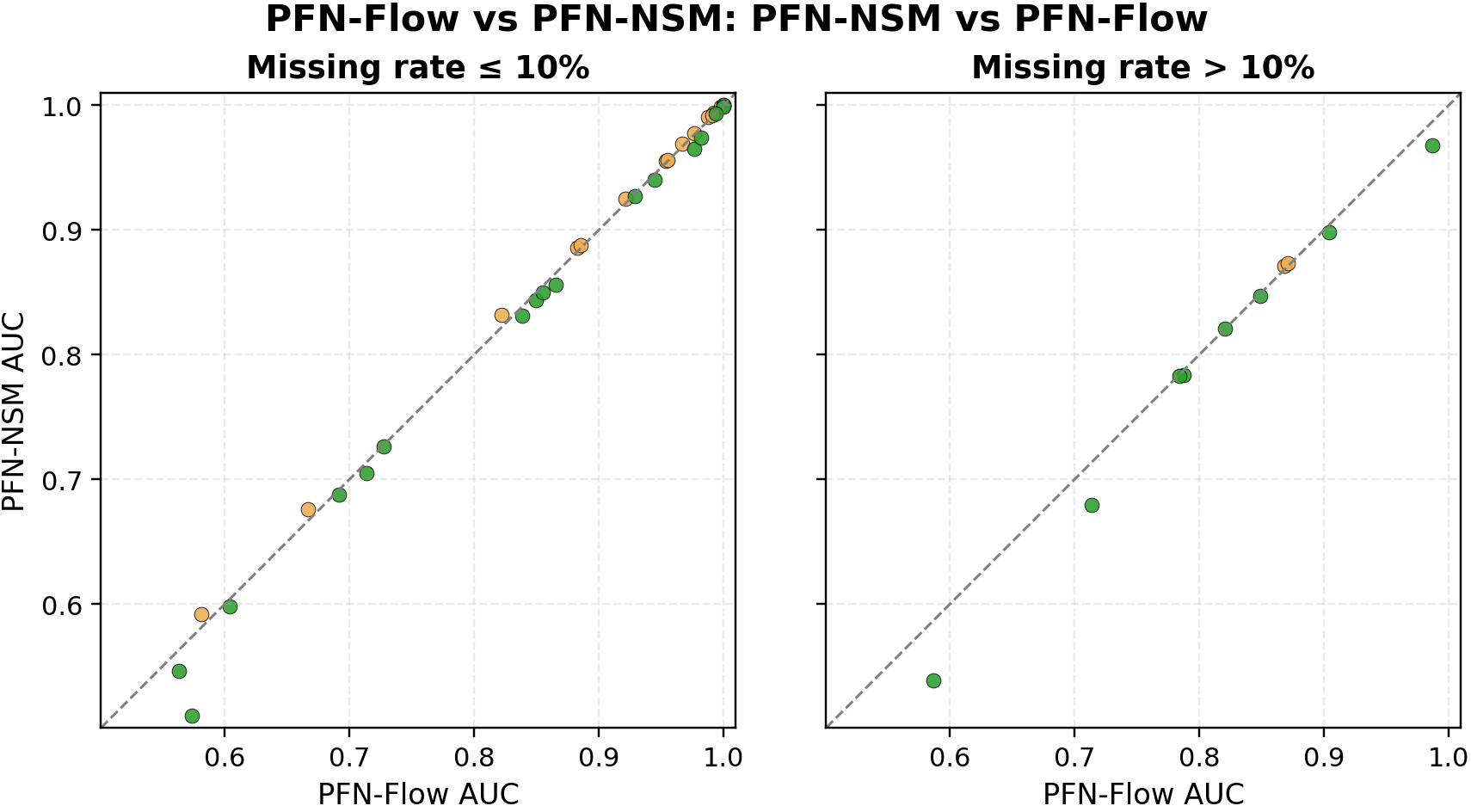}
    \caption{PFN-NSM vs PFN-Flow (AUC).}
  \end{subfigure}
  \caption{TabPFN/PFN-family comparisons against PFN-Flow under MCAR. Each point corresponds to one dataset in one missingness group; the diagonal indicates parity.}
  \label{fig:additional-pfn-family}
\end{figure*}

\begin{figure*}[t]
  \centering
  \begin{subfigure}[t]{0.54\linewidth}
    \centering
    \includegraphics[width=\linewidth]{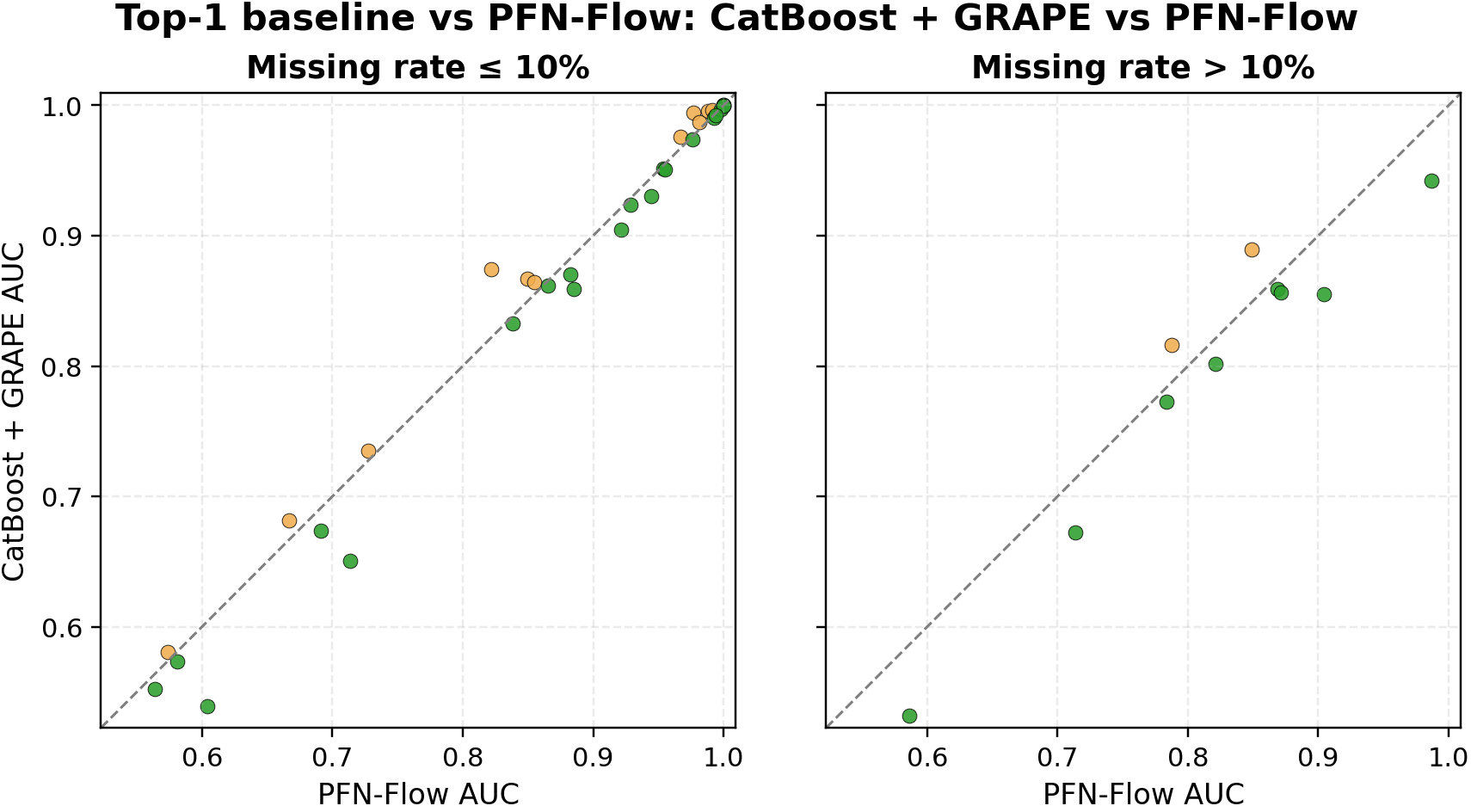}
    \caption{Top-1 split: CatBoost + GRAPE.}
  \end{subfigure}

  \begin{subfigure}[t]{0.54\linewidth}
    \centering
    \includegraphics[width=\linewidth]{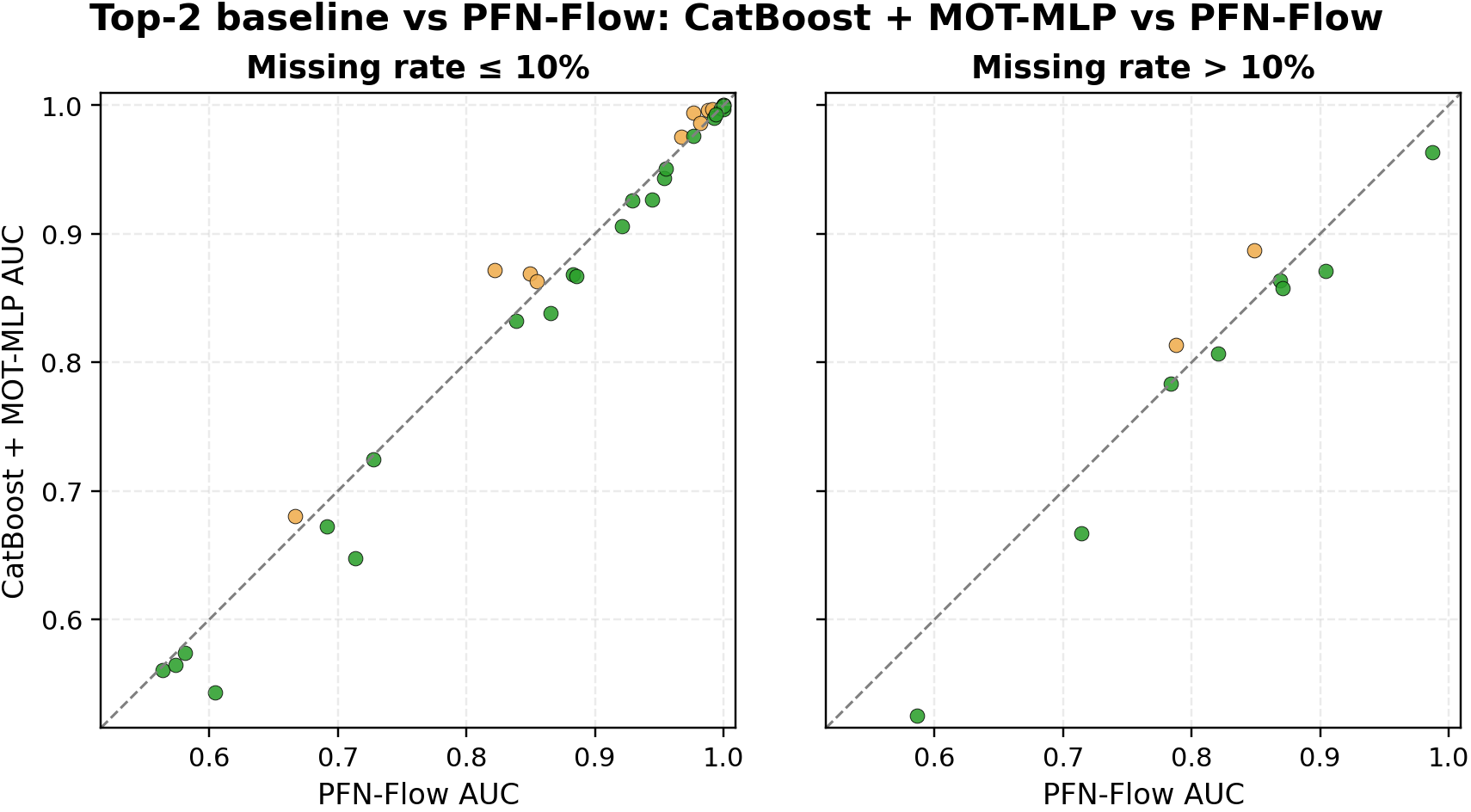}
    \caption{Top-2 split: CatBoost + MOT-MLP.}
  \end{subfigure}

  \begin{subfigure}[t]{0.54\linewidth}
    \centering
    \includegraphics[width=\linewidth]{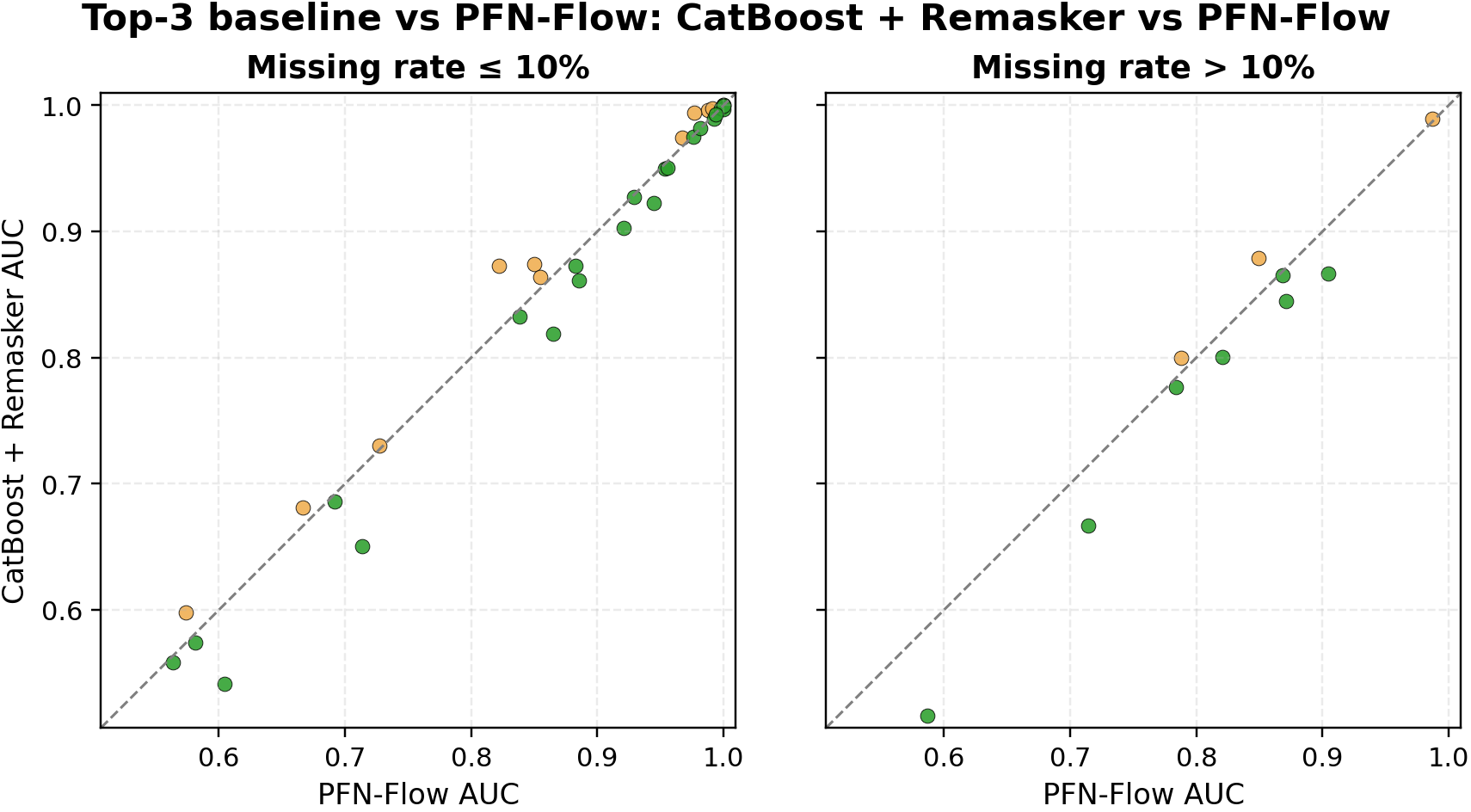}
    \caption{Top-3 split: CatBoost + Remasker.}
  \end{subfigure}

  \begin{subfigure}[t]{0.54\linewidth}
    \centering
    \includegraphics[width=\linewidth]{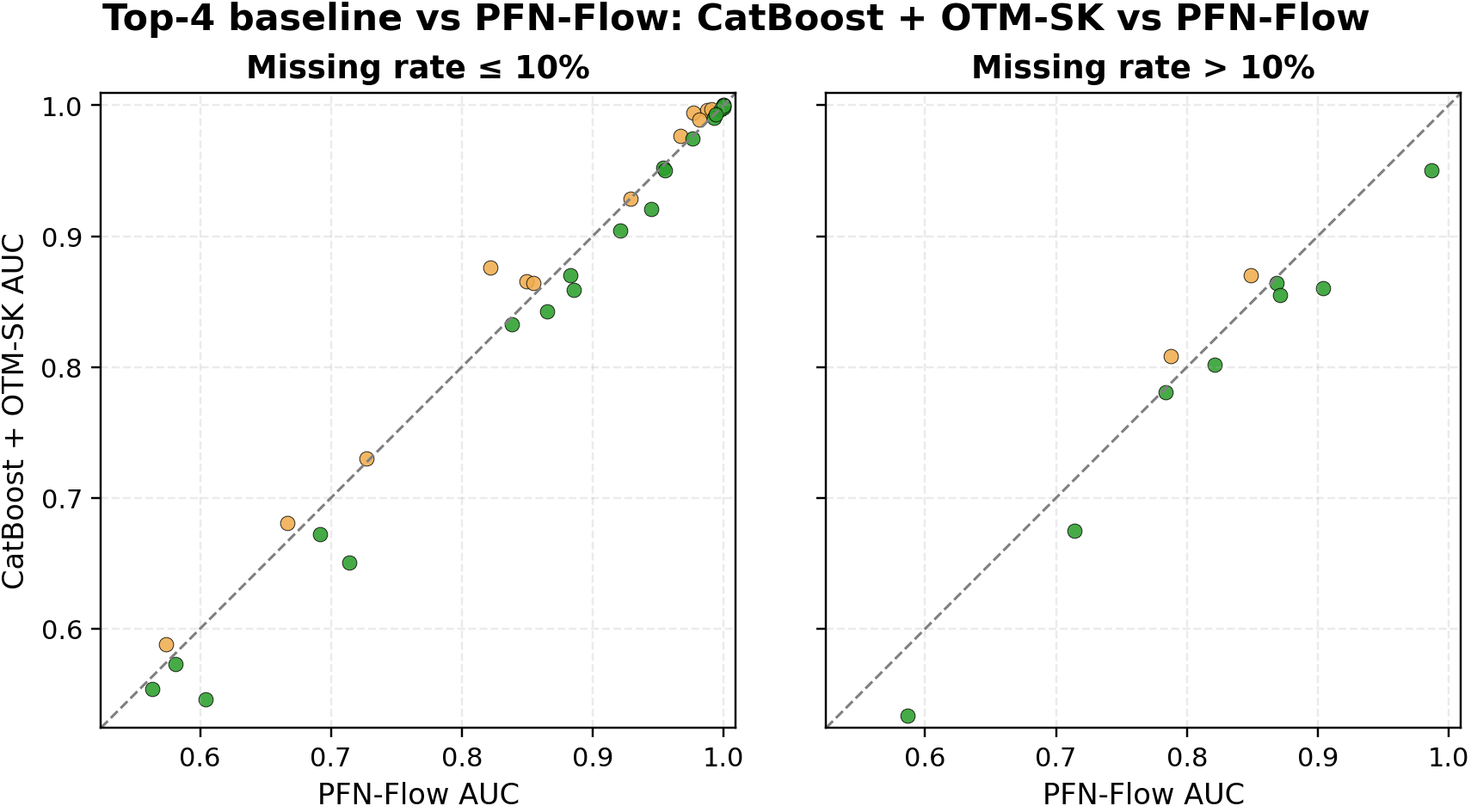}
    \caption{Top-4 split: CatBoost + OTM-SK.}
  \end{subfigure}

  \caption{Top split comparisons.}
  \label{fig:additional-top4-split}
\end{figure*}

\FloatBarrier

\end{document}